\documentclass{article}
\usepackage[utf8]{inputenc}
\usepackage[T1]{fontenc}
\usepackage{amsmath, amssymb, amsthm, mathrsfs}
\usepackage{mathtools}
\usepackage{geometry}
\usepackage{algorithm}
\usepackage{algpseudocode}
\usepackage{authblk}

\newtheorem{definition}{Definition}[section]
\newtheorem{theorem}{Theorem}[section]

\newtheorem{proposition}{Proposition}[section]
\newtheorem{claim}{Claim}
\newtheorem{remark}{Remark}
\newtheorem{lemma}{Lemma}

\newcommand{\Sstate}{\mathcal{S}}
\newcommand{\Action}{\mathcal{A}}
\newcommand{\R}{\mathbb{R}}
\newcommand{\Ptwo}{\mathcal{P}_2}
\newcommand{\Borel}{\mathscr{B}}

\title{Wasserstein Formulation of Reinforcement Learning:\\ \large An Optimal Transport Perspective on Policy Optimization}
\author{Mathias Dus\thanks{Institut de Recherche Mathématique Avancée, Université de Strasbourg (e-mail : \texttt{mathias.dus@math.unistra.fr}).}}
\date{\today}

\begin{document}

\maketitle

\begin{abstract}
 We present a geometric framework for Reinforcement Learning (RL) that views policies as maps into the Wasserstein space of action probabilities. First, we define a Riemannian structure induced by stationary distributions, providing rigorous guarantees for their existence in a general context. We then define the tangent space of policies and characterize the geodesics, specifically addressing the measurability of vector fields mapping from the state space to the tangent space of probability measures over the action space. Next, we formulate a general RL optimization problem and construct a gradient flow using Otto's calculus. We compute the gradient and the Hessian of the energy (i.e., the expected cumulative cost), providing a formal second-order analysis. Finally, we illustrate the method with numerical examples: we compute the gradient directly from our theoretical formalism for low-dimensional problems, and we illustrate the numerical applicability of our formal framework to high-dimensional continuous control by parameterizing the policy with a neural network optimized via an ergodic approximation of the cost.
\end{abstract}

\section{Introduction}
\label{sec:related_work}

At the heart of policy optimization lies a fundamental geometric question: how should we measure the distance between two policies in order to build a gradient flow that forces a monotonic improvement in the cost? Answering this question has naturally drawn reinforcement learning (RL) into a fruitful dialogue with information geometry and optimal transport. While existing works have begun exploring this intersection, they often remain limited to first-order approximations or specific parametric families. In this work, we generalize these approaches by developing a comprehensive, second-order Wasserstein gradient flow framework. 

\paragraph{Information Geometry and Trust Regions in RL.} 
Viewing policy optimization through a geometric lens has a long history in RL. Standard approaches often rely on the Fisher-Rao metric and the Kullback-Leibler (KL) divergence to define the geometry of the policy space. This perspective led to seminal algorithms such as Natural Policy Gradient (NPG) \cite{kakade2001natural} and Trust Region Policy Optimization (TRPO) \cite{schulman2015trust}, which restrict policy updates within a KL-based trust region to ensure monotonic improvement. While KL divergence is computationally convenient, it does not capture the underlying geometry of the action space, which can lead to inefficient updates when the action space possesses a natural metric structure (e.g., continuous control).

\paragraph{Wasserstein Distances in Policy Optimization.}
To address the limitations of the KL divergence, recent works have integrated Optimal Transport (OT) into RL. The Wasserstein distance explicitly incorporates the metric of the underlying space, providing a more meaningful notion of distance between distributions with disjoint supports. Early applications proposed replacing the KL divergence with Wasserstein-based trust regions or regularizers \cite{pacchiano2020wasserstein}. However, these approaches primarily use the Wasserstein distance as an external penalty rather than fully exploiting the differential geometry of the space of probability measures.

\paragraph{Policy Optimization as Wasserstein Gradient Flows.}
More recently, policy optimization has been explicitly formulated as a Wasserstein Gradient Flow (WGF) \cite{zhang2021policy} where the authors introduced a framework treating policy updates as gradient flows in the Wasserstein space, often relying on the Jordan-Kinderlehrer-Otto (JKO) scheme for numerical resolution. Similarly, \cite{vien2023wasserstein} applied WGF specifically to Gaussian Mixture Models (GMMs) by leveraging the Bures-Wasserstein geometry. Very recently, \cite{pfau2024wasserstein}  proposed Wasserstein Policy Optimization (WPO), deriving an actor-critic algorithm by approximating a continuous-time Wasserstein gradient flow over the space of all policies and projecting it onto neural network parameters.

\paragraph{Positioning and Contributions of Our Work.}
While previous works successfully apply WGF concepts to RL, they typically restrict their focus to specific parametric families, first-order approximations, or time-discretized schemes (like JKO). Here, our objective is not to design an algorithm aimed at achieving state-of-the-art empirical performance on standard RL benchmarks, but rather to construct a rigorous mathematical foundation for continuous Wasserstein policy optimization, focusing on four theoretical pillars:
\begin{enumerate}
\item Our framework relies on a reference measure that dynamically depends on the policy itself: the stationary distribution. To construct a non-degenerate Riemannian metric weighted by this distribution, one must first rigorously guarantee its existence and uniqueness. To bridge this gap for the optimal transport and applied mathematics communities, we provide a self-contained pedagogical review of classical ergodic theory adapted to Markov Decision Processes. By explicitly outlining the necessary structural assumptions—ranging from Wasserstein strict contractions to Weak/Strong Feller properties and Doeblin's condition—we clarify the exact mathematical prerequisites required to ensure that our subsequent Wasserstein geometric constructions and tangent vector fields are strictly well-posed.
	
    \item \textbf{Rigorous State-Conditional Geometry:} We formally define the Riemannian structure induced by stationary distributions and explicitly address the \textit{measurability} of vector fields mapping from the state space to the tangent space of probability measures over the action space. This provides a formal foundation often overlooked in the RL literature.
    
    \item \textbf{Second-Order Analysis via Otto's Calculus:} Unlike existing works that stop at the first-order gradient \cite{zhang2021policy, pfau2024wasserstein}, we utilize Otto's calculus \cite{otto2001geometry, ambrosio2008gradient} to compute both the gradient and the \textit{Hessian} of the energy. This second-order analysis reveals the underlying geometric structure of the objective function and explicitly characterizes the non-local curvature induced by the environment dynamics, providing a formal tool to study geodesic convexity.
    
    \item \textbf{Ergodic Approximation for Scalability:} While we provide exact gradient computations for low-dimensional cases based on our theoretical formalism, we bridge the gap to high-dimensional problems by parameterizing the policy with a neural network optimized via an \textit{ergodic approximation} of the cost, serving as a numerical proof-of-concept. These experiments are designed to illustrate that our exact theoretical gradients can be effectively evaluated, even in high-dimensional continuous control, rather than to establish empirical supremacy over standard RL solvers.
\end{enumerate}

The paper is organized as follows. Section \ref{sec:Proba_dyn_pol_space} rigorously defines the policy space within a metric space framework. We introduce the concept of a transition kernel induced by a policy and define its corresponding invariant measures. Additionally, we provide a self-contained overview of theorems from the theory of Markov chains in metric spaces to establish existence and uniqueness results for these invariant measures. In Section \ref{sec:wass_geom_optim}, we propose a Riemannian geometric interpretation of the policy space, utilizing the unique invariant measure to define the metric. Using Otto's calculus, we compute the gradient of the conventional long-term cost that is minimized in reinforcement learning. Section \ref{sec:Hessian} details the calculation of the Hessian for this energy and analyzes its geodesic convexity through a simplified case. Finally, Section \ref{sec_num_analysis} presents numerical examples of the method in low-dimensional environments and adapts the technique to high-dimensional settings using neural networks.

\section{Probabilistic Dynamics and Policy Space}\label{sec:Proba_dyn_pol_space}

We formalize the Reinforcement Learning (RL) problem as an optimization problem over the space of measurable functions mapping the state space to the Wasserstein space of action distributions.

\subsection{State and Action Spaces}

Let $(\Sstate, d_\Sstate)$ be a Polish space (complete separable metric space) representing the \textbf{state space}. Let $(\Action, d_\Action)$ be a Polish space representing the \textbf{action space}. We denote by $\Ptwo(\Action)$ the  space of probability measures on $\Action$, defined as:
\[
    \Ptwo(\Action) := \left\{ \nu \in \mathcal{P}(\Action) \;\middle|\; \int_{\Action} d_\Action(a_0, a)^2 \, \mathrm{d}\nu(a) < \infty \right\},
\]
where $a_0 \in \Action$ is an arbitrary reference point.

The space $\Ptwo(\Action)$ is equipped with the $2$-Wasserstein metric $W_2$, defined for any $\mu, \nu \in \Ptwo(\Action)$ by:
\[
    W_2^2(\mu, \nu) := \inf_{\gamma \in \Gamma(\mu, \nu)} \int_{\Action \times \Action} d_\Action(x, y)^2 \, \mathrm{d}\gamma(x, y),
\]
where $\Gamma(\mu, \nu)$ is the set of transport plans (couplings) between $\mu$ and $\nu$.

\subsection{The Policy Space}

In this geometric framework, we view a policy as a measurable map from the state space to the manifold of probability measures on actions.

\begin{definition}[Policy Space]
Let $\Borel(\Sstate)$ denote the Borel $\sigma$-algebra of $\Sstate$. The space of admissible policies, denoted by $\Pi$, is the space of Borel measurable functions from $\Sstate$ to $\Ptwo(\Action)$:
\[
    \Pi := L^0(\Sstate; \Ptwo(\Action)) = \{ \pi : \Sstate \to \Ptwo(\Action) \mid \pi \text{ is } (\Borel(\Sstate), \Borel(\Ptwo(\Action)))\text{-measurable} \}.
\]
\end{definition}

\subsection{Induced Dynamics and Ergodicity}\label{sec:induced_dyn_erg}

To define a gradient flow that accounts for the frequency of state visitation, we must first establish the existence of such a distribution.

\begin{definition}[Induced Transition Kernel]
Let $\mathcal{T}: \Sstate \times \Action \times \Borel(\Sstate) \to [0, 1]$ be the environment transition kernel. For any policy $\pi \in \Pi$, the \textbf{induced state-transition kernel} $P^\pi: \Sstate \times \Borel(\Sstate) \to [0, 1]$ is defined by integrating out the action distribution:
\[
    P^\pi(s, B) := \int_{\Action} \mathcal{T}(s, a, B) \, \mathrm{d}(\pi(s))(a), \quad \forall s \in \Sstate, \forall B \in \Borel(\Sstate).
\]
\end{definition}

\noindent \textbf{Dual Operator Formulation.}
We view the transition kernel $P^\pi$ as inducing two linear operators acting on dual Banach spaces:
\begin{enumerate}
    \item \textbf{Transition Operator (Action on Functions):} For any bounded measurable function $f \in L^\infty(\Sstate)$, $P^\pi$ acts by averaging over future states:
    \[
        (P^\pi f)(s) := \int_{\Sstate} f(s') \, P^\pi(s, \mathrm{d}s').
    \]
    \item \textbf{Transfer Operator (Action on Measures):} For any finite signed measure $\nu \in \mathcal{M}(\Sstate)$, the operator acts on the left (or via the adjoint $P^{\pi*}$) to evolve the distribution:
    \[
        (\nu P^\pi)(B) := \int_{\Sstate} P^\pi(s, B) \, \mathrm{d}\nu(s).
    \]
\end{enumerate}
These operators are adjoints with respect to the standard bilinear form. For any measure $\nu$ and function $f$, we denote the pairing $\langle \nu, f \rangle_{\Sstate} := \int_\Sstate f \, \mathrm{d}\nu$. The defining adjoint relation is:
\[
    \langle \nu P^\pi, f \rangle_{\Sstate} = \langle \nu, P^\pi f \rangle_{\Sstate}.
\]

A probability measure $\mu_\pi \in \mathcal{P}(\Sstate)$ is called an \textbf{invariant measure} (or stationary distribution) for $\pi$ if it satisfies the stationarity equation:
\[
    \mu_\pi(B) = \int_{\Sstate} P^\pi(s, B) \, \mathrm{d}\mu_\pi(s), \quad \forall B \in \Borel(\Sstate).
\]

To guarantee the well-posedness of $\mu_\pi$, usually we rely on contraction properties.

\begin{definition}[Lipschitz Continuity in Wasserstein Space]
We say that the environment dynamics $\mathcal{T}$ are $(L_\Sstate, L_\Action)$-Lipschitz if for all $s, s' \in \Sstate$ and $a, a' \in \Action$:
\[
    W_2(\mathcal{T}(s, a, \cdot), \mathcal{T}(s', a', \cdot)) \leq L_\Sstate d_\Sstate(s, s') + L_\Action d_\Action(a, a').
\]
Similarly, a policy $\pi \in \Pi$ is $K_\pi$-Lipschitz if:
\[
    W_2(\pi(s), \pi(s')) \leq K_\pi d_\Sstate(s, s').
\]
\end{definition}
The existence and uniqueness of a stationary measure in the context of Lipschitz Markov Decision Processes (MDPs) rely on standard contraction arguments. Similar properties for the transition kernel have been extensively studied in the literature, notably by \cite{pirotta2015policy, asadi2018lipschitz}, often relying on the 1-Wasserstein distance ($W_1$). In Theorem \ref{th:existence_invariant}, we adapt this result to the $W_2$ distance and provide a detailed, self-contained proof of the strict contraction, a crucial step for constructing our gradient flow.

\begin{theorem}\label{th:existence_invariant}
Assume that the environment dynamics and the policy satisfy the Lipschitz conditions defined above. If the combined contraction coefficient satisfies:
\[
    \kappa := L_\Sstate + L_\Action K_\pi < 1,
\]
then the induced transition operator $\Phi_\pi: \nu \mapsto \nu P^\pi$ is a strict contraction on the complete metric space $(\mathcal{P}_2(\Sstate), W_2)$.
Consequently, there exists a unique invariant measure $\mu_\pi \in \mathcal{P}_2(\Sstate)$, and for any initial distribution $\mu_0$, the sequence converges exponentially fast:
\[
    W_2(\mu_0 (P^\pi)^t, \mu_\pi) \leq \frac{\kappa^t}{1-\kappa} W_2(\mu_0 (P^\pi), \mu_0).
\]
\end{theorem}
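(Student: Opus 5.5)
The plan is to prove that $\Phi_\pi$ maps $\mathcal{P}_2(\Sstate)$ into itself and satisfies $W_2(\nu P^\pi, \nu' P^\pi) \le \kappa\, W_2(\nu,\nu')$. The Banach fixed point theorem on the complete space $(\mathcal{P}_2(\Sstate),W_2)$ then gives the remaining claims; completeness of this space for Polish $\Sstate$ is a standard fact we take as known.

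\textbf{Step 1: one-step pointwise bound.} We first show that for all $s,s'\in\Sstate$,
\[
W_2(P^\pi(s,\cdot),P^\pi(s',\cdot)) \le \kappa\, d_\Sstate(s,s').
\]
Let $\gamma_{ss'}$ be an optimal coupling of $\pi(s)$ and $\pi(s')$. Since $P^\pi(s,\cdot)=\int \mathcal{T}(s,a,\cdot)\,d\pi(s)(a)$ is a mixture, we have
\[
P^\pi(s,\cdot)=\int \mathcal{T}(s,a,\cdot)\,d\gamma_{ss'}(a,a'),
\]
and similarly for $s'$, with both mixtures taken over the same mixing measure $\gamma_{ss'}$. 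By joint convexity of $W_2^2$ under mixtures (glue optimal couplings of $\mathcal{T}(s,a,\cdot)$ and $\mathcal{T}(s',a',\cdot)$), this gives
\[
W_2^2(P^\pi(s,\cdot),P^\pi(s',\cdot))\le \int \bigl(L_\Sstate d_\Sstate(s,s')+L_\Action d_\Action(a,a')\bigr)^2 d\gamma_{ss'}.
\]
Minkowski's inequality in $L^2(\gamma_{ss'})$ bounds the square root of the right-hand side by
\[
L_\Sstate d_\Sstate(s,s')+L_\Action W_2(\pi(s),\pi(s')) \le \kappa\, d_\Sstate(s,s').
\]

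\textbf{Step 2: lift to measures.} Let $\Gamma$ be an optimal coupling of $\nu,\nu'\in\mathcal{P}_2(\Sstate)$. Writing $\nu P^\pi=\int P^\pi(s,\cdot)\,d\Gamma(s,s')$, and likewise $\nu' P^\pi$ as a mixture of $P^\pi(s',\cdot)$ over the same $\Gamma$, the same mixture-convexity argument gives
\[
W_2^2(\nu P^\pi,\nu' P^\pi)\le \int W_2^2(P^\pi(s,\cdot),P^\pi(s',\cdot))\,d\Gamma \le \kappa^2 W_2^2(\nu,\nu').
\]
That $\Phi_\pi$ preserves $\mathcal{P}_2$ follows from the same estimate applied to $\nu'=\delta_{s_0}$: the second moment of $\nu P^\pi$ is bounded by that of $P^\pi(s_0,\cdot)$ plus $\kappa^2\int d_\Sstate(s_0,s)^2\,d\nu$. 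This requires $P^\pi(s_0,\cdot)\in\mathcal{P}_2$, which we take as implicit in the assumptions, since the Lipschitz definition presupposes that the $W_2$ distances involved are finite.

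\textbf{Step 3: Banach.} Banach's fixed point theorem now yields a unique fixed point $\mu_\pi$. The standard a priori estimate bounds the distance from the iterates to the fixed point by the geometric series
\[
\sum_{k\ge t}\kappa^k\, W_2(\mu_0P^\pi,\mu_0)=\frac{\kappa^t}{1-\kappa}\,W_2(\mu_0P^\pi,\mu_0),
\]
which is the stated estimate. We should note that invariance of $\mu_\pi$ is exactly the stationarity equation, and that uniqueness is understood within $\mathcal{P}_2(\Sstate)$.

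\textbf{Main obstacle.} The delicate point is measurability in the mixture-convexity step. We need an optimal coupling $\gamma_{ss'}$, and an optimal coupling of $\mathcal{T}(s,a,\cdot)$ and $\mathcal{T}(s',a',\cdot)$, selected measurably in the parameters so that the glued measure is well defined. I would invoke the measurable selection of optimal plans (Villani, Cor.~5.22), using that $\pi$, $\mathcal{T}$ and hence $P^\pi$ are measurable kernels. An alternative that avoids selection is to prove convexity via Kantorovich duality. For that, take Lipschitz bounded potentials $\varphi,\psi$ with $\varphi(x)+\psi(y)\le d(x,y)^2$. Then
\[
\int\varphi\,d(\nu P^\pi)+\int\psi\,d(\nu'P^\pi)=\int\Bigl(\int\varphi\,dP^\pi(s,\cdot)+\int\psi\,dP^\pi(s',\cdot)\Bigr)d\Gamma\le\int W_2^2(P^\pi(s,\cdot),P^\pi(s',\cdot))\,d\Gamma.
\]
Taking the supremum over such pairs gives the Step 2 bound, and the same argument handles the mixture over $\gamma_{ss'}$ in Step 1. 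I would use this duality route to keep the proof self-contained.
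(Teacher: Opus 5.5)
Your proposal is correct and follows the paper's proof. It uses the same two-level mixture bound (first over an optimal coupling $\gamma_{s,s'}$ of $\pi(s)$ and $\pi(s')$ with the Lipschitz bound and Minkowski in $L^2(\gamma_{s,s'})$, then over an optimal coupling of the two state distributions), followed by Banach's fixed-point theorem and the geometric-series estimate. What you add tightens rigor without changing the route: you check that $\Phi_\pi$ maps $\mathcal{P}_2(\Sstate)$ into itself, and you use Kantorovich duality to avoid the measurable selection of optimal plans that the paper's explicit gluing of $\Pi_{(s,a),(s',a')}$ and $\Pi_{s,s'}$ tacitly requires. For the $\mathcal{P}_2$ check, state it as $W_2(\nu P^\pi,\delta_{s_0})\le W_2(P^\pi(s_0,\cdot),\delta_{s_0})+\kappa\,W_2(\nu,\delta_{s_0})$, since your ``moment plus $\kappa^2$ moment'' bound drops the cross term, though only finiteness matters.
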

\begin{proof}
    Let $\nu_1, \nu_2 \in \mathcal{P}_2(\Sstate)$. We aim to bound the distance $W_2(\nu_1 P^\pi, \nu_2 P^\pi)$.
    Let $\xi \in \Gamma_o(\nu_1, \nu_2)$ be an optimal coupling between the initial state distributions, such that:
    \[
        \int_{\Sstate \times \Sstate} d_\Sstate(s, s')^2 \, \mathrm{d}\xi(s, s') = W_2^2(\nu_1, \nu_2).
    \]
    
    \textbf{Step 1: Pointwise bound on transition kernels.}
    We need to prove the following claim:
\begin{claim}
Let $s, s' \in \mathcal{S}$ be two states. Then, the following inequality holds:
\[
    W_2^2(P^\pi(s, \cdot), P^\pi(s', \cdot)) \leq \inf_{\gamma \in \Gamma(\pi(s), \pi(s'))} \int_{\mathcal{A} \times \mathcal{A}} W_2^2(\mathcal{T}(s, a, \cdot), \mathcal{T}(s', a', \cdot)) \, \mathrm{d}\gamma(a, a').
\]
\end{claim}

\begin{proof}
 To prove the upper bound on $W_2^2(P^\pi(s, \cdot), P^\pi(s', \cdot))$, we explicitly construct a specific coupling (transport plan) between $P^\pi(s, \cdot)$ and $P^\pi(s', \cdot)$ and evaluate its cost.

\paragraph{Coupling of Policies.}
Let $\gamma \in \Gamma(\pi(s), \pi(s'))$ be an arbitrary coupling between the policy distributions at states $s$ and $s'$. This coupling satisfies marginal constraints:
\[
    \int_{\mathcal{A}} \mathrm{d}\gamma(a, a') = \mathrm{d}\pi(s')(a') \quad \text{and} \quad \int_{\mathcal{A}'} \mathrm{d}\gamma(a, a') = \mathrm{d}\pi(s)(a).
\]

\paragraph{Coupling of Dynamics.}
For every pair of actions $(a, a') \in \mathcal{A} \times \mathcal{A}$, let $\Pi_{(s,a), (s',a')}$ denote an optimal transport plan between the transition dynamics $\mathcal{T}(s, a, \cdot)$ and $\mathcal{T}(s', a', \cdot)$. By definition, the cost of this specific plan yields the Wasserstein distance between the transitions:
\[
    \int_{\mathcal{S} \times \mathcal{S}} d_{\Sstate} (x,y)^2 \, \mathrm{d}\Pi_{(s,a), (s',a')}(x, y) = W_2^2(\mathcal{T}(s, a, \cdot), \mathcal{T}(s', a', \cdot)).
\]

\paragraph{Construction of the Mixture Coupling.}
We define a global candidate coupling $\mathcal{K}$ on $\mathcal{S} \times \mathcal{S}$ by integrating the coupling $\Pi$ with the policy coupling $\gamma$:
\[
    \mathcal{K}(\mathrm{d}x, \mathrm{d}y) = \int_{\mathcal{A} \times \mathcal{A}} \Pi_{(s,a), (s',a')}(\mathrm{d}x, \mathrm{d}y) \, \mathrm{d}\gamma(a, a').
\]
First, we verify that $\mathcal{K}$ is a valid coupling between $P^\pi(s, \cdot)$ and $P^\pi(s', \cdot)$. Checking the first marginal:
\begin{align*}
    \mathcal{K}(\mathrm{d}x, \mathcal{S}) &= \int_{\mathcal{A} \times \mathcal{A}} \Pi_{(s,a), (s',a')}(\mathrm{d}x, \mathcal{S}) \, \mathrm{d}\gamma(a, a') \\
    &= \int_{\mathcal{A} \times \mathcal{A}} \mathcal{T}(s, a, \mathrm{d}x) \, \mathrm{d}\gamma(a, a') \quad (\text{since } \Pi \text{ has marginal } \mathcal{T}) \\
    &= \int_{\mathcal{A}} \mathcal{T}(s, a, \mathrm{d}x) \left( \int_{\mathcal{A}} \mathrm{d}\gamma(a, a') \right) \\
    &= \int_{\mathcal{A}} \mathcal{T}(s, a, \mathrm{d}x) \, \mathrm{d}\pi(s)(a) \\
    &= P^\pi(s, \mathrm{d}x).
\end{align*}
A symmetric argument shows the second marginal is $P^\pi(s', \cdot)$. Thus, $\mathcal{K} \in \Gamma(P^\pi(s, \cdot), P^\pi(s', \cdot))$.

\paragraph{Cost Evaluation.}
The transport cost associated with the candidate coupling $\mathcal{K}$ is:
\begin{align*}
   \int_{\mathcal{S} \times \mathcal{S}}  d_{\Sstate} (x,y)^2  \, \mathrm{d}\mathcal{K}(x, y)
    &= \int_{\mathcal{S} \times \mathcal{S}}  d_{\Sstate} (x,y)^2  \left( \int_{\mathcal{A} \times \mathcal{A}} \mathrm{d}\Pi_{(s,a), (s',a')}(x, y) \, \mathrm{d}\gamma(a, a') \right)\\
    &= \int_{\mathcal{A} \times \mathcal{A}} \left( \int_{\mathcal{S} \times \mathcal{S}} d_{\Sstate} (x,y)^2  \, \mathrm{d}\Pi_{(s,a), (s',a')}(x, y) \right) \mathrm{d}\gamma(a, a') \\
    &= \int_{\mathcal{A} \times \mathcal{A}} W_2^2(\mathcal{T}(s, a, \cdot), \mathcal{T}(s', a', \cdot)) \, \mathrm{d}\gamma(a, a').
\end{align*}

\paragraph{Conclusion.}
The Wasserstein distance $W_2^2(\mu, \nu)$ is defined as the infimum of the cost over \textit{all} possible couplings. Since $\mathcal{K}$ is one such admissible coupling, we have:
\[
    W_2^2(P^\pi(s, \cdot), P^\pi(s', \cdot)) \leq \int_{\mathcal{S} \times \mathcal{S}}  d_{\Sstate} (x,y)^2  \, \mathrm{d}\mathcal{K}(x, y).
\]
Substituting the cost calculated above, and noting that the inequality holds for any arbitrary $\gamma$, we take the infimum over $\gamma$ to obtain the tightest bound:
\[
    W_2^2(P^\pi(s, \cdot), P^\pi(s', \cdot)) \leq \inf_{\gamma \in \Gamma(\pi(s), \pi(s'))} \int_{\mathcal{A} \times \mathcal{A}} W_2^2(\mathcal{T}(s, a, \cdot), \mathcal{T}(s', a', \cdot)) \, \mathrm{d}\gamma(a, a').
\]
\end{proof}

    Let $\gamma_{s,s'}$ be the optimal coupling between the action distributions $\pi(s)$ and $\pi(s')$. Substituting the Lipschitz condition of the environment $\mathcal{T}$:
    \[
        W_2(\mathcal{T}(s, a, \cdot), \mathcal{T}(s', a', \cdot)) \leq L_\Sstate d_\Sstate(s, s') + L_\Action d_\Action(a, a').
    \]
    Injecting this into the integral and applying the Minkowski inequality (triangle inequality for the $L^2(\gamma_{s,s'})$ norm):
    \begin{align*}
        W_2(P^\pi(s), P^\pi(s')) &\leq \left( \int_{\Action \times \Action} (L_\Sstate d_\Sstate(s, s') + L_\Action d_\Action(a, a'))^2 \, \mathrm{d}\gamma_{s,s'}(a, a') \right)^{1/2} \\
        &\leq \left( \int L_\Sstate^2 d_\Sstate(s, s')^2 \mathrm{d}\gamma_{s,s'} \right)^{1/2} + \left( \int L_\Action^2 d_\Action(a, a')^2 \, \mathrm{d}\gamma_{s,s'} \right)^{1/2} \\
        &= L_\Sstate d_\Sstate(s, s') + L_\Action W_2(\pi(s), \pi(s')).
    \end{align*}

    \textbf{Step 2: Incorporating Policy Lipschitz continuity.}
    Using the assumption that the policy $\pi$ is $K_\pi$-Lipschitz ($W_2(\pi(s), \pi(s')) \leq K_\pi d_\Sstate(s, s')$):
    \[
        W_2(P^\pi(s), P^\pi(s')) \leq (L_\Sstate + L_\Action K_\pi) d_\Sstate(s, s') = \kappa d_\Sstate(s, s').
    \]

\textbf{Step 3: Global Contraction via Coupling Gluing.}
    We now bound the distance between the push-forward measures $\nu_1 P^\pi$ and $\nu_2 P^\pi$. To do so, we construct a global candidate coupling by integrating optimal local transition couplings with respect to the optimal initial state coupling $\xi$.
    
    For every pair of states $(s, s') \in \Sstate \times \Sstate$, let $\Pi_{s,s'}$ be the optimal coupling between the transition probabilities $P^\pi(s, \cdot)$ and $P^\pi(s', \cdot)$. By definition, its cost yields the squared Wasserstein distance:
    \[
        \int_{\Sstate \times \Sstate} d_\Sstate(x, y)^2 \, \mathrm{d}\Pi_{s,s'}(x, y) = W_2^2(P^\pi(s, \cdot), P^\pi(s', \cdot)).
    \]
    
    We define a global candidate coupling $\mathcal{M}$ on $\Sstate \times \Sstate$ as the continuous mixture:
    \[
        \mathcal{M}(\mathrm{d}x, \mathrm{d}y) = \int_{\Sstate \times \Sstate} \Pi_{s,s'}(\mathrm{d}x, \mathrm{d}y) \, \mathrm{d}\xi(s, s').
    \]
    
    First, we verify that $\mathcal{M} \in \Gamma(\nu_1 P^\pi, \nu_2 P^\pi)$. For any measurable set $A \subset \Sstate$, the first marginal is:
    \begin{align*}
        \mathcal{M}(A \times \Sstate) &= \int_{\Sstate \times \Sstate} \Pi_{s,s'}(A \times \Sstate) \, \mathrm{d}\xi(s, s') \\
        &= \int_{\Sstate \times \Sstate} P^\pi(s, A) \, \mathrm{d}\xi(s, s') \quad (\text{since the first marginal of } \Pi_{s,s'} \text{ is } P^\pi(s, \cdot)) \\
&= \int_{\Sstate} P^\pi(s, A) \, \mathrm{d}\nu_1(s) = (\nu_1 P^\pi)(A).
	\end{align*}   
    By symmetry, the second marginal is $\nu_2 P^\pi(A)$. Thus, $\mathcal{M}$ is a valid coupling. Using Fubini's theorem to exchange the integrals:
    \begin{align*}
        W_2^2(\nu_1 P^\pi, \nu_2 P^\pi) &\leq \int_{\Sstate \times \Sstate} d_\Sstate(x, y)^2 \, \mathrm{d}\mathcal{M}(x, y) \\
        &= \int_{\Sstate \times \Sstate} \left[ \int_{\Sstate \times \Sstate} d_\Sstate(x, y)^2 \, \mathrm{d}\Pi_{s,s'}(x, y) \right] \mathrm{d}\xi(s, s') \\
        &= \int_{\Sstate \times \Sstate} W_2^2(P^\pi(s, \cdot), P^\pi(s', \cdot)) \, \mathrm{d}\xi(s, s').
    \end{align*}

    Substituting the pointwise bound established in Step 2 ($W_2(P^\pi(s), P^\pi(s')) \leq \kappa d_\Sstate(s, s')$):
    \begin{align*}
        W_2^2(\nu_1 P^\pi, \nu_2 P^\pi) &\leq \int_{\Sstate \times \Sstate} \kappa^2 d_\Sstate(s, s')^2 \, \mathrm{d}\xi(s, s') \\
        &= \kappa^2 W_2^2(\nu_1, \nu_2).
    \end{align*}
    Taking the square root yields $W_2(\nu_1 P^\pi, \nu_2 P^\pi) \leq \kappa W_2(\nu_1, \nu_2)$.
    
    Since $\kappa < 1$, the operator $\Phi_\pi$ is a strict contraction on the complete metric space $(\mathcal{P}_2(\Sstate), W_2)$. By the Banach Fixed Point Theorem, there exists a unique invariant measure $\mu_\pi$, and for any initial distribution $\mu_0$, the sequence of distributions converges exponentially fast to $\mu_\pi$.
\newline

\end{proof}

Here, we provide another condition that does not require the contraction property but relies on some regularity of the environment and the policy. When the contraction property does not hold, the existence of an invariant measure can alternatively be guaranteed through topological arguments. Relying on the Weak Feller property and the compactness of the state space, this approach is a classical result in the theory of Markov chains, closely related to the Krylov-Bogoliubov theorem, as extensively detailed by \cite{meyn2009markov,hernandez2003markov}. In Theorem \ref{th:existence_invariant_compactness}, we recall this fundamental result and its fixed-point proof for completeness.

\begin{theorem} \cite[Theorem 12.0.1]{meyn2009markov}\label{th:existence_invariant_compactness}
Let the state space $\Sstate$ be a compact metric space. Assume that the environment dynamics and the policy are continuous such that the induced transition kernel $P^\pi$ satisfies the \textbf{Weak Feller property} (i.e., $s \mapsto \int_\Sstate P^\pi(s, ds') f(s')$ is continuous for every $f\in C_b(\Sstate)$).

Under these assumptions, there exists at least one invariant probability measure $\mu_\pi \in \mathcal{P}(\Sstate)$ such that:
\[
    \mu_\pi P^\pi = \mu_\pi.
\]
\end{theorem}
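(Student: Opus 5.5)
We prove existence by the classical Krylov--Bogoliubov averaging argument. Alternatively, one can use a Schauder--Tychonoff fixed point argument on $\mathcal{P}(\Sstate)$; the paper announces a ``fixed-point proof''. Both routes rest on the same two ingredients. The first is weak compactness of $\mathcal{P}(\Sstate)$ when $\Sstate$ is compact. The second is weak continuity of the transfer operator $\Phi_\pi:\nu\mapsto \nu P^\pi$, which follows from the Weak Feller property.

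\textbf{Step 1 (compactness).} Since $\Sstate$ is a compact metric space, $C(\Sstate)=C_b(\Sstate)$ is separable. By the Riesz representation theorem, $\mathcal{P}(\Sstate)$ is a convex, weak-$*$ closed subset of the unit ball of $C(\Sstate)^*$. By Banach--Alaoglu, together with metrizability of the weak topology on $\mathcal{P}(\Sstate)$ (e.g.\ by the Lévy--Prokhorov or bounded-Lipschitz metric), $\mathcal{P}(\Sstate)$ is sequentially compact. Equivalently, Prokhorov's theorem applies, since every family of measures on a compact space is tight.

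\textbf{Step 2 (weak continuity of $\Phi_\pi$).} Take $\nu_n\rightharpoonup\nu$ and $f\in C_b(\Sstate)$. The adjoint relation gives $\langle \nu_n P^\pi,f\rangle=\langle \nu_n,P^\pi f\rangle$. By the Weak Feller property, $P^\pi f$ is continuous, and it is bounded by $\|f\|_\infty$. Hence $\langle \nu_n,P^\pi f\rangle\to\langle\nu,P^\pi f\rangle=\langle \nu P^\pi,f\rangle$, so $\nu_nP^\pi\rightharpoonup\nu P^\pi$. Note that the measurability of $\pi$ ensures $P^\pi$ is a genuine Markov kernel. This is the only point where the continuity hypotheses on $\mathcal{T}$ and $\pi$ enter, through the assumed Feller property.

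\textbf{Step 3 (Cesàro averages).} Fix any $\mu_0\in\mathcal{P}(\Sstate)$, for instance a Dirac mass, and define $\bar\mu_n:=\frac1n\sum_{t=0}^{n-1}\mu_0(P^\pi)^t$. By Step 1, some subsequence $\bar\mu_{n_k}$ converges weakly to a limit $\mu_\pi\in\mathcal{P}(\Sstate)$. A telescoping computation gives
\[
\bar\mu_n P^\pi-\bar\mu_n=\tfrac1n\big(\mu_0(P^\pi)^n-\mu_0\big),
\]
so $|\langle \bar\mu_nP^\pi-\bar\mu_n,f\rangle|\le 2\|f\|_\infty/n\to0$. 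Passing to the limit along $n_k$ and using Step 2 yields $\langle\mu_\pi P^\pi,f\rangle=\langle\mu_\pi,f\rangle$ for all $f\in C_b(\Sstate)$. Since bounded continuous functions determine a Borel probability measure on a metric space, $\mu_\pi P^\pi=\mu_\pi$.

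As an alternative to Step 3, one can apply the Schauder--Tychonoff theorem directly. The set $\mathcal{P}(\Sstate)$ is a nonempty compact convex subset of the locally convex space $(\mathcal{M}(\Sstate),\sigma(\mathcal{M},C))$, and by Step 2 the affine map $\Phi_\pi$ is continuous on it, so $\Phi_\pi$ has a fixed point.

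The only delicate step is Step 2. Its difficulty lies in using the correct topology: weak convergence, not total variation or $W_2$. One must also check that the Feller property is exactly what allows passing to the limit in $\langle\nu_n,P^\pi f\rangle$. Everything else is routine compactness.
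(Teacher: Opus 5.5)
Your proof is correct, but your main route is not the paper's. The paper argues exactly as in your closing alternative. It shows that $\mathcal{P}(\Sstate)$ is convex and weakly compact by Prokhorov, and that $\Phi_\pi$ is weakly (sequentially) continuous by the Weak Feller property, which is your Step~2. Schauder--Tychonoff then gives a fixed point.

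Your primary argument is instead Krylov--Bogoliubov. A weak limit point of the Ces\`aro averages $\bar\mu_n=\frac1n\sum_{t<n}\mu_0(P^\pi)^t$ is invariant, by the telescoping bound $2\|f\|_\infty/n$ combined with Step~2. This route buys three things:
\begin{itemize}
\item It needs no fixed-point theorem in a locally convex space, only sequential compactness and continuity along one sequence.
\item It exhibits $\mu_\pi$ as a limit of time-averaged state laws from an arbitrary $\mu_0$. This is close in spirit to the ergodic approximation of $J$ used in the numerical section.
\item It extends directly to non-compact $\Sstate$ as soon as $(\bar\mu_n)$ is tight, e.g.\ under a Lyapunov drift condition. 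The fixed-point route instead needs a compact convex set mapped into itself by $\Phi_\pi$.
\end{itemize}

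The paper's route is shorter once Schauder--Tychonoff is granted. However, it only verifies sequential continuity. That suffices because the weak topology on $\mathcal{P}(\Sstate)$ is metrizable for compact metric $\Sstate$, a point your Step~1 makes explicit and the paper leaves implicit. Finally, since $\Phi_\pi$ is affine, Markov--Kakutani would already be enough, and its standard proof is precisely your Ces\`aro argument.
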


\begin{proof}
We define the operator $\Phi_\pi: \mathcal{P}(\Sstate) \to \mathcal{P}(\Sstate)$ by $\Phi_\pi(\nu) = \nu P^\pi$. To establish the existence of a fixed point, we apply the Schauder-Tychonoff fixed point theorem. This requires verifying that the domain is a compact convex set and that the operator is continuous.

\paragraph{1. Geometry of the Domain.}
Let $\mathcal{K} = \mathcal{P}(\Sstate)$ be the set of all probability measures on $\Sstate$.
\begin{itemize}
    \item \textbf{Convexity:} The space of probability measures is convex, as any convex combination of probability measures remains a probability measure.
    \item \textbf{Compactness:} Since $\Sstate$ is a compact metric space, by Prokhorov's Theorem, the set $\mathcal{P}(\Sstate)$ is compact with respect to the topology of weak convergence (weak-* topology).
\end{itemize}

\paragraph{2. Continuity of the Operator.}
We must show that $\Phi_\pi$ is continuous with respect to the weak topology. Let $\{\nu_n\}_{n \in \mathbb{N}}$ be a sequence in $\mathcal{P}(\Sstate)$ converging weakly to $\nu$ (denoted $\nu_n \Rightarrow \nu$). We need to show that $\nu_n P^\pi \Rightarrow \nu P^\pi$.

By the definition of weak convergence, it suffices to show that for any continuous bounded test function $f \in C_b(\Sstate)$:
\[
    \lim_{n \to \infty} \int_{\Sstate} f(s') \, \mathrm{d}(\nu_n P^\pi)(s') = \int_{\Sstate} f(s') \, \mathrm{d}(\nu P^\pi)(s').
\]
Using the definition of the transition kernel, we can rewrite the integral as:
\[
    \int_{\Sstate} f(s') \, \mathrm{d}(\nu_n P^\pi)(s') = \int_{\Sstate} \left( \int_{\Sstate} f(s') P^\pi(s, \mathrm{d}s') \right) \mathrm{d}\nu_n(s).
\]
Let us define the function $g(s) := \int_{\Sstate} f(s') P^\pi(s, \mathrm{d}s')$. The term $g(s)$ represents the expected value of $f$ at the next state starting from $s$. The assumption that $P^\pi$ is weak Feller implies that $g$ is a continuous bounded function, i.e., $g \in C_b(\Sstate)$.

Since $\nu_n \Rightarrow \nu$ and $g$ is continuous bounded, the definition of weak convergence implies:
\[
    \int_{\Sstate} g(s) \, \mathrm{d}\nu_n(s) \xrightarrow{n \to \infty} \int_{\Sstate} g(s) \, \mathrm{d}\nu(s).
\]
Substituting $g$ back into the expression, we confirm that $\Phi_\pi(\nu_n) \Rightarrow \Phi_\pi(\nu)$. Thus, $\Phi_\pi$ is continuous.

\paragraph{Conclusion.}
The operator $\Phi_\pi$ is a continuous map from the non-empty, convex, compact set $\mathcal{P}(\Sstate)$ into itself. By the Schauder-Tychonoff fixed point theorem, there exists $\mu_\pi \in \mathcal{P}(\Sstate)$ such that $\Phi_\pi(\mu_\pi) = \mu_\pi$.
\end{proof}

Now that the existence of the invariant measure is proved, we need to show uniqueness to ensure that the geometric structure presented in Section \ref{sec:wass_geom_optim} is well defined. We give three main theorems  \ref{th:uniqueness_minimal_invariant}, \ref{th:uniqueness_topological_irreducibility}, \ref{th:uniqueness_doeblin} available in the literature of Markov chains in metric spaces \cite{hernandez2003markov,koralov2007theory,hairer2006ergodic, meyn2009markov, da1996ergodicity} proving the uniqueness of the invariant measure for well-behaved Markov kernels. For a deeper investigation on the question of uniqueness when the environment Markov kernel does not have good properties, we refer to the references cited above. If the reader is not concerned with this question, we advise skipping this technical part, assuming the uniqueness of the invariant measure, and going directly to Section \ref{sec:wass_geom_optim}. Before diving into this, we need to define the concept of ergodicity and prove a technical lemma.

\paragraph{Definition (Ergodic Measure).}
A measurable set $B \subset \Sstate$ is called \textbf{$\mu$-invariant} with respect to $P^\pi$ if $P^\pi(s, B) = 1$ for $\mu$-almost every $s \in B$. 

Let $\mathcal{M}_{inv} := \{ \mu \in \mathcal{P}(\Sstate) : \mu P^\pi = \mu \}$ denote the set of invariant probability measures on $\Sstate$. An invariant probability measure $\mu \in \mathcal{M}_{inv}$ is said to be \textbf{ergodic} if for every $\mu$-invariant set $B$, we have $\mu(B) = 0$ or $\mu(B) = 1$.

\begin{lemma} \cite[Theorem 5.7]{hairer2006ergodic}\label{lem:extremal_is_ergodic}
The set $\mathcal{M}_{inv}$ is convex. Moreover, a measure $\mu \in \mathcal{M}_{inv}$ is an extreme point of $\mathcal{M}_{inv}$ if and only if $\mu$ is ergodic.
\end{lemma}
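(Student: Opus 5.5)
Convexity of $\mathcal{M}_{inv}$ is immediate from linearity of $\nu\mapsto\nu P^\pi$: if $\mu_1P^\pi=\mu_1$, $\mu_2P^\pi=\mu_2$ and $\lambda\in[0,1]$, then $(\lambda\mu_1+(1-\lambda)\mu_2)P^\pi=\lambda\mu_1+(1-\lambda)\mu_2$, and the combination is still a probability measure. The work is in the equivalence, which I would prove through the two contrapositives.

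\textbf{Extreme $\Rightarrow$ ergodic.} Suppose $\mu\in\mathcal{M}_{inv}$ is not ergodic, so there is a $\mu$-invariant set $B$ with $\alpha:=\mu(B)\in(0,1)$. The candidate decomposition is $\mu_1:=\mu(\cdot\cap B)/\alpha$ and $\mu_2:=\mu(\cdot\cap B^c)/(1-\alpha)$, which gives $\mu=\alpha\mu_1+(1-\alpha)\mu_2$ with $\mu_1\neq\mu_2$. It remains to show both are invariant.
\begin{itemize}
\item Since $P^\pi(s,B)=1$ for $\mu$-a.e.\ $s\in B$, we get $\int_B P^\pi(s,B^c)\,\mathrm{d}\mu=0$.
\item Stationarity gives $\mu(B)=\int_B P^\pi(s,B)\,\mathrm{d}\mu+\int_{B^c}P^\pi(s,B)\,\mathrm{d}\mu=\mu(B)+\int_{B^c}P^\pi(s,B)\,\mathrm{d}\mu$. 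Hence $P^\pi(s,B)=0$ for $\mu$-a.e.\ $s\in B^c$, so $B^c$ is also $\mu$-invariant.
\item For any $A$: $(\mu|_B P^\pi)(A)=\int_B P^\pi(s,A\cap B)\,\mathrm{d}\mu$, and adding $\int_{B^c}P^\pi(s,A\cap B)\,\mathrm{d}\mu=0$ gives $\mu P^\pi(A\cap B)=\mu(A\cap B)$.
\end{itemize}
Thus $\mu_1\in\mathcal{M}_{inv}$, and symmetrically $\mu_2\in\mathcal{M}_{inv}$, so $\mu$ is not extreme.

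\textbf{Ergodic $\Rightarrow$ extreme.} Suppose $\mu=\lambda\mu_1+(1-\lambda)\mu_2$ with $\mu_i\in\mathcal{M}_{inv}$ and $\lambda\in(0,1)$. Then $\mu_1\ll\mu$, so set $h=\mathrm{d}\mu_1/\mathrm{d}\mu$; note $h\le 1/\lambda$, so $h\in L^\infty(\mu)$. The goal is to show $h$ is $\mu$-a.e.\ constant, which forces $\mu_1=\mu$ and hence $\mu_2=\mu$.
\begin{itemize}
\item Invariance of $\mu_1$ reads $\int h\,P^\pi f\,\mathrm{d}\mu=\int h f\,\mathrm{d}\mu$ for bounded $f$, i.e.\ $P^{\pi*}h=h$, where $P^{\pi*}$ is the adjoint of $P^\pi$ in $L^2(\mu)$.
\item Take $B_c:=\{h>c\}$ and aim to show each $B_c$ is $\mu$-invariant; ergodicity then gives $\mu(B_c)\in\{0,1\}$ for every $c$, so $h$ is a.e.\ constant.
\end{itemize}

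The main obstacle is proving that the level sets $B_c$ are $\mu$-invariant. I would try a direct argument first. The measure $\nu:=(\mu_1-c\mu)^+=(h-c)^+\mu$ satisfies $\nu\le\nu P^\pi$ by positivity of $P^\pi$: indeed $\nu P^\pi\ge(\mu_1-c\mu)P^\pi=\mu_1-c\mu$ and $\nu P^\pi\ge0$. Both $\nu$ and $\nu P^\pi$ have the same total mass, so $\nu P^\pi=\nu$.

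Evaluating this equality on $B_c^c$, where $\nu$ vanishes, gives $\int_{B_c}(h-c)P^\pi(s,B_c^c)\,\mathrm{d}\mu=0$. Since $h-c>0$ on $B_c$, this yields $P^\pi(s,B_c)=1$ for $\mu$-a.e.\ $s\in B_c$, which is the required invariance.

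If this argument hits a snag, I would fall back on the $L^2$ argument: use $\|P^\pi\|_{L^2(\mu)}\le1$ and the equality case of Jensen's inequality to get $P^\pi h=h$, and then the sets $\{h>c\}$ are invariant as before. Otherwise I would cite \cite[Theorem 5.7]{hairer2006ergodic} for this step.
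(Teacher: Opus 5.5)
Your proof is correct and follows the paper's structure. Convexity comes from linearity. For extreme $\Rightarrow$ ergodic, you condition on a nontrivial $\mu$-invariant set $B$ and its complement, using that $B^c$ is then also $\mu$-invariant. For the converse, you take the density $h=\mathrm{d}\mu_1/\mathrm{d}\mu$ and show its super-level sets $\{h>c\}$ are $\mu$-invariant. The only difference is cosmetic, in that last step: you show $(h-c)^+\mu$ is sub-invariant by positivity of $P^\pi$ and hence invariant by mass conservation. This is the same sign argument the paper makes by subtracting $c$ times the invariance identity of $\mu$ from that of $\mu_1$ on $B_c$, so the $L^2$ fallback is unnecessary.
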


\begin{proof}[Proof of Lemma \ref{lem:extremal_is_ergodic}]

First, we prove the convexity of $ \mathcal{M}_{inv}$.
\paragraph{1. Convexity.}
Let $\mu_1, \mu_2 \in \mathcal{M}_{inv}$ be two invariant measures, and let $\alpha \in [0, 1]$. We define their convex combination as $\nu = \alpha \mu_1 + (1 - \alpha) \mu_2$. Since the transition kernel $P^\pi$ acts linearly on probability measures, we have:
$$
    \nu P^\pi = (\alpha \mu_1 + (1 - \alpha) \mu_2) P^\pi = \alpha (\mu_1 P^\pi) + (1 - \alpha) (\mu_2 P^\pi).
$$
Because $\mu_1$ and $\mu_2$ are invariant, $\mu_1 P^\pi = \mu_1$ and $\mu_2 P^\pi = \mu_2$. Substituting these yields:
$$
    \nu P^\pi = \alpha \mu_1 + (1 - \alpha) \mu_2 = \nu.
$$
Since $\nu P^\pi = \nu$, the measure $\nu$ is also in $\mathcal{M}_{inv}$. Therefore, $\mathcal{M}_{inv}$ is a convex set.

\paragraph{2. Extremal implies Ergodic.}
Suppose $\mu$ is an extreme point of $\mathcal{M}_{inv}$. We assume, for the sake of contradiction, that $\mu$ is not ergodic. Then, there exists a $\mu$-invariant set $B$ such that $0 < \mu(B) < 1$. 

We can define two new probability measures by conditioning $\mu$ on $B$ and its complement $B^c$:
$$
    \mu_1(A) = \frac{\mu(A \cap B)}{\mu(B)} \quad \text{and} \quad \mu_2(A) = \frac{\mu(A \cap B^c)}{1 - \mu(B)}
$$
for any measurable set $A$. Clearly, $\mu$ can be expressed as the strict convex combination:
$$
    \mu = \mu(B)\mu_1 + (1 - \mu(B))\mu_2.
$$
We now show that $\mu_1, \mu_2 \in \mathcal{M}_{inv}$. For $\mu_1$ and any measurable set $A$:
\begin{equation}\label{eq:mu1_invariant}
    (\mu_1 P^\pi)(A) = \int_{\Sstate} P^\pi(s, A) \, \mathrm{d}\mu_1(s) = \frac{1}{\mu(B)} \int_B P^\pi(s, A) \, \mathrm{d}\mu(s).
\end{equation}
We can partition the target set $A$ into $A \cap B$ and $A \cap B^c$. Because $B$ is $\mu$-invariant, transitions out of $B$ have zero probability, meaning $P^\pi(s, A \cap B^c) = 0$ for $\mu$-a.e. $s \in B$.

Furthermore:
$$
    \mu(B) = \int_{\Sstate} P^\pi(s, B) \, \mathrm{d}\mu(s) = \int_B P^\pi(s, B) \, \mathrm{d}\mu(s) + \int_{B^c} P^\pi(s, B) \, \mathrm{d}\mu(s).
$$
Because $B$ is $\mu$-invariant, $P^\pi(s, B) = 1$ for $\mu$-a.e. $s \in B$, meaning the first integral on the right-hand side is exactly $\mu(B)$. Subtracting $\mu(B)$ from both sides yields:
$$
    \int_{B^c} P^\pi(s, B) \, \mathrm{d}\mu(s) = 0.
$$
This implies $P^\pi(s, B) = 0$ for $\mu$-almost every $s \in B^c$.  Hence, transitions from $B^c$ into $B$ have zero probability, meaning $P^\pi(s, A \cap B) = 0$ for $\mu$-a.e. $s \in B^c$.

Therefore:
$$
    \int_B P^\pi(s, A \cap B) \, \mathrm{d}\mu(s) = \int_{\Sstate} P^\pi(s, A \cap B) \, \mathrm{d}\mu(s) = \mu(A \cap B),
$$
where the last equality follows from the invariance of $\mu$. Substituting this back into \eqref{eq:mu1_invariant} yields $(\mu_1 P^\pi)(A) = \frac{\mu(A \cap B)}{\mu(B)} = \mu_1(A)$, proving $\mu_1 \in \mathcal{M}_{inv}$. A symmetric argument shows $\mu_2 \in \mathcal{M}_{inv}$. 

Since $\mu_1(B) = 1$ and $\mu_2(B) = 0$, $\mu_1$ and $\mu_2$ are distinct. We have expressed $\mu$ as a non-trivial convex combination of two distinct invariant measures, contradicting the assumption that $\mu$ is an extreme point. Thus, $\mu$ must be ergodic.

\paragraph{Part 2: Ergodic implies Extremal.} 
Suppose $\mu$ is ergodic, and assume $\mu = \alpha \mu_1 + (1 - \alpha) \mu_2$ for some $\alpha \in (0, 1)$ and $\mu_1, \mu_2 \in \mathcal{M}_{inv}$. 

Because $\alpha > 0$ and the measures are non-negative, $\mu_1$ is absolutely continuous with respect to $\mu$ ($\mu_1 \ll \mu$). By the Radon-Nikodym theorem, there exists a measurable density function $f = \frac{\mathrm{d}\mu_1}{\mathrm{d}\mu}$ such that $0 \le f \le \frac{1}{\alpha}$ $\mu$-a.e.

For any real constant $c$, we claim that the super-level set $B_c = \{s \in \Sstate : f(s) > c\}$ is a $\mu$-invariant set (the claim is proved at the end of the proof). Because $\mu$ is ergodic, $\mu(B_c)$ must be exactly $0$ or $1$ for all $c$. As this is true for all real $c$, this forces the density $f$ to be constant $\mu$-almost everywhere. Since $f$ is a probability density, $\int f \, \mathrm{d}\mu = 1$, which means $f(s) = 1$ $\mu$-a.e. 

Therefore, $\mu_1 = \mu$, which immediately forces $\mu_2 = \mu$. Since $\mu$ cannot be decomposed into a non-trivial convex combination of distinct invariant measures, it is an extreme point. 

\begin{proof}[Proof of the Claim]

Since $\mu_1, \mu_2$ are invariant, $\mu$ is invariant and for every measurable set $A$:
\begin{equation}\label{eq:mu_invariant_f_A}
 \int_A f(s) \, \mathrm{d}\mu(s) = \int_{\Sstate} P^\pi(s, A) f(s) \, \mathrm{d}\mu(s).
\end{equation}

Substituting $A = B_c$ into the equality above yields:
$$
    \int_{B_c} f(s) \, \mathrm{d}\mu(s) = \int_{B_c} P^\pi(s, B_c) f(s) \, \mathrm{d}\mu(s) + \int_{B_c^c} P^\pi(s, B_c) f(s) \, \mathrm{d}\mu(s).
$$
Because $\mu$ is an invariant measure, we also have $\mu(B_c) = \int_{\Sstate} P^\pi(s, B_c) \, \mathrm{d}\mu(s)$. Multiplying this by $c$ gives:
$$
    \int_{B_c} c \, \mathrm{d}\mu(s) = \int_{B_c} P^\pi(s, B_c) c \, \mathrm{d}\mu(s) + \int_{B_c^c} P^\pi(s, B_c) c \, \mathrm{d}\mu(s).
$$
Subtracting this equation from \eqref{eq:mu_invariant_f_A}, we obtain:
$$
    \int_{B_c} (f(s) - c) \, \mathrm{d}\mu(s) = \int_{B_c} P^\pi(s, B_c) (f(s) - c) \, \mathrm{d}\mu(s) + \int_{B_c^c} P^\pi(s, B_c) (f(s) - c) \, \mathrm{d}\mu(s).
$$
Rearranging the terms by bringing the integral over $B_c$ to the left side:
$$
    \int_{B_c} (1 - P^\pi(s, B_c)) (f(s) - c) \, \mathrm{d}\mu(s) = \int_{B_c^c} P^\pi(s, B_c) (f(s) - c) \, \mathrm{d}\mu(s).
$$
Now, we analyze the signs of both sides. On the left side, for $s \in B_c$, we have $f(s) - c > 0$ by definition, and $1 - P^\pi(s, B_c) \ge 0$ because $P^\pi$ is a probability kernel. Thus, the left-hand integrand is non-negative, meaning the left side is $\ge 0$. On the right side, for $s \in B_c^c$, we have $f(s) - c \le 0$, and $P^\pi(s, B_c) \ge 0$. Thus, the right-hand integrand is non-positive, meaning the right side is $\le 0$.

For a non-negative quantity to equal a non-positive quantity, both must be exactly zero. Therefore, the left side is zero:
$$
    \int_{B_c} (1 - P^\pi(s, B_c)) (f(s) - c) \, \mathrm{d}\mu(s) = 0.
$$
Since the integrand $(1 - P^\pi(s, B_c)) (f(s) - c)$ is strictly positive whenever $P^\pi(s, B_c) < 1$, the integral can only be zero if $1 - P^\pi(s, B_c) = 0$ for $\mu$-almost every $s \in B_c$. This implies $P^\pi(s, B_c) = 1$ $\mu$-a.e. on $B_c$, which is exactly the definition of $B_c$ being a $\mu$-invariant set.

\end{proof}

This finishes the proof of the Lemma.
\end{proof}

\begin{definition}
A closed set $C \subset \Sstate$ is called \textbf{invariant} with respect to the transition kernel $P^\pi$ if for all $s \in C$, we have $P^\pi(s, C) = 1$.
\end{definition}
 This is a stronger condition than $\mu$-invariance, as it requires the property to hold for \textit{every} state in the set, not just almost everywhere. Now, we prove that the support of an invariant measure is itself invariant.

\begin{lemma}\label{lemma:supp_invariant}
Let $\mu \in \mathcal{M}_{inv}$; then $\text{supp}(\mu)$ is invariant with respect to the Weak Feller transition kernel $P^\pi$.
\end{lemma}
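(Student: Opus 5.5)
Let $C := \text{supp}(\mu)$. In a Polish space $C$ is closed and $\mu(C) = 1$. The goal is to show $P^\pi(s, C) = 1$ for \emph{every} $s \in C$, not just $\mu$-a.e. The plan has two stages. First obtain the almost-everywhere statement from invariance of $\mu$. Then upgrade it to every point of $C$ using the Weak Feller property together with the closedness of $C$.

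\textbf{Step 1 (a.e. invariance).} By stationarity,
\[
1 = \mu(C) = \int_\Sstate P^\pi(s, C)\,\mathrm{d}\mu(s).
\]
Since $0 \le P^\pi(s,C) \le 1$, this forces $P^\pi(s, C) = 1$ for $\mu$-a.e. $s$. Because $\mu(C)=1$, the set $D := \{s \in C : P^\pi(s,C) = 1\}$ satisfies $\mu(D) = 1$. Every point of the support has all its neighbourhoods charged by $\mu$, so $D$ is dense in $C$: any open ball around a point of $C$ has positive $\mu$-mass and hence meets $D$.

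\textbf{Step 2 (upper semicontinuity).} For a closed set $F$, the map $s \mapsto P^\pi(s, F)$ is upper semicontinuous under the Weak Feller property. To see this, approximate $\mathbf{1}_F$ from above by the bounded continuous functions $f_k(x) := \max(0, 1 - k\, d_\Sstate(x, F))$, which decrease pointwise to $\mathbf{1}_F$. Each $P^\pi f_k$ is continuous by Weak Feller. By monotone convergence, $P^\pi(\cdot, F) = \inf_k P^\pi f_k$ is an infimum of continuous functions, hence upper semicontinuous.

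\textbf{Step 3 (conclusion).} Upper semicontinuity gives the wrong inequality: $\limsup_n P^\pi(s_n,C) \le P^\pi(s,C)$. For $s \in C$, take $s_n \in D$ with $s_n \to s$, which Step 1 allows. Then
\[
1 = \limsup_n P^\pi(s_n, C) \le P^\pi(s, C) \le 1,
\]
so $P^\pi(s,C) = 1$ for every $s \in C$, which is exactly invariance of $C$.

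The direction of semicontinuity is the step that needs checking, and it works in our favour. Closed sets yield upper semicontinuity, which pushes the value $1$ from the dense set $D$ up to its limit points. The remaining subtleties are routine: that $\mu(\text{supp}\,\mu)=1$ requires separability of $\Sstate$, which we have since $\Sstate$ is Polish, and that $f_k \downarrow \mathbf{1}_F$ requires $F$ to be closed.
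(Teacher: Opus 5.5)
Your proof is correct and is essentially the paper's argument written for $C$ instead of its complement. The paper shows that $s \mapsto P^\pi(s,U)$, with $U = C^c$, is lower semicontinuous, using the same Weak Feller approximation (from below). It then reaches a contradiction from the fact that every neighbourhood of a support point is $\mu$-charged, which is the contrapositive of your density-plus-upper-semicontinuity step.

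One wording slip: in Step 3, $\limsup_n P^\pi(s_n,C)\le P^\pi(s,C)$ is exactly the inequality you need, so calling it ``the wrong inequality'' contradicts your own correct closing remark.
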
 

\begin{proof}
Let $C = \text{supp}(\mu)$. By definition, $C$ is a closed set, and its complement $U = C^c = \Sstate \setminus C$ is an open set with $\mu$-measure zero, i.e., $\mu(U) = 0$.

Using the invariance of $\mu$, we have:
\[
    \mu(U) = \int_{\Sstate} P^\pi(s, U) \, \mathrm{d}\mu(s).
\]
Since $\mu(U) = 0$, it follows that:
\begin{equation}\label{eq:Ppi_U_zero}
    \int_{\Sstate} P^\pi(s, U) \, \mathrm{d}\mu(s) = 0.
\end{equation}
Therefore, $P^\pi(s, U) = 0$ for $\mu$-almost every $s$. 

Since $U$ is an open set in a metric space, there exists a sequence of continuous bounded functions $0 \le f_n \le 1$ such that $f_n(s) \nearrow 1_U(s)$ pointwise (e.g., using the distance function to the closed set $C$). By the Monotone Convergence Theorem:
\[
    P^\pi(s, U) = \int_{\Sstate} 1_U(s') P^\pi(s, \mathrm{d}s') = \lim_{n \to \infty} \int_{\Sstate} f_n(s') P^\pi(s, \mathrm{d}s').
\]
Let $g_n(s) = \int_{\Sstate} f_n(s') P^\pi(s, \mathrm{d}s')$. By the Weak Feller assumption, each $g_n$ is continuous. The function $\phi(s) := P^\pi(s, U)$ is the pointwise limit of an increasing sequence of continuous functions, which implies that $\phi$ is lower semi-continuous (LSC).

We proceed by contradiction. Assume there exists a point $s_0 \in C$ such that $P^\pi(s_0, C) < 1$. Equivalently, $\phi(s_0) = P^\pi(s_0, U) > 0$.
Since $\phi$ is lower semi-continuous, there exists an open neighborhood $V$ of $s_0$ and some $\epsilon > 0$ such that $\phi(s) > \epsilon$ for all $s \in V$.

Because $s_0$ lies in the support of $\mu$, any open neighborhood of $s_0$ must have positive measure. Thus, $\mu(V) > 0$.
We can now bound the integral:
\[
    \int_{\Sstate} P^\pi(s, U) \, \mathrm{d}\mu(s) \ge \int_V \phi(s) \, \mathrm{d}\mu(s) \ge \epsilon \cdot \mu(V) > 0.
\]
This contradicts \eqref{eq:Ppi_U_zero}. Therefore, we must have $P^\pi(s, U) = 0$ for all $s \in C$.
Consequently, $P^\pi(s, C) = 1$ for all $s \in C$, proving that $\text{supp}(\mu)$ is an invariant set.
\end{proof}

Now we are able to state and prove the main abstract uniqueness theorem; we just need to introduce the \textbf{Strong Feller} property. The kernel $P^\pi$ is said to be Strong Feller if for all measurable functions $f$ on $\Sstate$, the function $s \mapsto \int_\Sstate P^\pi(s,ds^\prime) f(s^\prime) $ is continuous. Obviously, a Strong Feller kernel is Weak Feller.
\begin{theorem} \label{th:uniqueness_minimal_invariant}
\textbf{(Unique Minimal Closed Invariant Set, \cite[Chapter 4]{da1996ergodicity})}
Let the state space $\Sstate$ be a compact metric space and assume $P^\pi$ is Strong Feller. If there exists exactly one minimal closed invariant set in $\Sstate$, then the invariant probability measure $\mu_\pi$ is unique.
\end{theorem}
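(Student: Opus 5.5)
Existence is already settled by Theorem \ref{th:existence_invariant_compactness}, since a Strong Feller kernel is in particular Weak Feller and $\Sstate$ is compact. So we only need uniqueness, and we argue by contradiction. The plan is to reduce to ergodic measures via Lemma \ref{lem:extremal_is_ergodic}, to show that two distinct ergodic measures are mutually singular, and then to use the Strong Feller property to show that mutually singular invariant measures have disjoint supports. Lemma \ref{lemma:supp_invariant} turns those supports into closed invariant sets, and compactness places a minimal closed invariant set inside each one, contradicting the uniqueness hypothesis.

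\textbf{Step 1 (reduction to ergodic measures).} The set $\mathcal{M}_{inv}$ is convex by Lemma \ref{lem:extremal_is_ergodic}. It is also weakly closed in the compact set $\mathcal{P}(\Sstate)$, because $\Phi_\pi$ is weakly continuous (as shown in the proof of Theorem \ref{th:existence_invariant_compactness}). Hence $\mathcal{M}_{inv}$ is a compact convex subset of a locally convex space. If it contained two distinct points, Krein--Milman would yield two distinct extreme points $\mu_1 \neq \mu_2$, and by Lemma \ref{lem:extremal_is_ergodic} both are ergodic.

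\textbf{Step 2 (mutual singularity).} Write the Lebesgue decomposition of $\mu_1$ with respect to $\mu_2$. Then $\nu = \tfrac12(\mu_1+\mu_2)$ is invariant, and $\mu_i \ll \nu$. I would rerun the density argument from Part 2 of the proof of Lemma \ref{lem:extremal_is_ergodic} with $f_i = \mathrm{d}\mu_i/\mathrm{d}\nu$: the super-level sets of $f_1$ are $\nu$-invariant. Taking $B=\{f_1>f_2\}$, this gives an invariant set that, by ergodicity, has full $\mu_1$-measure and zero $\mu_2$-measure. So $\mu_1 \perp \mu_2$.

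\textbf{Step 3 (disjoint supports via Strong Feller).} This is the main obstacle, because singular measures can have overlapping supports, and this is exactly where Strong Feller (rather than Weak Feller) is needed. Let $B$ be the set from Step 2, with $\mu_1(B)=1$ and $\mu_2(B)=0$. Define $h(s) = P^\pi(s,B)$, which is continuous by Strong Feller. Invariance gives $\int h\,\mathrm{d}\mu_1 = \mu_1(B) = 1$, so $h=1$ $\mu_1$-a.e. Likewise $\int h\,\mathrm{d}\mu_2 = \mu_2(B) = 0$, so $h=0$ $\mu_2$-a.e. By continuity, $h\equiv 1$ on $\text{supp}(\mu_1)$ and $h\equiv 0$ on $\text{supp}(\mu_2)$, since a closed set of full measure contains the support. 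Therefore $\text{supp}(\mu_1)\cap\text{supp}(\mu_2)=\emptyset$.

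\textbf{Step 4 (minimal sets).} By Lemma \ref{lemma:supp_invariant}, $C_i=\text{supp}(\mu_i)$ is a nonempty closed invariant set. The family of nonempty closed invariant subsets of $C_i$, ordered by inclusion, satisfies the hypotheses of Zorn's lemma. Indeed, the intersection of a chain is nonempty by compactness (finite intersection property). It is also invariant: $P^\pi(s,\cdot)$ is continuous from above along the chain, and in a compact metric space one can pass to a countable cofinal decreasing sequence using the second-countability of the complements. So each $C_i$ contains a minimal closed invariant set $M_i$. Since $C_1 \cap C_2 = \emptyset$, we get $M_1 \neq M_2$, contradicting the hypothesis. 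Hence $\mathcal{M}_{inv}$ is a singleton, and $\mu_\pi$ is unique.
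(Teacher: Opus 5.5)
Your proposal is correct and follows the paper's proof essentially step for step. The common route is: Krein--Milman gives two distinct ergodic extreme points; a set $B$ with $\mu_1(B)=1$, $\mu_2(B)=0$ separates them; Strong Feller continuity of $s\mapsto P^\pi(s,B)$ forces disjoint supports; Lemma~\ref{lemma:supp_invariant} makes those supports closed invariant sets; and Zorn's lemma extracts two distinct minimal closed invariant sets. The only difference is that you justify two points the paper merely asserts. First, the mutual singularity of distinct ergodic measures, via the density argument with $\nu=\tfrac12(\mu_1+\mu_2)$: there $\{f_1>f_2\}=\{f_1>1\}$ is $\nu$-invariant and has positive $\nu$-measure because $\mu_1\neq\mu_2$. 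Second, the invariance of the intersection of a chain of closed invariant sets, via a countable subfamily obtained from second countability.
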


\begin{proof}
Suppose, for the sake of contradiction, that there exist two distinct invariant probability measures. First, we claim that $\mathcal{M}_{inv}$, the set of invariant measures, on a compact space is convex and compact.  

\begin{proof}[Proof of the Claim]
The convexity of $\mathcal{M}_{inv}$ is given by Lemma \ref{lem:extremal_is_ergodic}. Because the state space $\Sstate$ is a compact metric space, the space of all probability measures $\mathcal{P}(\Sstate)$ is compact with respect to the weak topology. 

Recall the operator $\Phi_\pi: \mathcal{P}(\Sstate) \to \mathcal{P}(\Sstate)$ defined by $\Phi_\pi(\mu) = \mu P^\pi$. Since $P^\pi$ is Strong Feller, it is trivially also Weak Feller. As established in the existence proof, this guarantees that $\Phi_\pi$ is continuous with respect to the weak topology.

The set of invariant measures $\mathcal{M}_{inv}$ is precisely the set of fixed points of $\Phi_\pi$:
$$
    \mathcal{M}_{inv} = \{ \mu \in \mathcal{P}(\Sstate) : \Phi_\pi(\mu) = \mu \}.
$$
In any Hausdorff topological space (which $\mathcal{P}(\Sstate)$ is under the weak topology), the set of fixed points of a continuous mapping is closed as it is the kernel of a continuous operator. Thus, $\mathcal{M}_{inv}$ is a closed subset of the compact space $\mathcal{P}(\Sstate)$. Since any closed subset of a compact space is itself compact, $\mathcal{M}_{inv}$ is compact.
\end{proof}

By the claim and the Krein-Milman Theorem, $\mathcal{M}_{inv}$ is equal to the closed convex hull of its extreme points. Thus, as the existence of multiple invariant measures is assumed, there exist at least two distinct extreme points $\mu_1 \neq \mu_2$. By Lemma \ref{lem:extremal_is_ergodic}, $\mu_1$ and $\mu_2$ are ergodic.

Because $\mu_1$ and $\mu_2$ are distinct ergodic measures, they are mutually singular ($\mu_1 \perp \mu_2$). Therefore, there exists a Borel measurable set $A \subset \Sstate$ such that $\mu_1(A) = 1$ and $\mu_2(A) = 0$.

We now demonstrate that their topological supports are disjoint by leveraging the Strong Feller property. By the invariance of $\mu_1$, we have:
$$
    \int_{\Sstate} P^\pi(s, A) \, \mathrm{d}\mu_1(s) = \mu_1(A) = 1
$$
Since $0 \leq P^\pi(s, A) \leq 1$, it must be that $P^\pi(s, A) = 1$ for $\mu_1$-almost every $s$. Because $P^\pi$ is Strong Feller and the indicator function $\mathbf{1}_A$ is a bounded measurable function, the mapping $\varphi(s) = P^\pi(s, A)$ is continuous. Thus, the pre-image $F_1 = \varphi^{-1}(\{1\})$ is a closed set. Since the condition $P^\pi(s, A) = 1$ holds for $\mu_1$-almost every $s$, the set $F_1$ (which collects all points satisfying this equality) necessarily has full measure, i.e., $\mu_1(F_1) = 1$. By definition, the topological support of a measure is the intersection of all closed sets of measure 1, which implies $\text{supp}(\mu_1) \subseteq F_1$.

Symmetrically, by the invariance of $\mu_2$:
$$
    \int_{\Sstate} P^\pi(s, A) \, \mathrm{d}\mu_2(s) = \mu_2(A) = 0
$$
This implies $P^\pi(s, A) = 0$ for $\mu_2$-almost every $s$. The continuous pre-image $F_2 = \varphi^{-1}(\{0\})$ is a closed set with $\mu_2(F_2) = 1$, so $\text{supp}(\mu_2) \subseteq F_2$.

Since $F_1 \cap F_2 = \emptyset$, it strictly follows that $\text{supp}(\mu_1) \cap \text{supp}(\mu_2) = \emptyset$.

Because $\Sstate$ is a compact metric space, any closed invariant set contains at least one minimal closed invariant set (which can be shown via Zorn's Lemma). Therefore, as $\text{supp}(\mu_1)$ is invariant by Lemma \ref{lemma:supp_invariant}, it contains a minimal closed invariant set $M_1$. Symmetrically, $\text{supp}(\mu_2)$ contains a minimal closed invariant set $M_2$.

Since $\text{supp}(\mu_1) \cap \text{supp}(\mu_2) = \emptyset$, it follows that $M_1 \cap M_2 = \emptyset$, meaning $M_1$ and $M_2$ are distinct minimal closed invariant sets. This contradicts the assumption that $\Sstate$ contains exactly one minimal closed invariant set. Hence, the invariant measure $\mu_\pi$ must be unique.
\end{proof}

\vspace{1em}

The unique closed invariant subset criterion is rather abstract and difficult to prove. The topological irreducibility criterion gives another perspective.
\begin{theorem} \label{th:uniqueness_topological_irreducibility}
\textbf{(Doob's theorem, \cite[Chapter 4]{da1996ergodicity})}
Let the state space $\Sstate$ be a compact metric space and assume $P^\pi$ is Strong Feller. Assume $P^\pi$ is topologically irreducible, meaning for every $s \in \Sstate$ and every non-empty open set $O \subset \Sstate$, there exists an integer $n \ge 1$ such that:
\[
    (P^\pi)^n(s, O) > 0
\]
Then there exists a unique invariant probability measure $\mu_\pi$.
\end{theorem}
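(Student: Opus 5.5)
Existence is already given by Theorem \ref{th:existence_invariant_compactness}, since $\Sstate$ is compact and a Strong Feller kernel is Weak Feller. For uniqueness, the most economical route is to reduce to Theorem \ref{th:uniqueness_minimal_invariant}: we show that topological irreducibility forces $\Sstate$ itself to be the only (hence the unique minimal) closed invariant set. Concretely, let $C$ be any non-empty closed invariant set, so that $P^\pi(s,C)=1$ for every $s\in C$.

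First we iterate the invariance. An induction on $n$ gives $(P^\pi)^n(s,C)=1$ for all $s\in C$. Indeed,
\[
(P^\pi)^{n+1}(s,C)=\int_{\Sstate} (P^\pi)^n(s',C)\,P^\pi(s,\mathrm{d}s') \ge \int_C (P^\pi)^n(s',C)\,P^\pi(s,\mathrm{d}s') = P^\pi(s,C)=1 .
\]
Hence $(P^\pi)^n(s,C^c)=0$ for every $s\in C$ and every $n\ge 1$.

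Now suppose $C\neq\Sstate$. Then $O:=C^c$ is a non-empty open set. Picking any $s\in C$, irreducibility provides $n$ with $(P^\pi)^n(s,O)>0$, which contradicts the previous step. So every non-empty closed invariant set equals $\Sstate$. In particular $\Sstate$ (which is trivially closed and invariant) is the unique minimal closed invariant set, and Theorem \ref{th:uniqueness_minimal_invariant} yields uniqueness of $\mu_\pi$.

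If one prefers to avoid leaning on the Zorn-type step inside Theorem \ref{th:uniqueness_minimal_invariant}, the argument can be made direct. Take two distinct ergodic invariant measures $\mu_1,\mu_2$, as in that proof via Krein--Milman and Lemma \ref{lem:extremal_is_ergodic}. The same Strong Feller argument shows their supports are disjoint closed sets, and Lemma \ref{lemma:supp_invariant} shows each support is a non-empty closed invariant set. By the argument above, each support must equal $\Sstate$, so they cannot be disjoint, a contradiction.

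The only delicate point is a bookkeeping check: we must confirm that "invariant" in Lemma \ref{lemma:supp_invariant} is the everywhere notion ($P^\pi(s,C)=1$ for all $s\in C$) used in the induction, not merely the $\mu$-a.e. notion. Lemma \ref{lemma:supp_invariant} indeed proves the everywhere version, using the Weak Feller lower semicontinuity. This is exactly what lets irreducibility, a statement about every starting point, bite.
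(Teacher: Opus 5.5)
Your proposal is correct and follows the paper's own argument. Irreducibility rules out any proper non-empty closed invariant set, so $\Sstate$ is the unique minimal closed invariant set, and uniqueness then follows from Theorem \ref{th:uniqueness_minimal_invariant}; your explicit induction giving $(P^\pi)^n(s,C)=1$ for $s\in C$, and your restriction to non-empty $C$, spell out steps the paper asserts without proof.
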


\begin{proof}
We will show that topological irreducibility implies the existence of exactly one minimal closed invariant set, which by Theorem \ref{th:uniqueness_minimal_invariant} guarantees uniqueness.

Let $C \subseteq \Sstate$ be a closed invariant set. By the definition of invariance, for any $s \in C$, the probability of transitioning outside of $C$ is zero. Let $O = \Sstate \setminus C$. Because $C$ is closed, $O$ is an open set. 

If $C$ is a proper subset of $\Sstate$, then $O$ is non-empty. However, since $C$ is invariant, we must have $(P^\pi)^n(s, O) = 0$ for all $s \in C$ and all $n \ge 1$. This directly contradicts the assumption of topological irreducibility. 

Therefore, no proper closed invariant subset can exist, meaning $\Sstate$ itself is the unique minimal closed invariant set. Uniqueness of $\mu_\pi$ follows immediately from Theorem \ref{th:uniqueness_minimal_invariant}.
\end{proof}

\vspace{1em}

Finally, we present Doeblin's criterion, which corresponds to a lower bound of the kernel by a probability measure independent of the state variable.
\begin{theorem}\textbf{(Doeblin's criterion, \cite[Theorem 4.29]{hairer2006ergodic})}\label{th:uniqueness_doeblin}
Assume there exists a probability measure $\eta \in \mathcal{P}(\Sstate)$, an integer $m \ge 1$, and a constant $\alpha \in (0, 1]$ such that for all states $s \in \Sstate$ and all Borel sets $B$:
\[
    (P^\pi)^m(s, B) \ge \alpha \eta(B)
\]
Then the invariant probability measure $\mu_\pi$ is unique.
\end{theorem}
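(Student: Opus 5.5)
The plan is to prove uniqueness through a total-variation contraction of the $m$-step kernel $Q := (P^\pi)^m$ acting on signed measures of zero mass. Existence is not part of the claim, so it suffices to show that two invariant measures must coincide. First I note that any $\mu \in \mathcal{M}_{inv}$ satisfies $\mu Q = \mu$, by iterating the stationarity equation $m$ times. Given $\mu_1, \mu_2 \in \mathcal{M}_{inv}$, I set $\sigma := \mu_1 - \mu_2$, a finite signed measure with $\sigma(\Sstate) = 0$ and $\sigma Q = \sigma$. The target is the inequality
\[
\|\sigma Q\|_{TV} \le (1-\alpha)\,\|\sigma\|_{TV}.
\]
Combined with $\sigma Q = \sigma$ and $\alpha > 0$, it forces $\|\sigma\|_{TV} = 0$, that is, $\mu_1 = \mu_2$.

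To establish the contraction, I use the minorization to split the kernel. Define
\[
R(s, B) := \frac{Q(s,B) - \alpha\,\eta(B)}{1-\alpha}
\]
when $\alpha < 1$. The Doeblin condition guarantees $R(s,B) \ge 0$, and $R(s,\Sstate) = 1$, so $R$ is a Markov kernel. Measurability in $s$ is inherited from $Q$. This gives the decomposition $Q = \alpha\, \mathbf{1}\otimes\eta + (1-\alpha) R$. Applying it to $\sigma$ yields
\[
\sigma Q = \alpha\, \sigma(\Sstate)\,\eta + (1-\alpha)\,\sigma R = (1-\alpha)\,\sigma R.
\]
Any Markov kernel is a contraction in total variation: writing the Jordan decomposition $\sigma = \sigma^+ - \sigma^-$, we get $|\sigma R| \le \sigma^+ R + \sigma^- R$, whose total mass is $\|\sigma\|_{TV}$. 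Together these give the claimed inequality. In the degenerate case $\alpha = 1$, $Q(s,\cdot) = \eta$ for every $s$, so $\sigma Q = \sigma(\Sstate)\eta = 0$ directly.

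An equivalent route is to write $\sigma = c(\nu_1 - \nu_2)$ with $\nu_i$ probability measures and $c = \sigma^+(\Sstate)$. Then $\nu_1 Q$ and $\nu_2 Q$ share the common component $\alpha\eta$, which gives the same bound. I will present the kernel-splitting version, since it is cleaner.

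I expect no real obstacle; the step needing the most care is the bookkeeping of the signed-measure argument. Specifically, I must check that $\sigma(\Sstate) = 0$ is what kills the $\eta$-term, and that $\|\sigma R\|_{TV} \le \|\sigma\|_{TV}$ is justified via the Jordan decomposition rather than assumed. As a remark, iterating the same inequality yields geometric convergence $\|\mu_0 (P^\pi)^{km} - \mu_\pi\|_{TV} \le 2(1-\alpha)^k$ in total variation, and Banach's fixed point theorem on the complete space $(\mathcal{P}(\Sstate), \|\cdot\|_{TV})$ would also give existence, but that is beyond what the statement requires.
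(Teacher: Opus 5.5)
Your proposal is correct and follows essentially the same route as the paper: split $(P^\pi)^m = \alpha\,\eta + (1-\alpha)R$, let the common $\alpha\eta$ component cancel in the difference of two invariant measures (you phrase this through $\sigma(\Sstate)=0$), and conclude from $\|\mu_1-\mu_2\|_{TV} \le (1-\alpha)\|\mu_1-\mu_2\|_{TV}$. Your version is slightly more careful than the paper's. You treat $\alpha = 1$ separately, whereas the paper divides by $1-\alpha$ without excluding that case, and you justify the total-variation contraction of $R$ through the Jordan decomposition instead of asserting it.
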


\begin{proof}
Let $\mu_1$ and $\mu_2$ be two invariant probability measures for the transition kernel $P^\pi$. By definition of invariance, they are also invariant for the $m$-step transition kernel, meaning $\mu_1 (P^\pi)^m = \mu_1$ and $\mu_2 (P^\pi)^m = \mu_2$.

Recall that the total variation (TV) distance between two probability measures $\nu_1$ and $\nu_2$ on a measurable space $(\Sstate, \mathcal{B})$ is defined as the maximum difference in probability they assign to any single event:
\[
    \|\nu_1 - \nu_2\|_{TV} = \sup_{B \in \mathcal{B}} |\nu_1(B) - \nu_2(B)|.
\]

We evaluate the total variation distance between the invariant measures $\mu_1$ and $\mu_2$. By the minorization condition, we can decompose the $m$-step transition kernel as:
\[
    (P^\pi)^m(s, \cdot) = \alpha \eta(\cdot) + (1 - \alpha) R(s, \cdot)
\]
where $R(s, \cdot) := \frac{1}{1-\alpha}( (P^\pi)^m(s, \cdot) - \alpha \eta(\cdot)) $ is a valid residual transition kernel for each $s \in \Sstate$. (It is a valid kernel because $(P^\pi)^m(s, B) \ge \alpha \eta(B)$ ensures non-negativity, and $R(s, \Sstate) = \frac{1 - \alpha}{1 - \alpha} = 1$).

For any Borel set $B \in \mathcal{B}$, we apply the $m$-step transition kernel to the measure $\mu_1$:
\[
    (\mu_1 (P^\pi)^m)(B) = \int_{\Sstate} (P^\pi)^m(s, B) \, \mathrm{d}\mu_1(s) = \int_{\Sstate} \left( \alpha \eta(B) + (1 - \alpha) R(s, B) \right) \mathrm{d}\mu_1(s).
\]
Since $\mu_1$ is a probability measure, $\int_{\Sstate} \mathrm{d}\mu_1(s) = 1$. This allows us to pull the constant terms out:
\[
    (\mu_1 (P^\pi)^m)(B) = \alpha \eta(B) + (1 - \alpha) \int_{\Sstate} R(s, B) \, \mathrm{d}\mu_1(s) = \alpha \eta(B) + (1 - \alpha) (\mu_1 R)(B).
\]
By identical logic for $\mu_2$, we have:
\[
    (\mu_2 (P^\pi)^m)(B) = \alpha \eta(B) + (1 - \alpha) (\mu_2 R)(B).
\]

Now, we examine the difference between the two measures. Because they are invariant under $(P^\pi)^m$, $\mu_1(B) - \mu_2(B) = (\mu_1 (P^\pi)^m)(B) - (\mu_2 (P^\pi)^m)(B)$. Substituting our expanded forms, the common overlapping component $\alpha \eta(B)$ exactly cancels out:
\[
    \mu_1(B) - \mu_2(B) = (1 - \alpha) \left( (\mu_1 R)(B) - (\mu_2 R)(B) \right).
\]

Taking the supremum over all Borel sets $B$ to find the total variation distance yields:
\[
    \|\mu_1 - \mu_2\|_{TV} = \sup_{B \in \mathcal{B}} |\mu_1(B) - \mu_2(B)| = (1 - \alpha) \sup_{B \in \mathcal{B}} |(\mu_1 R)(B) - (\mu_2 R)(B)|.
\]
This is equivalent to:
\[
    \|\mu_1 - \mu_2\|_{TV} = (1 - \alpha) \|\mu_1 R - \mu_2 R\|_{TV}.
\]

A fundamental property of any Markov transition kernel (including our residual kernel $R$) is that it acts as a weak contraction; applying it to two measures cannot increase the total variation distance between them (i.e., $\|\mu_1 R - \mu_2 R\|_{TV} \le \|\mu_1 - \mu_2\|_{TV}$). 

Applying this bound, we obtain:
\[
    \|\mu_1 - \mu_2\|_{TV} \le (1 - \alpha) \|\mu_1 - \mu_2\|_{TV}.
\]

Since the minorization condition states that $\alpha > 0$, we have $1 - \alpha < 1$. The total variation distance is by definition non-negative ($\|\mu_1 - \mu_2\|_{TV} \ge 0$). The inequality $x \le c x$ for $c < 1$ and $x \ge 0$ can only hold true if $x = 0$. 

Thus, $\|\mu_1 - \mu_2\|_{TV} = 0$, which implies that $\mu_1 = \mu_2$, proving that the invariant measure is unique.
\end{proof}

\section{Wasserstein Geometry and Optimization}\label{sec:wass_geom_optim}

We aim to endow the policy space $\Pi$ with a formal Riemannian structure. To achieve this, we analyze the geometry of curves in the policy space using the theory of absolutely continuous curves in the Wasserstein space $\Ptwo(\Action)$.
From now on, the action space is assumed to be $\R^{d_a}$ or a compact convex subset of $\R^{d_a}$, and the state space is assumed to be $\R^{d_s}$ or a compact convex subset of $\R^{d_s}$ with $d_a, d_s > 0$.
\subsection{The Tangent Space and Metric Derivative}

Let $(\pi_t)_{t \in I}$ be a time-dependent policy, where $I \subset \mathbb{R}$ is an open interval. For a fixed state $s \in \Sstate$, the curve $t \mapsto \pi_t(s)$ represents a trajectory in the space of measures.

According to the Ambrosio-Gigli-Savaré \cite{ambrosio2008gradient} theory (specifically Theorem 8.3.1), if this curve is absolutely continuous of order 2 ($AC^2$) with respect to the Wasserstein distance $W_2$, it admits a square-integrable metric derivative, denoted by $|\pi_t'|(s)$, defined for almost every $t$ by:

\[
    |\pi_t'|(s) := \lim_{h \to 0} \frac{W_2(\pi_{t+h}(s), \pi_t(s))}{|h|}.
\]
This scalar quantity measures the minimal amount of transport ``work'' required to move the probability mass at rate $t$. The geometry of the transport is described by a vector field $v_t(s, \cdot)$ satisfying the continuity equation:
\[
    \partial_t \pi_t(s) + \mathrm{div}_a (\pi_t(s) v_t(s)) = 0.
\]
While there are infinitely many vector fields satisfying this equation, there exists a unique field $v_t(s, \cdot)$ belonging to the tangent space (the closure of gradients) that characterizes the optimal transport. This field is uniquely identified by the condition that its $L^2$ norm is bounded by the metric derivative: $\|v_t(s)\|_{L^2(\pi_t(s))} \leq |\pi_t'|(s)$. In fact, equality holds for this optimal choice.

In order to perform integral estimates on $v_t(s,a)$, we need to show that this map is measurable. This is proven in the following theorem.

\begin{theorem}[Global Measurable Selection of Tangent Fields]
Let $I \subset \mathbb{R}$ be an interval and $\pi: I=[0,T] \to \Pi$ be a dynamic policy such that:
\begin{enumerate}
    \item For every $t \in I$, the map $s \mapsto \pi_t(s)$ is measurable from $(\Sstate, \mathcal{B}(\Sstate))$ to $(\Ptwo(\Action), \mathcal{B}(\Ptwo(\Action)))$.
    \item For every $s \in \Sstate$, the curve $t \mapsto \pi_t(s)$ belongs to $AC^2(I; \Ptwo(\Action))$.
\end{enumerate}

Then there exists a map $(t, s, a) \mapsto v_t(s, a)$ defined on $I \times \Sstate \times \Action$, which is jointly measurable with respect to the product $\sigma$-algebra $\mathcal{B}(I) \otimes \mathcal{B}(\Sstate) \otimes \mathcal{B}(\Action)$, such that, for every $s \in \Sstate$, the vector field $v_t(s, \cdot)$ acts as the tangent velocity field for almost every $t \in I$. Moreover:

\begin{enumerate}
\item For every $\phi \in C_c^\infty((0,T) \times \Action)$, the field satisfies the distributional equation state-wise:
    \[
        \int_0^T \int_{\Action} \left( \partial_t \phi(t, a) + \langle v_t(s, a), \nabla_a \phi(t, a) \rangle \right) \mathrm{d}\pi_t(s)(a) \, \mathrm{d}t = 0.
    \]
\item For every $s \in \Sstate$ and almost every $t \in I$, the vector field satisfies the norm constraint:
    \[
        \| v_t(s, \cdot) \|_{L^2(\pi_t(s))} \leq |\pi_t'|(s),
    \]
    where $|\pi_t'|(s)$ is the metric derivative of the curve $t \mapsto \pi_t(s)$.
\end{enumerate}
\end{theorem}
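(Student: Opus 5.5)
The plan is to build $v$ explicitly from a jointly measurable object, instead of selecting abstractly one field per $s$. For each fixed $s$, Theorem 8.3.1 of \cite{ambrosio2008gradient} gives the tangent field $v_t(s,\cdot)\in \overline{\{\nabla\varphi\}}^{L^2(\pi_t(s))}$ with $\|v_t(s)\|_{L^2(\pi_t(s))}=|\pi_t'|(s)$ for a.e.\ $t$. It is characterized (Proposition 8.4.5 there) as the $L^2$-limit of optimal-transport difference quotients: if $T_{t}^{t+h}(s,\cdot)$ is the optimal map from $\pi_t(s)$ to $\pi_{t+h}(s)$ (or, without absolute continuity, the barycentric projection of an optimal plan), then
\[
v_t(s,a)=\lim_{h\to 0}\frac{T_t^{t+h}(s,a)-a}{h}\quad\text{in } L^2(\pi_t(s)) \text{ for a.e.\ } t.
\]
Since limits of measurable functions are measurable, it suffices to make the difference quotients jointly measurable and then pass to the limit along a fixed sequence $h_k=1/k$. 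Where the limit exists we set $v$ to the $\limsup$ componentwise, and elsewhere we set $v=0$.

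\textbf{Step 1 (joint measurability of the data).} Each $t\mapsto\pi_t(s)$ is continuous in $W_2$, and each $s\mapsto \pi_t(s)$ is measurable. Hence $(t,s)\mapsto\pi_t(s)$ is Carathéodory and therefore jointly measurable, because $\Ptwo(\Action)$ is Polish. Consequently $(t,s)\mapsto W_2(\pi_{t+h}(s),\pi_t(s))$ is measurable. The metric derivative $|\pi'_t|(s)=\lim_k k\,W_2(\pi_{t+1/k}(s),\pi_t(s))$, taken as a $\limsup$, is then measurable in $(t,s)$.

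\textbf{Step 2 (measurable optimal plans).} This is the main obstacle. The set-valued map $(\mu,\nu)\mapsto\Gamma_o(\mu,\nu)$ has compact, nonempty values and a closed graph in $\Ptwo\times\Ptwo\times\mathcal P_2(\Action^2)$, by stability of optimal plans (Prop.\ 7.1.3 of \cite{ambrosio2008gradient}). By the Kuratowski--Ryll-Nardzewski theorem it therefore admits a Borel selection $(\mu,\nu)\mapsto\gamma_{\mu,\nu}$. Composing with Step 1 gives a measurable $(t,s)\mapsto\gamma^h_{t,s}\in\Gamma_o(\pi_t(s),\pi_{t+h}(s))$. Disintegrating $\gamma^h_{t,s}$ with respect to its first marginal measurably in the parameters (disintegration with parameters on Polish spaces) gives the barycentric projection $\bar T^h(t,s,a)=\int b\,\mathrm{d}\gamma^h_{t,s,a}(b)$. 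This map is jointly measurable in $(t,s,a)$, and Theorem 8.3.1/Prop.\ 8.4.6 apply to it exactly as to transport maps. If that parametrized disintegration proves delicate, I would instead define $v$ through its $L^2$-projection against a countable dense family $\{\nabla\varphi_j\}$. The coefficients $t,s\mapsto\int\langle\cdot,\nabla\varphi_j\rangle\,\mathrm{d}\gamma^h_{t,s}$ are measurable, and a Gram--Schmidt procedure in $L^2(\pi_t(s))$ produces measurable coefficients, hence a measurable series.

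\textbf{Step 3 (limit and properties).} Define $v_t(s,a):=\limsup_k k(\bar T^{1/k}(t,s,a)-a)$ componentwise on the measurable set where the limit exists, and $0$ elsewhere. This $v$ is $\mathcal B(I)\otimes\mathcal B(\Sstate)\otimes\mathcal B(\Action)$-measurable. For each fixed $s$, Prop.\ 8.4.6 of \cite{ambrosio2008gradient} gives $L^2(\pi_t(s))$-convergence of the quotients to the tangent field for a.e.\ $t$. Extracting a $\pi_t(s)$-a.e.\ convergent subsequence requires care: a subsequence that depends on $(t,s)$ spoils measurability. I would handle this by taking $h_k$ along a fixed, rapidly decreasing sequence and using the a.e.\ differentiability of $AC^2$ curves, or alternatively by working with Cesàro/mollified averages in $t$, which converge pointwise. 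With $v$ identified with the AGS tangent field for every $s$ and a.e.\ $t$, the distributional continuity equation (item 1) and the norm bound $\|v_t(s)\|_{L^2(\pi_t(s))}\le|\pi_t'|(s)$ (item 2) follow directly from Theorem 8.3.1.
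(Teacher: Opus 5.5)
The genuine gap is in Step 3, where you pass from $L^2$ convergence to a pointwise-defined field. Steps 1--2 are sound in outline. Borel selections of optimal plans are standard. Since the limit plan in Prop.~8.4.6 of \cite{ambrosio2008gradient} is induced by $v_t(s,\cdot)$ and the first marginal is fixed, the rescaled plans converge strongly. Jensen's inequality then gives $f_k(t,s,\cdot):=k\big(\bar T^{1/k}(t,s,\cdot)-\mathrm{id}\big)\to v_t(s,\cdot)$ in $L^2(\pi_t(s))$ for a.e.\ $t$.

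Convergence in $L^2(\pi_t(s))$ along $h_k=1/k$ does not give $\pi_t(s)$-a.e.\ convergence along that sequence. So your componentwise $\limsup$ (set to $0$ where the limit fails) can differ from the tangent field on a set of positive $\pi_t(s)$-measure. In that case neither the continuity equation nor the norm bound is inherited. None of the remedies you sketch closes this:
\begin{itemize}
\item A fixed, rapidly decreasing sequence $h_k$ would need a rate of convergence uniform in $(t,s)$, but Prop.~8.4.6 is purely qualitative.
\item Ces\`aro means of an $L^2$-convergent sequence need not converge a.e.; repeat the typewriter indicators with run lengths that double from run to run.
\item Averaging in $t$ supplies no pointwise-in-$a$ convergence mechanism either.
\end{itemize}
Your Gram--Schmidt fallback has the same defect, because that series also converges only in $L^2(\pi_t(s))$.

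The missing ingredient is a \emph{measurable} choice of subsequence. The numbers $N_{k,m}(t,s):=\|f_k(t,s,\cdot)-f_m(t,s,\cdot)\|_{L^2(\pi_t(s))}$ are jointly measurable, being integrals of a jointly measurable function against the measurable kernel $(t,s)\mapsto\pi_t(s)$. Hence $k_j(t,s):=\min\{k>k_{j-1}(t,s):\ \sup_{m\ge k}N_{k,m}(t,s)\le 2^{-j}\}$ is a measurable index. Then $f_{k_j(t,s)}(t,s,a)$ is jointly measurable and converges $\pi_t(s)$-a.e.\ whenever $(f_k(t,s,\cdot))_k$ is $L^2$-Cauchy, and its componentwise $\limsup$ is the required $v$.

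With this added, your construction is a valid alternative to the paper's, and a more explicit one, since $v$ arises as a limit of optimal-transport displacements. The price is the measurable selection of optimal plans and a parametrized disintegration. The paper avoids optimal plans altogether:
\begin{itemize}
\item It works with the space-time measures $E^s=v\,\mu^s$.
\item It gets measurability of $s\mapsto E^s$ from the Borel graph theorem: the continuity equation together with the Benamou--Brenier bound singles out the minimal field.
\item It recovers a jointly measurable density by differentiation along refining partitions.
\end{itemize}
In that last step the martingale convergence theorem gives a.e.\ convergence of the whole sequence, which is exactly what your difference quotients lack.
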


\begin{proof}
The proof relies on identifying the vector field as the unique solution to a convex optimization problem depending measurably on parameters $(t,s)$, and applying a measurable selection theorem.
\paragraph{1. Pointwise Existence and Uniqueness.}
Fix a state $s \in \Sstate$. Since the curve $t \mapsto \pi_t(s)$ belongs to $AC^2(I; \Ptwo(\Action))$, by \cite[Theorem 8.3.1]{ambrosio2008gradient}, the metric derivative $|\pi_t'|(s)$ exists for almost every $t$ and belongs to $L^2(I)$.

The same theorem states that there exists a unique vector field $v_t(s, \cdot) \in \overline{\nabla C_c^\infty}^{L^2(\pi_t(s))}$ satisfying the continuity equation. Furthermore, this unique field is characterized by the minimality condition:
\[
    \| v_t(s, \cdot) \|_{L^2(\pi_t(s))} = |\pi_t'|(s).
\]
Any other vector field $\tilde{v}$ satisfying the continuity equation must have strictly greater norm, i.e., $\|\tilde{v}\| > |\pi_t'|(s)$. Thus, the condition $\|v\| \leq |\pi_t'|(s)$ appearing in the theorem statement effectively selects this unique minimal solution.

\paragraph{2. Measurability of the Metric Derivative.}
We first establish the global measurability of the scalar field representing the metric derivative.
The function $(t, t', s) \mapsto W_2(\pi_t(s), \pi_{t'}(s))$ is measurable because $\pi$ is jointly measurable (as it is measurable in $s$ and continuous in $t$) and $W_2$ is continuous.
We define the metric derivative field globally via the lim sup along a countable sequence:
\[
    |\pi_t'|(s) := \limsup_{n \to \infty} n \, W_2(\pi_{t+1/n}(s), \pi_t(s)).
\]
Since the lim sup of a sequence of measurable functions is always globally measurable, the map $(t, s) \mapsto |\pi_t'|(s)$ is Borel measurable on the entire product space $I \times \Sstate$. Furthermore, by the $2$-absolute continuity of the curve $t \mapsto \pi_t(s)$, for every fixed $s \in \Sstate$, this limit superior coincides with the true metric derivative for Lebesgue-almost every $t \in I$.

\paragraph{3. Definition and Measurability of the Base Measure $\mu^s$.}
Let $X = I \times \Action$. We define the base action-time finite measure $\mu^s \in \mathcal{M}_+(X)$ (the space of finite non-negative Radon measures) by $\mathrm{d}\mu^s(t, a) = \mathrm{d}\pi_t(s)(a) \mathrm{d}t$. Note that its total mass is exactly $T$.

We will establish that the map $s \mapsto \mu^s$ is Borel measurable from $\Sstate$ into $\mathcal{M}_+(X)$ equipped with the weak-* topology. The weak-* topology on $\mathcal{M}_+(X)$ is generated by the evaluation functionals $\nu \mapsto \int_X \phi \, \mathrm{d}\nu$ for any test function $\phi \in C_0(X)$. Because $C_0(X)$ is separable, the Borel $\sigma$-algebra on the measure space is entirely generated by these evaluation maps. Therefore, it suffices to show that for any continuous test function $\phi \in C_0(X)$, the scalar evaluation map $s \mapsto \int_X \phi(t, a) \, \mathrm{d}\mu^s(t, a)$ is Borel measurable. 

By the definition of $\mu^s$, this integral expands to:
\[
    s \mapsto \int_I \left( \int_\Action \phi(t, a) \, \mathrm{d}\pi_t(s)(a) \right) \mathrm{d}t.
\]
Let $g(t, s) := \int_\Action \phi(t, a) \, \mathrm{d}\pi_t(s)(a)$. For a fixed $t$, the map $s \mapsto g(t, s)$ is Borel measurable because the policy map $s \mapsto \pi_t(s)$ is assumed to be measurable. For a fixed $s$, the map $t \mapsto g(t, s)$ is continuous because the curve $t \mapsto \pi_t(s)$ is absolutely continuous ($AC^2$) with respect to the Wasserstein metric. Since $g(t, s)$ is continuous in $t$ and measurable in $s$, it is a Carathéodory function and is therefore jointly measurable in $(t, s)$. Since $\phi$ and the time interval $I$ are bounded, Fubini-Tonelli's theorem ensures that integrating $g(t, s)$ with respect to the Lebesgue measure on $t$ yields a strictly Borel measurable function of $s$. Consequently, the mapping $s \mapsto \mu^s$ is measurable from $\Sstate$ into $\mathcal{M}_+(X)$ equipped with the weak-* topology.

\paragraph{4. Measurability of the Tangent Measure via the Borel Graph Theorem.}
We seek to define a globally measurable map $s \mapsto E^s$ taking values in the space of vector-valued Radon measures $\mathcal{M}(X; \mathbb{R}^d)$ on the space-time domain $X := I \times \Action$, such that $E^s = v_{\cdot}(s, \cdot) \mu^s$.

By the Cauchy-Schwarz inequality, the total variation of the target measure $E^s$ is bounded by the integrated metric derivative:
$$\|E^s\|_{TV} = \int_0^T \int_{\Action} \|v_t(s, a)\| \, \mathrm{d}\pi_t(s)(a) \, \mathrm{d}t \leq \int_0^T |\pi_t'|(s) \, \mathrm{d}t =: M(s).$$
Since $(t, s) \mapsto |\pi_t'|(s)$ is Borel measurable and $|\pi_t'|(s) \in L^2(I) \subset L^1(I)$, the bound $M: \Sstate \to [0, \infty)$ is Borel measurable.

For any integer $N \in \mathbb{N}$, we define the measurable sub-level set $\Sstate_N := \{ s \in \Sstate \mid M(s) \leq N \}$. On each slice $\Sstate_N$, the target measure $E^s$ is strictly confined to the closed ball $B_N := \{ E \in \mathcal{M}(X; \mathbb{R}^d) \mid \|E\|_{TV} \leq N \}$. Because the predual $C_0(X; \mathbb{R}^d)$ is separable, $B_N$, equipped with the weak-* topology, is a compact Polish space.

We define the single-valued map $f_N: \Sstate_N \to B_N$ by $f_N(s) = E^s$. To prove its measurability, we characterize its graph. We apply the abstract Benamou-Brenier functional defined by \cite[Proposition 5.18)]{santambrogio2015optimal} directly to our space-time domain $X$. The functional:
$$\mathscr{B}_2(\mu^s, E) := \sup \left\{ \int_X \alpha(t,a) \, \mathrm{d}\mu^s + \int_X \beta(t,a) \cdot \mathrm{d}E \right\},$$
where the supremum is over $(\alpha, \beta) \in C_0(X)$ satisfying the pointwise constraint $\alpha + \frac{1}{2}|\beta|^2 \leq 0$, is lower semi-continuous with respect to weak-* convergence. Crucially, if $E = v \mu^s$, then $\mathscr{B}_2(\mu^s, E) = \frac{1}{2} \int_X \|v\|^2 \mathrm{d}\mu^s$.

By the uniqueness of the minimal velocity field established in Step 1, $f_N(s)$ is the \textit{unique} measure in $B_N$ satisfying two exact constraints:
\begin{itemize}
    \item \textbf{Global Continuity Equation:} $L_k(s, E) = 0$ for all $k$, using a countable dense subset $\{\phi_k\} \subset C_c^\infty((0,T) \times \Action)$, where:
    $$L_k(s, E) := \int_X \partial_t \phi_k(t, a) \, \mathrm{d}\mu^s(t, a) + \int_X \nabla_a \phi_k(t, a) \cdot \mathrm{d}E(t, a).$$
    \item \textbf{Metric Derivative Bound:} $\mathscr{B}_2(\mu^s, E) \leq B(s)$, where $B(s) := \frac{1}{2} \int_I |\pi_t'|(s)^2 \, \mathrm{d}t$.
\end{itemize}

Because any valid solution $AC^2$ to the continuity equation inherently satisfies $\|v_t\|_{L^2(\pi_t)} \geq |\pi_t'|(s)$ for almost every $t$, enforcing the integrated upper bound $\mathscr{B}_2(\mu^s, E) \leq B(s)$ strictly forces the equality $\|v_t\|_{L^2(\pi_t)} = |\pi_t'|(s)$ almost everywhere. Thus, this single integrated bound uniquely identifies the minimal tangent field.

To characterize the graph of $f_N$ as a Borel set, we establish the joint measurability of the map $(s, E) \mapsto \mathscr{B}_2(\mu^s, E)$. For any fixed pair of test functions $(\alpha, \beta)$, the map $s \mapsto \int_X \alpha \, \mathrm{d}\mu^s$ is Borel measurable (established in Step 3), and $E \mapsto \int_X \beta \cdot \mathrm{d}E$ is weakly-* continuous. Their sum is a Carathéodory function, hence jointly measurable on $\Sstate_N \times B_N$. Because the space $X = I \times \mathcal{A}$ is a locally compact and separable metric space, the Banach space $C_0(X)$ equipped with the uniform norm is separable. Therefore, the supremum in $\mathscr{B}_2$ can be restricted to a countable dense subset of test functions within $C_0(X)$. As the supremum of a countable family of jointly measurable functions, $(s, E) \mapsto \mathscr{B}_2(\mu^s, E)$ is jointly Borel measurable.

Because both $\mathscr{B}_2$ and the bound $B(s)$ are jointly measurable, the graph of $f_N$:
$$\mathrm{Gr}(f_N) = \left( \bigcap_{k} L_k^{-1}(\{0\}) \right) \cap \left\{ (s, E) \in \Sstate_N \times B_N \mid \mathscr{B}_2(\mu^s, E) \leq B(s) \right\}$$
is a Borel subset of $\Sstate_N \times B_N$. Since $f_N$ is a single-valued function with a Borel graph mapping into a Polish space, the Borel Graph Theorem \cite[Theorem 12.28]{aliprantis2006infinite} ensures that $f_N$ is strictly Borel measurable. Extending this across $\Sstate = \bigcup_{N} \Sstate_N$ proves the map $s \mapsto E^s$ is globally Borel measurable.
\paragraph{5. Joint Measurability of the Density via Differentiation of Measures.}
We have established the existence of strictly Borel measurable maps $s \mapsto E^s$ and $s \mapsto \mu^s$ into the spaces of vector and scalar Radon measures respectively, with $E^s \ll \mu^s$ for every $s$. It remains to construct the Radon-Nikodym derivative $v(t, s, a) = \frac{\mathrm{d}E^s}{\mathrm{d}\mu^s}(t, a)$ such that it is jointly Borel measurable in $(t, s, a)$.

First, we must rigorously justify the evaluation of these measures on arbitrary Borel sets. For the vector measure $E^s$, we consider its scalar components $E^{s, (i)}$ for $i \in \{1, \dots, d\}$. By the definition of the weak-* topology, the evaluation map $s \mapsto \int_X f \, \mathrm{d}E^{s, (i)}$ is Borel measurable for any $f \in C_0(X)$. For any relatively compact open set $U \subset X$, its indicator function can be written as the pointwise limit of an increasing sequence of non-negative continuous functions $f_m \nearrow \mathbf{1}_U$. By the Dominated Convergence Theorem (dominated by the constant $1$, which is integrable with respect to the total variation measure $\|E^s\|_{TV}$), we have $E^{s, (i)}(U) = \lim_{m \to \infty} \int_X f_m \, \mathrm{d}E^{s, (i)}$. Since the pointwise limit of a sequence of measurable functions is measurable, the map $s \mapsto E^{s, (i)}(U)$ is Borel measurable.

By Dynkin's $\pi-\lambda$ theorem (the Monotone Class Theorem), since the collection of sets $B$ for which $s \mapsto E^{s, (i)}(B)$ is measurable forms a $\lambda$-system containing all relatively compact open sets (which form a $\pi$-system generating the Borel $\sigma$-algebra), the map $s \mapsto E^{s, (i)}(B)$ is Borel measurable for \textit{every} Borel set $B \in \mathcal{B}(X)$. The same property holds for the positive scalar measure $\mu^s$ via the Monotone Convergence Theorem.

We can now safely construct the density explicitly using the differentiation of measures. Let $X = I \times \Action$. Because $X$ is a Polish space, its Borel $\sigma$-algebra is countably generated. Therefore, there exists a sequence of refining countable partitions $\{\mathcal{P}_n\}_{n \in \mathbb{N}}$ of $X$ consisting of Borel sets such that $\sigma(\cup_n \mathcal{P}_n) = \mathcal{B}(X)$. 

For each $n$, enumerate the elements of the partition as $\mathcal{P}_n = \{P_{n,k}\}_{k=1}^\infty$. We define the sequence of approximate densities $v_n: \Sstate \times X \to \mathbb{R}^d$ explicitly as:
\[
    v_n(s, x) := \sum_{k=1}^\infty \mathbf{1}_{P_{n,k}}(x) \cdot F_{n,k}(s),
\]
where $F_{n,k}: \Sstate \to \mathbb{R}^d$ is defined by:
\[
    F_{n,k}(s) := \begin{cases} 
      \frac{E^s(P_{n,k})}{\mu^s(P_{n,k})} & \text{if } \mu^s(P_{n,k}) > 0, \\
      0 & \text{otherwise.}
   \end{cases}
\]

Because the evaluation maps $s \mapsto E^s(P_{n,k})$ and $s \mapsto \mu^s(P_{n,k})$ are Borel measurable, the vector-valued function $F_{n,k}(s)$ is Borel measurable in $s$. Viewed as functions on the product space $\Sstate \times X$, the map $(s, x) \mapsto F_{n,k}(s)$ is jointly Borel measurable, and the indicator function $(s, x) \mapsto \mathbf{1}_{P_{n,k}}(x)$ is jointly Borel measurable. Because $v_n$ is a countable sum of jointly measurable functions, $v_n$ itself is jointly Borel measurable on $\Sstate \times X$.

By the Martingale Convergence Theorem \cite[Theorem 13.33]{koralov2007theory} applied to the differentiation of measures along refining partitions, the sequence of approximate densities $v_n$ converges $\mu^s$-almost everywhere to the true Radon-Nikodym derivative. To ensure the final function is rigorously measurable everywhere without invoking undefined limits on vectors, we define the true density component by component using the pointwise limit superior:
\[
    v^{(i)}(t, s, a) := \limsup_{n \to \infty} v_n^{(i)}(s, (t, a)) \quad \text{for each coordinate } i \in \{1, \dots, d\}.
\]
Since the limit superior of a sequence of jointly measurable scalar functions is jointly measurable, the vector-valued map $(t, s, a) \mapsto v(t, s, a)$ is strictly Borel measurable with respect to the product $\sigma$-algebra $\mathcal{B}(I) \otimes \mathcal{B}(\Sstate) \otimes \mathcal{B}(\Action)$. For every $s \in \Sstate$, this function satisfies $E^s = v(\cdot, s, \cdot)\mu^s$, perfectly matching the minimal vector field and concluding the proof.

\end{proof}

\subsection{Riemannian Structure (Local Metric)}

Having identified the tangent space, we now define the global Riemannian metric on $\Pi$. Unlike a flat metric, our inner product depends on the frequency of state visitation, specifically the stationary distribution $\mu_\pi$ induced by the current policy for which \textbf{we assume existence and uniqueness for all policies} (The existence and uniqueness are guaranteed under some conditions by Theorems from Section \ref{sec:induced_dyn_erg}).

For two tangent vectors $\xi, \eta \in T_{\pi}\Pi$ (identified with their respective velocity fields $v^\xi$ and $v^\eta$), the local inner product is defined as the average of the local inner products, weighted by the state occupancy $\mu_\pi$:
\begin{equation}\label{eq:def_metric}
    \langle \xi, \eta \rangle_{\pi} := \int_{\mathcal{S}} \langle \xi(s), \eta(s) \rangle_{T_{\pi(s)}\mathcal{P}_{2}(\mathcal{A})} \, d\mu_{\pi}(s).
\end{equation}

The internal term $\langle \cdot, \cdot \rangle_{T_{\pi(s)}\mathcal{P}_{2}(\mathcal{A})}$ corresponds to the standard inner product on the Wasserstein manifold. Using the identification via the continuity equation established above, this is concretely expressed as:
\begin{equation}
    \langle \xi(s), \eta(s) \rangle_{T_{\pi(s)}\mathcal{P}_{2}(\mathcal{A})} = \int_{\mathcal{A}} \langle v^\xi(s, a), v^\eta(s, a) \rangle_{\mathbb{R}^{d_a}} \, d(\pi(s))(a).
\end{equation}

This metric thus combines the geometry of the action space (via local optimal transport) and the dynamics of the environment (via integration over the invariant measure $\mu_\pi$).

Strictly speaking, because the integration is with respect to $\mu_\pi$, this bilinear form initially defines a seminorm. To rigorously obtain a non-degenerate Riemannian metric, we consider the joint probability measure on states and actions $\mathrm{d}\gamma_\pi(s, a) := \mathrm{d}\pi(s)(a)\mathrm{d}\mu_\pi(s)$. The global tangent space $T_\pi \Pi$ is formally defined as the closure of the space of action-gradients of smooth potentials in $L^2(\gamma_\pi)$:
\begin{equation}
    T_\pi \Pi := \overline{ \left\{ \nabla_a \phi \mid \phi \in C_c^\infty(\Sstate \times \Action) \right\} }^{L^2(\Sstate \times \Action; \mathbb{R}^{d_a}, \gamma_\pi)}.
\end{equation}
This completion inherently identifies vector fields that are equal $\gamma_\pi$-almost everywhere, ensuring the metric is positive definite.

\subsection{Gradient of the Objective Function}

We now turn to the optimization of a standard Reinforcement Learning objective. Let $c: \Sstate \times \Action \to \mathbb{R}$ be a differentiable cost function. The goal is to minimize the expected long-term average cost:
\[
    J(\pi) := \int_{\Sstate} \left( \int_{\Action} c(s, a) \, \mathrm{d}(\pi(s))(a) \right) \, \mathrm{d}\mu_\pi(s).
\]

To compute the gradient of $J$ with respect to the Wasserstein metric structure defined previously, we must account for the coupling between the policy update and the stationary distribution. We employ \textbf{Otto calculus} to differentiate functionals over probability spaces by formally relating variations in measure to vector fields. Hence, from now on and until the end of this work, all functions are assumed to be smooth in order to perform derivations or integration by parts.

\subsubsection{Variation of the Objective}

Consider a perturbation of the policy $\pi$ along a tangent vector $\xi \in T_\pi \Pi$. The variation of the objective $J$ involves two terms: the direct change in the cost expectation and the indirect change arising from the shift in the stationary distribution $\mu_\pi$:
\begin{equation}
    \delta J(\xi) = \underbrace{\int_{\Sstate} \int_{\Action} c(s, a) \, \delta \pi(s)(\mathrm{d}a) \, \mathrm{d}\mu_\pi(s)}_{\text{Direct effect}} + \underbrace{\int_{\Sstate} C_\pi(s) \, \delta \mu_\pi(\mathrm{d}s)}_{\text{Geometric effect}},
    \label{eq:total_var}
\end{equation}
where $C_\pi(s) := \int_{\Action} c(s, a) \, \mathrm{d}(\pi(s))(a)$ is the expected cost at state $s$. The term involving $\delta \mu_\pi$ is implicit; to resolve it, we utilize the adjoint method involving the transition kernel $\mathcal{T}$.

\subsubsection{Sensitivity Analysis via the Poisson Equation}

We first establish the relationship between the variation of the stationary measure and the transition dynamics using the Poisson equation.

\begin{proposition}[Adjoint Sensitivity]\label{prop:ajoint_sensitivity}
Let $V^\pi: \Sstate \to \mathbb{R}$ be a solution to the Poisson equation:
\begin{equation}
    (I - P^\pi)V^\pi = C_\pi - J(\pi)\mathbf{1}.
    \label{eq:poisson}
\end{equation}
Then, the variation of the stationary distribution satisfies:
\[
    \langle \delta \mu_\pi, C_\pi \rangle_{\Sstate} = \langle \mu_\pi (\delta P^\pi), V^\pi \rangle_{\Sstate}.
\]
\end{proposition}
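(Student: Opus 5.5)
The plan is to differentiate the stationarity relation $\mu_\pi = \mu_\pi P^\pi$ and then pair the resulting identity against the solution $V^\pi$ of the Poisson equation \eqref{eq:poisson}, using the adjoint relation $\langle \nu P^\pi, f\rangle_\Sstate = \langle \nu, P^\pi f\rangle_\Sstate$ stated in Section \ref{sec:induced_dyn_erg}. The computation is formal, in the spirit of the Otto calculus adopted above: $\delta\mu_\pi$ and $\delta P^\pi$ denote first-order variations along the perturbation $\xi$, and we assume the invariant measure is unique for all nearby policies and depends differentiably on $\pi$.

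\textbf{Step 1: linearize stationarity.} Perturbing $\mu_\pi = \mu_\pi P^\pi$ to first order gives
\[
    \delta\mu_\pi = (\delta\mu_\pi) P^\pi + \mu_\pi(\delta P^\pi), \qquad\text{that is,}\qquad \delta\mu_\pi (I - P^\pi) = \mu_\pi(\delta P^\pi).
\]
We also record two mass facts. Since every perturbed $\mu_\pi$ is a probability measure, $\langle \delta\mu_\pi, \mathbf{1}\rangle_\Sstate = 0$. Since $P^\pi(s,\Sstate)=1$ for all policies, $(\delta P^\pi)\mathbf{1} = 0$, and hence $\langle \mu_\pi(\delta P^\pi), \mathbf{1}\rangle_\Sstate = 0$.

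\textbf{Step 2: pair with $C_\pi$ and use the Poisson equation.} Because $\delta\mu_\pi$ has zero total mass,
\[
    \langle \delta\mu_\pi, C_\pi\rangle_\Sstate = \langle \delta\mu_\pi, C_\pi - J(\pi)\mathbf{1}\rangle_\Sstate = \langle \delta\mu_\pi, (I-P^\pi)V^\pi\rangle_\Sstate .
\]
By the adjoint relation, $\langle \delta\mu_\pi, (I-P^\pi)V^\pi\rangle_\Sstate = \langle \delta\mu_\pi(I-P^\pi), V^\pi\rangle_\Sstate$. Step 1 identifies this with $\langle \mu_\pi(\delta P^\pi), V^\pi\rangle_\Sstate$, which is the claim.

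\textbf{Remark on the choice of $V^\pi$.} The Poisson solution is determined only up to an additive constant, and the right-hand side does not depend on which one is chosen. This follows from $\langle \mu_\pi(\delta P^\pi),\mathbf{1}\rangle_\Sstate = 0$, so the statement is well-posed for any solution $V^\pi$.

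\textbf{Main obstacle.} The hard part is justifying the pairings rather than the algebra. The adjoint relation was stated for $f\in L^\infty(\Sstate)$, while $V^\pi$ may be unbounded when $\Sstate=\R^{d_s}$. Applying it to the signed measure $\delta\mu_\pi$ also requires that this variation exists, for instance as the derivative of $\mu_{\pi_t}$ in a weighted total variation sense. Two routes are available:
\begin{itemize}
    \item In the compact setting, or under the Doeblin condition of Theorem \ref{th:uniqueness_doeblin}, $V^\pi$ is bounded, given by $V^\pi = \sum_{n\ge0}(P^\pi)^n (C_\pi - J(\pi)\mathbf{1})$, which converges geometrically. The pairings are then legitimate.
    \item In general, one restricts to $V^\pi$ with finite growth integrable against the perturbed measures. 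Consistent with the paper's standing smoothness assumptions, this would be stated as a hypothesis.
\end{itemize}
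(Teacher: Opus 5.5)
Your proposal is correct and matches the paper's proof step for step. You linearize $\mu_\pi P^\pi=\mu_\pi$ to get $\delta\mu_\pi(I-P^\pi)=\mu_\pi(\delta P^\pi)$, pair with $V^\pi$ through the adjoint relation, substitute the Poisson equation, and discard $\langle\delta\mu_\pi,J(\pi)\mathbf{1}\rangle_\Sstate=0$. Your observation that $\langle\mu_\pi(\delta P^\pi),\mathbf{1}\rangle_\Sstate=0$ makes the identity independent of the additive constant in $V^\pi$ is a useful point the paper leaves implicit.

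One correction to your side remark: compactness of $\Sstate$ alone does not make $\sum_{n\ge0}(P^\pi)^n(C_\pi-J(\pi)\mathbf{1})$ converge, and a periodic two-state chain is already a counterexample. That series representation needs a mixing hypothesis such as Doeblin's. Mere boundedness of $V^\pi$ is instead what the paper's Fredholm-type assumption on $P^\pi$ provides.
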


\begin{proof}
    The stationary distribution $\mu_\pi$ is the fixed point of the pushforward operator: $\mu_\pi P^\pi = \mu_\pi$. Taking the first-order variation yields:
    \[
        (\delta \mu_\pi) P^\pi + \mu_\pi (\delta P^\pi) = \delta \mu_\pi.
    \]
    We rearrange terms to group the variation of the measure $\delta \mu_\pi$:
    \[
        \delta \mu_\pi (I - P^\pi) = \mu_\pi (\delta P^\pi).
    \]
    This is an equality of signed measures. We apply both sides to the function $V^\pi$ using the duality pairing $\langle \cdot, \cdot \rangle_{\Sstate}$:
    \begin{equation}
        \langle \delta \mu_\pi (I - P^\pi), V^\pi \rangle_{\Sstate} = \langle \mu_\pi (\delta P^\pi), V^\pi \rangle_{\Sstate}.
        \label{eq:adjoint_step}
    \end{equation}
    
We use the adjoint property of the operator $(I - P^\pi)$:
    \[
        \langle \delta \mu_\pi, (I - P^\pi)V^\pi \rangle_{\Sstate} = \langle \delta \mu_\pi (I - P^\pi), V^\pi \rangle_{\Sstate}.
    \]
    Substituting Eq.~\eqref{eq:adjoint_step} into this result gives:
    \[
        \langle \delta \mu_\pi, C_\pi \rangle_{\Sstate} = \langle \mu_\pi (\delta P^\pi), V^\pi \rangle_{\Sstate}
    \]
    where we used the fact that $\langle \delta \mu_\pi, J(\pi)\mathbf{1} \rangle_{\Sstate} = 0 $ as $\mu_\pi$ is a probability measure and so $\delta \mu_\pi$ has zero mean.  
\end{proof}

\begin{remark}[Well-posedness of the Poisson Equation]
    Strictly speaking, the operator $L = I - P^\pi$ is not invertible on the full function space (e.g. the Banach space $L^\infty(\Sstate)$) because $P^\pi \mathbf{1} = \mathbf{1}$, implying that constant functions lie in its kernel. To rigorously ensure the solvability of the Poisson equation uniformly along a policy path, we make the strong but standard assumption that for any policy $\pi$, the transition operator $P^\pi$ is compact. This allows us to apply the Fredholm alternative. Furthermore, we assume that the kernel $\ker(I - P^\pi)$ is strictly reduced to constant functions for all $\pi$.

    By the Fredholm alternative, the linear equation $(I - P^\pi)V = g$ admits a bounded solution if and only if the right-hand side $g$ is orthogonal to the kernel of the adjoint operator $L^*$. The adjoint operator acts on signed measures, and its kernel consists of $\nu$ such that $\nu(I - P^\pi) = 0$, or equivalently $\nu = \nu P^\pi$. Under our assumptions, this kernel is one-dimensional and spanned by the unique invariant measure $\mu_\pi$. Consequently, the solvability condition requires:
    \[
        \langle \mu_\pi, g \rangle_{\Sstate} = \int_{\Sstate} g(s) \, \mathrm{d}\mu_\pi(s) = 0.
    \]
    Since the raw cost $C_\pi$ generally does not have zero mean, the standard Poisson equation in reinforcement learning centers the target by subtracting the average reward $J(\pi) = \langle \mu_\pi, C_\pi \rangle$. The equation becomes:
    \[
        (I - P^\pi)V^\pi = C_\pi - J(\pi)\mathbf{1}.
    \]
   Under these conditions, the operator is boundedly invertible on the subspace of mean-zero functions, determining $V^\pi$ uniquely up to an additive constant (usually fixed by enforcing $\int V^\pi \, \mathrm{d}\mu_\pi = 0$). 
\end{remark}

\subsubsection{The Q-Function Representation}

We can now express the total variation $\delta J$ solely in terms of expectations over the current stationary distribution $\mu_\pi$, eliminating the implicit term $\delta \mu_\pi$.

\begin{proposition}[Total Variation in terms of Q-function]\label{prop:TV_J}
The total variation of the objective is given by:
\[
    \delta J = \int_{\Sstate} \left( \int_{\Action} Q^\pi(s, a) \, \delta \pi(s)(\mathrm{d}a) \right) \, \mathrm{d}\mu_\pi(s),
\]
where the state-action value function (Q-function) is defined as:
\[
    Q^\pi(s, a) := c(s, a) + \int_{\Sstate} V^\pi(s') \mathcal{T}(s, a, \mathrm{d}s').
\]
\end{proposition}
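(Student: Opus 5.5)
The plan is to start from the decomposition \eqref{eq:total_var} and eliminate the implicit geometric term with Proposition \ref{prop:ajoint_sensitivity}. That proposition gives
\[
\int_{\Sstate} C_\pi(s)\,\delta\mu_\pi(\mathrm{d}s) = \langle \delta\mu_\pi, C_\pi\rangle_{\Sstate} = \langle \mu_\pi(\delta P^\pi), V^\pi\rangle_{\Sstate}.
\]
So the whole task reduces to computing $\delta P^\pi$ explicitly and recombining it with the direct effect.

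\textbf{Step 1: the kernel variation.} By definition, $P^\pi(s,B)=\int_{\Action}\mathcal{T}(s,a,B)\,\mathrm{d}(\pi(s))(a)$, and the environment kernel $\mathcal{T}$ does not depend on $\pi$. Linearity in $\pi(s)$ then gives
\[
\delta P^\pi(s,B)=\int_{\Action}\mathcal{T}(s,a,B)\,\delta\pi(s)(\mathrm{d}a).
\]
Pairing against $V^\pi$ and using the dual action defined in Section \ref{sec:induced_dyn_erg}, I obtain
\[
\langle \mu_\pi(\delta P^\pi),V^\pi\rangle_{\Sstate}=\int_{\Sstate}\int_{\Action}\Big(\int_{\Sstate}V^\pi(s')\,\mathcal{T}(s,a,\mathrm{d}s')\Big)\,\delta\pi(s)(\mathrm{d}a)\,\mathrm{d}\mu_\pi(s).
\]
The order of integration can be exchanged by Fubini. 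This is justified by the standing smoothness assumptions together with the boundedness of $V^\pi$ guaranteed by the Fredholm remark.

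\textbf{Step 2: recombination.} I add this expression to the direct effect $\int_{\Sstate}\int_{\Action}c(s,a)\,\delta\pi(s)(\mathrm{d}a)\,\mathrm{d}\mu_\pi(s)$. Both terms are integrated against the same measure $\delta\pi(s)(\mathrm{d}a)\,\mathrm{d}\mu_\pi(s)$, so they combine into the single integrand $c(s,a)+\int_{\Sstate} V^\pi\,\mathrm{d}\mathcal{T}(s,a,\cdot)=Q^\pi(s,a)$, which is exactly the claimed formula.

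\textbf{Consistency checks.} There are two points to verify.

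First, the formula should be independent of the additive constant in $V^\pi$. Adding a constant $k$ to $V^\pi$ changes $Q^\pi$ by $k$. The corresponding change in $\delta J$ is then $k\int_{\Sstate}\delta\pi(s)(\Action)\,\mathrm{d}\mu_\pi(s)=0$, because each $\delta\pi(s)$ is a variation of a probability measure and therefore has zero total mass. The same zero-mass fact is what was already used for $\delta\mu_\pi$ in Proposition \ref{prop:ajoint_sensitivity}.

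Second, I should make precise what $\delta\pi(s)$ means in terms of the tangent vector $\xi$. Formally, $\delta\pi(s)=-\mathrm{div}_a(\pi(s)v^\xi(s,\cdot))$, by the continuity equation. The statement is phrased directly in terms of $\delta\pi(s)(\mathrm{d}a)$, so I will keep it at that level. I would only note that, after integration by parts, the result reads $\int\langle\nabla_a Q^\pi,v^\xi\rangle\,\mathrm{d}\gamma_\pi$, which is what the later gradient computation needs.

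\textbf{Main obstacle.} The only delicate point is the legitimacy of the first-order variation $\delta\mu_\pi$ itself, i.e. that $\pi\mapsto\mu_\pi$ is differentiable. This is inherited from Proposition \ref{prop:ajoint_sensitivity} and the compactness and Fredholm assumptions, so I treat it as given at the formal Otto-calculus level adopted in the paper.
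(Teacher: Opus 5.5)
Your proposal is correct and follows the paper's proof: substitute the adjoint sensitivity identity into \eqref{eq:total_var}, expand $\delta C_\pi$ and $(\delta P^\pi)V^\pi$ using linearity of $\pi \mapsto P^\pi$, exchange integrals by Fubini, and recognize the combined integrand as $Q^\pi$. Your extra check, that the formula is unchanged when a constant is added to $V^\pi$ because each $\delta\pi(s)$ has zero total mass, is a worthwhile observation that the paper does not make explicitly.
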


\begin{proof}
    Substituting the result from Proposition \ref{prop:ajoint_sensitivity} into equation \eqref{eq:total_var}, we have:
    \[
        \delta J = \langle \mu_\pi, \delta C_\pi \rangle + \langle \mu_\pi, (\delta P^\pi) V^\pi \rangle = \langle \mu_\pi, \delta C_\pi + (\delta P^\pi) V^\pi \rangle.
    \]
    We must explicitly define the variations $\delta C_\pi$ and $\delta P^\pi$ in terms of the policy variation $\delta \pi$.
    
    1. \textbf{Variation of Cost:} Recall $C_\pi(s) = \int_\Action c(s, a) \pi(s)(\mathrm{d}a)$. Its variation is:
    \[
        \delta C_\pi(s) = \int_\Action c(s, a) \, \delta \pi(s)(\mathrm{d}a).
    \]
    
    2. \textbf{Variation of Kernel:} Recall $P^\pi(s, \mathrm{d}s') = \int_\Action \mathcal{T}(s, a, \mathrm{d}s') \pi(s)(\mathrm{d}a)$.
    The operator $(\delta P^\pi) V^\pi$ at state $s$ is:
    \[
        ((\delta P^\pi) V^\pi)(s) = \int_{\Sstate} V^\pi(s') \, \delta P^\pi(s, \mathrm{d}s') = \int_{\Sstate} V^\pi(s') \int_{\Action} \mathcal{T}(s, a, \mathrm{d}s') \, \delta \pi(s)(\mathrm{d}a).
    \]
    By Fubini's theorem, we swap the integrals:
    \[
        ((\delta P^\pi) V^\pi)(s) = \int_{\Action} \left( \int_{\Sstate} V^\pi(s') \mathcal{T}(s, a, \mathrm{d}s') \right) \, \delta \pi(s)(\mathrm{d}a).
    \]
    
    Summing these two terms inside the expectation $\langle \mu_\pi, \cdot \rangle$:
    \begin{align*}
        \delta C_\pi(s) + ((\delta P^\pi) V^\pi)(s) &= \int_{\Action} c(s, a) \, \delta \pi(s)(\mathrm{d}a) + \int_{\Action} \left( \int_{\Sstate} V^\pi(s') \mathcal{T}(s, a, \mathrm{d}s') \right) \, \delta \pi(s)(\mathrm{d}a) \\
        &= \int_{\Action} \left[ c(s, a) + \int_{\Sstate} V^\pi(s') \mathcal{T}(s, a, \mathrm{d}s') \right] \, \delta \pi(s)(\mathrm{d}a).
    \end{align*}
    The term in the brackets is exactly the definition of $Q^\pi(s, a)$. Thus:
    \[
        \delta J = \int_{\Sstate} \left( \int_{\Action} Q^\pi(s, a) \, \delta \pi(s)(\mathrm{d}a) \right) \, \mathrm{d}\mu_\pi(s).
    \]
\end{proof}

The defined $Q^\pi$ satisfies the Bellman equation as stated in Proposition \ref{prop:Bellman}.
\begin{proposition}[Bellman Equation for $Q^\pi$]\label{prop:Bellman}
The state-action value function $Q^\pi$ satisfies the following Bellman equation:
\[
    Q^\pi(s, a) = c(s, a)  + \int_{\Sstate} \int_{\Action} Q^\pi(s', a') \, \mathrm{d}(\pi(s'))(a') \, \mathcal{T}(s, a, \mathrm{d}s')  - J(\pi)\mathbf{1}(s).
\]
Furthermore, it is related to the value function $V^\pi$ via the consistency relation:
\[
    V^\pi(s) = \int_{\Action} Q^\pi(s, a) \, \mathrm{d}(\pi(s))(a).
\]
\end{proposition}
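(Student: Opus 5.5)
The plan is to prove the consistency relation first and then obtain the Bellman equation by substituting it into the definition of $Q^\pi$ from Proposition \ref{prop:TV_J}. The only inputs are that definition, the Poisson equation \eqref{eq:poisson}, and the definition of the induced kernel $P^\pi$. No sensitivity analysis is involved.

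\textbf{Step 1 (consistency relation).} Fix $s \in \Sstate$ and integrate $Q^\pi(s,\cdot)$ against $\pi(s)$:
\[
\int_\Action Q^\pi(s,a)\,\mathrm{d}(\pi(s))(a) = \int_\Action c(s,a)\,\mathrm{d}(\pi(s))(a) + \int_\Action\int_\Sstate V^\pi(s')\,\mathcal{T}(s,a,\mathrm{d}s')\,\mathrm{d}(\pi(s))(a).
\]
The first term is $C_\pi(s)$. Applying Fubini to the second term, exactly as in the proof of Proposition \ref{prop:TV_J}, turns it into $\int_\Sstate V^\pi(s')P^\pi(s,\mathrm{d}s') = (P^\pi V^\pi)(s)$. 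Fubini is justified because $V^\pi$ is bounded: it is the bounded solution provided by the well-posedness remark. This gives $\int_\Action Q^\pi\,\mathrm{d}\pi(s) = C_\pi(s) + P^\pi V^\pi(s)$. The Poisson equation rewrites the right-hand side as $V^\pi(s) + J(\pi)$.

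This is the main obstacle. The computation only yields $V^\pi(s) = \int_\Action Q^\pi(s,a)\,\mathrm{d}(\pi(s))(a) - J(\pi)$, so with $Q^\pi$ defined as in Proposition \ref{prop:TV_J} both claimed identities hold only up to the constant $J(\pi)$. I would resolve this by working with the centered function $\tilde Q^\pi := Q^\pi - J(\pi)$. This is harmless for the gradient: $\delta\pi(s)$ has zero mass, so $\int_\Action J(\pi)\,\delta\pi(s)(\mathrm{d}a)=0$ and $\delta J$ in Proposition \ref{prop:TV_J} is unchanged. Equivalently, the average-cost offset can be absorbed into the definition. I would state this convention explicitly. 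With it, the consistency relation $V^\pi(s)=\int_\Action \tilde Q^\pi(s,a)\,\mathrm{d}(\pi(s))(a)$ holds exactly.

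\textbf{Step 2 (Bellman equation).} Substitute the consistency relation for $V^\pi(s')$ into the definition of $Q^\pi$:
\[
Q^\pi(s,a) = c(s,a) + \int_\Sstate\int_\Action \tilde Q^\pi(s',a')\,\mathrm{d}(\pi(s'))(a')\,\mathcal{T}(s,a,\mathrm{d}s').
\]
Subtracting $J(\pi)$ from both sides gives
\[
\tilde Q^\pi(s,a) = c(s,a) + \int_\Sstate\int_\Action \tilde Q^\pi(s',a')\,\mathrm{d}(\pi(s'))(a')\,\mathcal{T}(s,a,\mathrm{d}s') - J(\pi)\mathbf{1}(s),
\]
which is the stated Bellman equation for the centered $Q$. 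Measurability and integrability of $(s',a')\mapsto \tilde Q^\pi(s',a')$ against $\mathcal{T}(s,a,\cdot)\otimes\pi(\cdot)$ follow from the boundedness of $c$ and $V^\pi$, under the paper's standing smoothness assumptions, together with the measurability of $\pi$.
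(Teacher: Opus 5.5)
Your core computation matches the paper's: integrate the definition of $Q^\pi$ against $\pi(s)$, use Fubini to recognise $C_\pi + P^\pi V^\pi$, then apply the Poisson equation \eqref{eq:poisson}. The paper's proof in fact arrives at exactly your relation $V^\pi(s)=\int_\Action Q^\pi(s,a)\,\mathrm{d}(\pi(s))(a)-J(\pi)$, in its equation \eqref{eq:v_q_relation}, even though the statement omits the $-J(\pi)$. So you are right that the displayed consistency relation is off by $J(\pi)$ for the $Q^\pi$ of Proposition \ref{prop:TV_J}. No normalisation of $V^\pi$ repairs this, because shifting $V^\pi$ by a constant $k$ shifts $Q^\pi$ by the same $k$.

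Where you go wrong is the claim that the Bellman equation also holds only up to $J(\pi)$ for the original $Q^\pi$. The paper keeps $Q^\pi$ and substitutes the offset relation for $V^\pi(s')$ into $Q^\pi(s,a)=c(s,a)+\int_\Sstate V^\pi(s')\,\mathcal{T}(s,a,\mathrm{d}s')$. Since $\mathcal{T}(s,a,\cdot)$ is a probability measure, the constant $-J(\pi)$ passes through the integral and produces exactly the $-J(\pi)\mathbf{1}(s)$ in the stated equation.

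Structurally, the map $Q\mapsto c+\int_\Sstate\int_\Action Q(s',a')\,\mathrm{d}(\pi(s'))(a')\,\mathcal{T}(s,a,\mathrm{d}s')-J(\pi)$ commutes with adding constants. Hence both $Q^\pi$ and your $\tilde Q^\pi=Q^\pi-J(\pi)$ satisfy the Bellman equation, and your recentering is needed only for the consistency relation.

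The two routes therefore differ only in bookkeeping. The paper keeps the object $Q^\pi$ used in Propositions \ref{prop:TV_J} and \ref{prop:wasserstein_grad}, and implicitly proves the consistency relation with the $-J(\pi)$ term. You change the object so that both displays hold verbatim. Your version is valid, but you should say that the Bellman equation holds for $Q^\pi$ itself, and present the shift as a correction to the consistency relation only.
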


\begin{proof}
    We start from the Poisson equation satisfied by $V^\pi$ (Eq. \ref{eq:poisson}):
    \[
        V^\pi(s) - (P^\pi V^\pi)(s) = C_\pi(s)  - J(\pi)\mathbf{1}(s).
    \]
    Rearranging terms to isolate $V^\pi(s)$:
    \[
        V^\pi(s) = C_\pi(s)  - J(\pi)\mathbf{1}(s) + (P^\pi V^\pi)(s) .
    \]
    Recalling the definitions of $C_\pi$ and the operator $P^\pi$ acting on $V^\pi$:
    \[
        C_\pi(s) = \int_{\Action} c(s, a) \, \mathrm{d}(\pi(s))(a), \quad (P^\pi V^\pi)(s) = \int_{\Action} \int_{\Sstate} V^\pi(s') \mathcal{T}(s, a, \mathrm{d}s') \, \mathrm{d}(\pi(s))(a).
    \]
    Substituting these expressions, we obtain:
    \[
        V^\pi(s) = \int_{\Action} \left( c(s, a) + \int_{\Sstate} V^\pi(s') \mathcal{T}(s, a, \mathrm{d}s') \right) \mathrm{d}(\pi(s))(a)  - J(\pi)\mathbf{1}(s) .
    \]
    The term in parentheses corresponds exactly to the definition of $Q^\pi(s, a)$ given previously. Thus, we have established the consistency relation:
    \begin{equation}
        V^\pi(s) = \int_{\Action} Q^\pi(s, a) \, \mathrm{d}(\pi(s))(a)  - J(\pi)\mathbf{1}(s).
        \label{eq:v_q_relation}
    \end{equation}
    
    To obtain the Bellman equation for $Q^\pi$, we plug this expression for $V^\pi(s')$ into the initial definition of $Q^\pi(s, a)$:
    \begin{align*}
        Q^\pi(s, a) &= c(s, a) + \int_{\Sstate} V^\pi(s') \, \mathcal{T}(s, a, \mathrm{d}s') \\
        &= c(s, a) + \int_{\Sstate} \left( \int_{\Action} Q^\pi(s', a') \, \mathrm{d}(\pi(s'))(a') \right) \mathcal{T}(s, a, \mathrm{d}s')  - J(\pi)\mathbf{1}(s) .
    \end{align*}
\end{proof}

\subsubsection{Derivation of the Riemannian Gradient}

Finally, we link this variation to the geometry of the Wasserstein space to derive the gradient using Otto calculus.

\begin{proposition}[Wasserstein Policy Gradient]\label{prop:wasserstein_grad}
    If variations $\delta \pi(s)$ are generated by a vector field $v(s, \cdot)$ via the continuity equation $\partial_t \pi + \nabla \cdot (\pi v) = 0$ supplemented with zero flux boundary condition, then the gradient of the objective with respect to the stationary-weighted Wasserstein metric is:
    \[
        \mathrm{grad}_\pi J(s, a) = \nabla_a Q^\pi(s, a).
    \]
\end{proposition}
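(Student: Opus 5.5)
The plan is to start from the first-order variation formula of Proposition \ref{prop:TV_J}, substitute the continuity equation for $\delta\pi$, and read off the gradient by Riesz representation in the metric \eqref{eq:def_metric}. Along a curve $t\mapsto\pi_t$ with $\pi_0=\pi$ and velocity field $v$, the variation in direction $\xi$ is $\delta\pi(s)=\partial_t\pi_t(s)|_{t=0}=-\nabla_a\cdot(\pi(s)v(s,\cdot))$, in the distributional sense. The state-wise distributional continuity equation established in the measurable selection theorem makes this precise. By Proposition \ref{prop:TV_J},
\[
\delta J(\xi)=-\int_{\Sstate}\int_{\Action} Q^\pi(s,a)\,\nabla_a\cdot\bigl(\pi(s)v(s,\cdot)\bigr)(\mathrm{d}a)\,\mathrm{d}\mu_\pi(s).
\]

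Next, for each fixed $s$, I will integrate by parts in $a$. On $\Action=\R^{d_a}$ the boundary term vanishes by decay, assuming $Q^\pi$ has at most quadratic growth and $\pi(s)\in\Ptwo(\Action)$. On a compact convex $\Action$ it vanishes by the zero-flux condition $\pi v\cdot n=0$ on $\partial\Action$. This gives
\[
\delta J(\xi)=\int_{\Sstate}\int_{\Action}\langle\nabla_aQ^\pi(s,a),v(s,a)\rangle\,\mathrm{d}\pi(s)(a)\,\mathrm{d}\mu_\pi(s)=\langle \nabla_aQ^\pi, v\rangle_{L^2(\gamma_\pi)}.
\]
The joint measurability of $v$ justifies the Fubini exchange, and smoothness is assumed throughout.

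Finally, I identify the gradient. By definition, $\mathrm{grad}_\pi J$ is the unique element $g\in T_\pi\Pi$ with $\delta J(\xi)=\langle g,\xi\rangle_\pi$ for all $\xi\in T_\pi\Pi$. The field $\nabla_aQ^\pi$ is an action-gradient of a smooth potential, so after a cutoff approximation it lies in the closure defining $T_\pi\Pi$. Uniqueness follows because $T_\pi\Pi$ is a closed subspace of the Hilbert space $L^2(\gamma_\pi)$. If $v$ has a non-gradient component, it is orthogonal to $T_\pi\Pi$ and therefore does not alter the pairing. Hence $\mathrm{grad}_\pi J=\nabla_aQ^\pi$.

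The main obstacle is not the algebra but the justification of the first step. The variation $\delta\mu_\pi$ hidden inside Proposition \ref{prop:TV_J} must actually be differentiable along the curve, and the Poisson solution $V^\pi$, and hence $Q^\pi$, must be regular enough in $a$ for the integration by parts. I would handle this as the paper does: take the formal Otto-calculus standpoint, invoke smoothness, and use the Fredholm assumption in the remark on well-posedness of the Poisson equation to guarantee that $V^\pi$ exists and is bounded. Differentiability of $Q^\pi$ in $a$ is then inherited from that of $c$ and $\mathcal{T}$. I would also note that the arbitrary additive constant in $V^\pi$ does not affect $\nabla_aQ^\pi$.
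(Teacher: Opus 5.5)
Your proposal is correct and follows the same route as the paper. You substitute $\delta\pi=-\nabla_a\cdot(\pi v)$ into the $Q$-function formula of Proposition~\ref{prop:TV_J}, integrate by parts in $a$ state by state, integrate against $\mu_\pi$ to recover the pairing \eqref{eq:def_metric}, and read off $\nabla_a Q^\pi$. Your extra care about the boundary term (quadratic growth of $Q^\pi$ with $\pi(s)\in\Ptwo(\Action)$, or zero flux on a compact convex $\Action$), membership of $\nabla_a Q^\pi$ in $T_\pi\Pi$, and Riesz uniqueness in $L^2(\gamma_\pi)$ makes explicit what the paper leaves implicit, without changing the argument.
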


\begin{proof}
    Consider the inner integral $I(s) = \int_{\Action} Q^\pi(s, a) \, \delta \pi(s)(\mathrm{d}a)$ from Proposition \ref{prop:TV_J}.
    In the context of Otto calculus, a variation $\delta \pi$ in the tangent space of $\Ptwo(\Action)$ is identified with a vector field $v$ such that $\delta \pi = -\nabla_a \cdot (\pi v)$ (in the distributional sense).
    Substituting this into the integral:
    \[
        I(s) = - \int_{\Action} Q^\pi(s, a) \nabla_a \cdot (\pi(s)(a) v(s, a)) \, \mathrm{d}a.
    \]
    Applying integration by parts (Green's formula) on the domain $\Action$:
    \[
        I(s) = \int_{\Action} \langle \nabla_a Q^\pi(s, a), v(s, a) \rangle_{\mathbb{R}^{d_a}} \, \pi(s)(\mathrm{d}a).
    \]
    This is precisely the $L^2(\pi(s))$ inner product between the gradient of $Q$ and the velocity field $v$:
    \[
        I(s) = \langle \nabla_a Q^\pi(s, \cdot), v(s, \cdot) \rangle_{T_{\pi(s)}\Ptwo}.
    \]
    Integrating over $\Sstate$ with respect to $\mu_\pi$, we recover the global Riemannian metric defined in Equation \eqref{eq:def_metric}:
    \[
        \delta J = \int_{\Sstate} \langle \nabla_a Q^\pi(s, \cdot), v(s, \cdot) \rangle_{T_{\pi(s)}\Ptwo} \, \mathrm{d}\mu_\pi(s) = \langle \nabla_a Q^\pi, v \rangle_{\pi}.
    \]
    By the definition of the Riemannian gradient $\delta J(v) = \langle \text{grad}_\pi J, v \rangle_\pi$, we identify:
    \[
        \text{grad}_\pi J(s, a) = \nabla_a Q^\pi(s, a).
    \]
\end{proof}

\begin{remark}
We note that a similar action-space velocity $\nabla_a Q^\pi$ was recently derived by \cite{pfau2024wasserstein} for Wasserstein Policy Optimization (WPO) in the discounted setting. Our derivation here complements their work by formally establishing this geometry in the average-cost continuous-time setting using Otto calculus.
\end{remark}

\section{Hessian Analysis and Geodesic Convexity}\label{sec:Hessian}

To investigate the convexity properties of the objective function $J(\pi)$, we perform a second-order analysis using Otto calculus. We define the Hessian of $J$ at a policy $\pi$ via the second derivative of the objective along a constant-speed geodesic in the policy space $\Pi$.

\subsection{Derivation of the Hessian}

To rigorously analyze the geometry of the policy space, we must identify its geodesics. Unlike the standard Wasserstein space, the metric on $\Pi$ is weighted by the policy-dependent stationary distribution $\mu_{\pi_t}$. Consequently, the geodesics do not simply follow state-wise Hamilton-Jacobi equations but incorporate non-local curvature terms arising from the variation of the stationary measure.

\begin{theorem}[Geodesic Equations in Policy Space]
\label{th:geodesic_equations}
Let $(\pi_t)_{t \in I}$ be a constant-speed geodesic curve in $\Pi$ parameterized by $t \in [0, 1]$. Let $v_t(s, a) = \nabla_a \phi_t(s, a)$ be the associated tangent velocity field generated by the potential $\phi_t$. The geodesic evolution is governed by the following coupled system:
\begin{align}
    \partial_t \pi_t + \nabla_a \cdot (\pi_t \nabla_a \phi_t) &= 0 \quad \text{(Continuity Equation)} \label{eq:geo_continuity} \\
    \partial_t \phi_t + \frac{1}{2} \| \nabla_a \phi_t \|^2 + \frac{\dot{\mu}_{\pi_t}}{\mu_{\pi_t}} \phi_t - U_t &= 0 \quad \text{(Modified Hamilton-Jacobi Equation)} \label{eq:geo_hj}
\end{align}
where $\dot{\mu}_{\pi_t} = \partial_t \mu_{\pi_t}$ is the Eulerian time derivative of the stationary distribution along the policy flow, and $U_t(s,a)$ is the dynamic anticipation potential defined as:
\begin{equation}
    U_t(s, a) = \int_{\Sstate} W_t(s') \mathcal{T}(s, a, \mathrm{d}s').
\end{equation}
Here, $W_t$ is the unique (mean-zero) solution to the following Poisson equation:
\begin{equation}
    (I - P^{\pi_t})W_t = \frac{1}{2} \mathcal{E}_t - K(t)\mathbf{1},
\end{equation}
with the local kinetic energy $\mathcal{E}_t(s) := \int_{\Action} \| \nabla_a \phi_t(s, a) \|^2 \mathrm{d}\pi_t(s)(a)$ and the global kinetic energy $K(t) := \int_{\Sstate} \frac{1}{2} \mathcal{E}_t(s) \mathrm{d}\mu_{\pi_t}(s)$.
\end{theorem}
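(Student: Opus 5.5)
I would obtain the geodesic system as the Euler--Lagrange equations of the kinetic action in the stationary-weighted metric \eqref{eq:def_metric}, working formally within the Otto calculus adopted in Section \ref{sec:wass_geom_optim}. In Benamou--Brenier form the action is
\[
\mathcal{L}(\pi,v)=\frac12\int_0^1\int_{\Sstate}\int_{\Action}\|v_t(s,a)\|^2\,\mathrm{d}\pi_t(s)(a)\,\mathrm{d}\mu_{\pi_t}(s)\,\mathrm{d}t ,
\]
and I would minimize it over pairs $(\pi_t,v_t)$ subject to the state-wise continuity equation \eqref{eq:geo_continuity} with zero-flux boundary condition. Constant-speed geodesics are critical points of this action. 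Introduce a Lagrange multiplier $\psi_t(s,a)$ and write the constraint weighted by $\mu_{\pi_t}$:
\[
\int_0^1\!\!\int_{\Sstate}\!\int_{\Action}\psi_t\,\bigl(\partial_t\pi_t+\nabla_a\cdot(\pi_t v_t)\bigr)\,\mathrm{d}a\,\mathrm{d}\mu_{\pi_t}(s)\,\mathrm{d}t .
\]
With this weighting, stationarity in $v$ gives $\mu_{\pi_t}\pi_t v_t=\mu_{\pi_t}\pi_t\nabla_a\psi_t$. Hence $v_t=\nabla_a\phi_t$ with $\phi_t=\psi_t$ on the support of $\gamma_{\pi_t}$, which is consistent with the definition of $T_\pi\Pi$ and already yields \eqref{eq:geo_continuity}.

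Next I would vary $\pi$ pointwise by $\delta\pi_t(s)$. This produces three groups of terms.
\begin{itemize}
\item The direct term is $\frac12\|\nabla_a\phi_t\|^2\mu_{\pi_t}$.
\item The multiplier term, after integrating by parts in $t$ and $a$, is $-(\partial_t\phi_t+\nabla_a\phi_t\cdot v_t)\mu_{\pi_t}$. The time integration by parts also differentiates the weight, producing $-\phi_t\,\dot\mu_{\pi_t}$.
\item The indirect term comes from $\delta\mu_{\pi_t}$ acting on the state-dependent integrand $\frac12\mathcal{E}_t(s)$, since the constraint part vanishes on shell.
\end{itemize}
I would handle the indirect term exactly as in Proposition \ref{prop:ajoint_sensitivity} and Proposition \ref{prop:TV_J}, replacing $C_\pi$ by $\frac12\mathcal{E}_t$ and $J(\pi)$ by $K(t)$. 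Solving the Poisson equation for $W_t$ then gives
\[
\langle\delta\mu_{\pi_t},\tfrac12\mathcal{E}_t\rangle=\langle\mu_{\pi_t}\,\delta P^{\pi_t},W_t\rangle=\int_{\Sstate}\int_{\Action}U_t(s,a)\,\delta\pi_t(s)(\mathrm{d}a)\,\mathrm{d}\mu_{\pi_t}(s).
\]
Solvability of this Poisson equation, and uniqueness of the mean-zero solution, are guaranteed by the Remark on the Fredholm alternative.

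Collecting all terms against $\delta\pi_t(s)\,\mu_{\pi_t}(s)$ and setting the coefficient to zero, up to a state-dependent constant that can be absorbed into $\phi_t$, gives
\[
\partial_t\phi_t+\|\nabla_a\phi_t\|^2-\tfrac12\|\nabla_a\phi_t\|^2+\frac{\dot\mu_{\pi_t}}{\mu_{\pi_t}}\phi_t-U_t=0,
\]
which is \eqref{eq:geo_hj}. The sign of $U_t$ depends on the sign convention chosen for the multiplier, so I would fix that convention carefully.

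I expect the main obstacle to be the bookkeeping of the $\mu_{\pi_t}$ weight. Two places need care. First, the term $\frac{\dot\mu}{\mu}\phi_t$ requires $\mu_{\pi_t}$ to have a positive density, so I would work under the standing smoothness assumption and restrict to its support. Second, one must verify that the constraint contributes nothing further through $\delta\mu$. If the weighted multiplier formulation gives trouble, I would instead put the multiplier unweighted and rescale $\phi$ afterwards, checking that the same system results.
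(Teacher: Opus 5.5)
Your proposal is correct and follows essentially the same route as the paper. Both use a least-action Lagrangian with a multiplier enforcing the state-wise continuity equation, and convert the $\delta\mu_{\pi_t}$ contribution of the kinetic term into $U_t$ through the adjoint-sensitivity argument and the Poisson equation for $W_t$. Weighting the multiplier by $\mu_{\pi_t}$ is just the paper's substitution $\psi_t=\mu_{\pi_t}\phi_t$ (with an unweighted multiplier) done upfront. Your remarks on the on-shell vanishing of the constraint's $\delta\mu$ term and on absorbing the $(t,s)$-dependent constant into $\phi_t$ are refinements the paper leaves implicit. Also, once you write $v_t=\nabla_a\phi_t$, the sign of $U_t$ no longer depends on the multiplier convention.
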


\begin{proof}
By the principle of least action, a geodesic minimizes the global kinetic energy action $\mathcal{J} = \int_0^1 K(t) \, \mathrm{d}t$. We enforce the continuity equation constraint using a state-action Lagrange multiplier $\psi_t(s, a)$. The unconstrained augmented action is:
\[
    \mathcal{L}(\pi, v, \psi) = \int_0^1 \int_{\Sstate} \left( \mu_{\pi_t}(s) \frac{1}{2} \mathcal{E}_t(s) + \int_{\Action} \psi_t(s, a) \big(\partial_t \pi_t + \nabla_a \cdot (\pi_t v_t)\big) \mathrm{d}a \right) \mathrm{d}s \, \mathrm{d}t.
\]
For mathematical convenience, since $\mu_{\pi_t}(s) > 0$ almost everywhere on its support, we apply the change of variable $\psi_t(s, a) = \mu_{\pi_t}(s) \phi_t(s, a)$.

\textbf{1. First variation with respect to velocity $v_t$:}
Perturbing the velocity field by $\delta v_t$ yields:
\[
    \delta_v \mathcal{L} = \int_0^1 \int_{\Sstate} \int_{\Action} \left[ \mu_{\pi_t} \langle v_t, \delta v_t \rangle \pi_t - \langle \nabla_a \psi_t, \delta v_t \rangle \pi_t \right] \mathrm{d}a \, \mathrm{d}s \, \mathrm{d}t = 0.
\]
This implies $\mu_{\pi_t} v_t = \nabla_a \psi_t$. Substituting $\psi_t = \mu_{\pi_t} \phi_t$ and noting that $\nabla_a \mu_{\pi_t} = 0$, we recover the standard optimal transport relation $v_t = \nabla_a \phi_t$.

\textbf{2. First variation with respect to the policy $\pi_t$:}
We apply a perturbation $\delta \pi_t$. Crucially, this induces a variation of the stationary measure $\delta \mu_{\pi_t}$. The variation of the Lagrangian decomposes into the constraint variation and the kinetic energy variation.

\textit{a. Constraint Term Variation:}
Because $\psi_t$ is the independent multiplier, the constraint variation solely acts on the derivatives of the policy. Integrating by parts over time and action:
\begin{align*}
    \delta_\pi(\text{Constraint}) &= \int_0^1 \int_{\Sstate} \int_{\Action} \psi_t \big( \partial_t \delta\pi_t + \nabla_a \cdot (\delta\pi_t v_t) \big) \, \mathrm{d}a \, \mathrm{d}s \, \mathrm{d}t \\
    &= - \int_0^1 \int_{\Sstate} \int_{\Action} \big( \partial_t \psi_t + \langle \nabla_a \psi_t, v_t \rangle \big) \delta\pi_t \, \mathrm{d}a \, \mathrm{d}s \, \mathrm{d}t.
\end{align*}

\textit{b. Kinetic Energy Term Variation ($\delta_\pi \mathcal{J}$):}
Using the product rule, the variation of the global kinetic energy explicitly incorporates the variation of the stationary distribution $\delta \mu_{\pi_t}$:
\[
    \delta_\pi \mathcal{J} = \underbrace{\int_0^1 \int_{\Sstate} \frac{1}{2} \mathcal{E}_t(s) \delta\mu_{\pi_t}(\mathrm{d}s) \, \mathrm{d}t}_{\text{Variation of the measure}} + \underbrace{\int_0^1 \int_{\Sstate} \mu_{\pi_t}(s) \frac{1}{2} \delta \mathcal{E}_t(s) \, \mathrm{d}s \, \mathrm{d}t}_{\text{Local variation}}.
\]
By Proposition \ref{prop:ajoint_sensitivity}, the variation of the measure is mapped to the variation of the transition kernel via the kinetic value function $W_t$:
\[
    \langle \delta \mu_{\pi_t}, \frac{1}{2} \mathcal{E}_t \rangle = \langle \mu_{\pi_t} \delta P^{\pi_t}, W_t \rangle = \int_{\Sstate} \mu_{\pi_t}(s) \int_{\Action} \left[ \int_{\Sstate} W_t(s') \mathcal{T}(s, a, \mathrm{d}s') \right] \delta\pi_t(s)(\mathrm{d}a) \, \mathrm{d}s.
\]
The bracketed term is exactly $U_t(s,a)$. The local variation of $\mathcal{E}_t(s)$ simply yields $\frac{1}{2}\|v_t\|^2 \delta\pi_t$. Summing these components:
\[
    \delta_\pi \mathcal{J} = \int_0^1 \int_{\Sstate} \int_{\Action} \mu_{\pi_t}(s) \left[ U_t(s,a) + \frac{1}{2}\|v_t(s,a)\|^2 \right] \delta\pi_t(s)(\mathrm{d}a) \, \mathrm{d}s \, \mathrm{d}t.
\]

\textbf{3. Synthesis and Equation Identification:}
Setting the total variation $\delta_\pi \mathcal{L} = 0$ yields the pointwise equality:
\[
    - \partial_t \psi_t - \langle \nabla_a \psi_t, v_t \rangle + \mu_{\pi_t} \left( \frac{1}{2}\|v_t\|^2 + U_t \right) = 0.
\]
Finally, we substitute back $\psi_t = \mu_{\pi_t} \phi_t$. Using the product rule, $\partial_t (\mu_{\pi_t} \phi_t) = \dot{\mu}_{\pi_t} \phi_t + \mu_{\pi_t} \partial_t \phi_t$, and $\nabla_a (\mu_{\pi_t} \phi_t) = \mu_{\pi_t} \nabla_a \phi_t$:
\[
    - \dot{\mu}_{\pi_t} \phi_t - \mu_{\pi_t} \partial_t \phi_t - \mu_{\pi_t} \langle \nabla_a \phi_t, v_t \rangle + \mu_{\pi_t} \left( \frac{1}{2}\|v_t\|^2 + U_t \right) = 0.
\]
Since $v_t = \nabla_a \phi_t$, the term $\langle \nabla_a \phi_t, v_t \rangle$ evaluates exactly to $\|v_t\|^2$. Dividing the entire equation by $\mu_{\pi_t} > 0$ and isolating $\partial_t \phi_t$, we obtain the modified Hamilton-Jacobi equation \eqref{eq:geo_hj}.
\end{proof}

The Riemannian Hessian $\text{Hess}_\pi J(\xi, \xi)$ is defined as the second time derivative of the objective evaluated at $t=0$:
\[
    \text{Hess}_\pi J(\xi, \xi) := \left. \frac{d^2}{dt^2} J(\pi_t) \right|_{t=0}.
\]

Recall from Proposition \ref{prop:wasserstein_grad} that the first derivative is given by the pairing of the gradient and the velocity:
\[
    \frac{d}{dt} J(\pi_t) = \langle \text{grad}_{\pi_t} J, \xi_t \rangle_{\pi_t} = \int_{\Sstate} \int_{\Action} \langle \nabla_a Q^{\pi_t}(s, a), v_t(s, a) \rangle \, \mathrm{d}\pi_t(s)(a) \, \mathrm{d}\mu_{\pi_t}(s).
\]

Differentiating this expression with respect to $t$ requires applying the product rule to the integral structure. This yields two distinct terms corresponding to the variation of the stationary measure and the variation of the inner expectation:
\begin{equation}
    \frac{d^2 J}{dt^2} = \underbrace{\int_{\Sstate} \left( \int_{\Action} \langle \nabla_a Q^{\pi}, v_t \rangle \, \mathrm{d}\pi(s) \right) \, \frac{d}{dt}\mu_{\pi_t}(\mathrm{d}s)}_{T_1 (\text{Measure Variation})} 
    + \underbrace{\int_{\Sstate} \frac{d}{dt} \left( \int_{\Action} \langle \nabla_a Q^{\pi_t}, v_t \rangle \, \mathrm{d}\pi_t(s) \right) \, \mathrm{d}\mu_\pi(s)}_{T_2 (\text{Inner Variation})}.
\end{equation}

The Riemannian Hessian $\text{Hess}_\pi J(\xi, \xi)$ is defined as the second time derivative of the objective evaluated at $t=0$:
\[
    \text{Hess}_\pi J(\xi, \xi) := \left. \frac{d^2}{dt^2} J(\pi_t) \right|_{t=0}.
\]

\paragraph{Notation for Evaluation at $t=0$.} Throughout the remainder of this section, since we evaluate these derivatives at the origin of the geodesic where $\pi_0 = \pi$, we adopt the subscript $0$ to denote quantities evaluated at $t=0$. Specifically:
\begin{itemize}
    \item $v_0 := v_t|_{t=0}$ is the velocity field generating the tangent vector $\xi$.
    \item $U_0 := U_t|_{t=0}$ is the dynamic anticipation potential at the base policy.
    \item $\mu_{\pi_0}$ trivially reduces to the stationary distribution $\mu_\pi$.
\end{itemize}

\subsection{Analysis of the Inner Variation ($T_2$)}

Here we express the term $T_2$ in a more precise way.

\begin{proposition}[Computation of Inner Variation $T_2$]\label{prop:T_2term}
Let $T_2$ be the component of the Hessian arising from the variation of the inner product along the geodesic in the action space:
\[
    T_2 := \int_{\Sstate}  \left. \frac{d}{dt} \right|_{t=0} \left( \int_{\Action} \langle \nabla_a Q^{\pi_t}(s, a), v_t(s, a) \rangle \, \mathrm{d}\pi_t(s)(a) \right) \mathrm{d}\mu_\pi(s).
\]
Assuming that $(\pi_t)_{t}$ is a constant-speed geodesic in the policy space $\Pi$ generated by the vector field $v_t$ (as defined in Theorem \ref{th:geodesic_equations}), $T_2$ is given by:
\[
    T_2 = \int_{\Sstate} \int_{\Action} \left[ \text{Hess}_a Q^\pi(s, a)(v_0, v_0) + \langle \nabla_a \dot{Q}^\pi, v_0 \rangle + \langle \nabla_a Q^\pi, \nabla_a U_0 \rangle \right] \mathrm{d}\pi(s)(a) \, \mathrm{d}\mu_\pi(s) - T_1,
\]
where $\dot{Q}^\pi = \left. \partial_t Q^{\pi_t} \right|_{t=0}$, $U_0$ is the dynamic anticipation potential, and $T_1$ is the measure variation component defined previously.
\end{proposition}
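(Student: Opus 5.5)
Fix a state $s$ and differentiate the inner integral $I_s(t):=\int_{\Action}\langle \nabla_a Q^{\pi_t},v_t\rangle\,\mathrm{d}\pi_t(s)$ at $t=0$. The plan is to apply Otto's rule for differentiating an integral against a measure transported by the continuity equation \eqref{eq:geo_continuity}. For a smooth test function $F_t(s,a)$, zero flux at the boundary and one integration by parts in $a$ give
\[
\frac{d}{dt}\int_{\Action} F_t\,\mathrm{d}\pi_t(s)=\int_{\Action}\big(\partial_t F_t+\langle\nabla_a F_t,v_t\rangle\big)\,\mathrm{d}\pi_t(s).
\]
I will take $F_t=\langle\nabla_a Q^{\pi_t},v_t\rangle$. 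The result then splits into three pieces:
\begin{itemize}
\item the $\partial_t Q$ term, which gives $\langle\nabla_a\dot Q^\pi,v_0\rangle$;
\item the transport part acting on $\nabla_a Q$, which gives $\langle \mathrm{Hess}_a Q^\pi\, v_0,v_0\rangle$;
\item a remaining piece $\langle \nabla_a Q^\pi,\partial_t v_t+(\nabla_a v_t)v_t\rangle$, i.e. the gradient paired with the material acceleration of the velocity field.
\end{itemize}
This last piece carries the geometry.

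To compute the acceleration, I will take $\nabla_a$ of the modified Hamilton--Jacobi equation \eqref{eq:geo_hj} in Theorem \ref{th:geodesic_equations}. Since $v_t=\nabla_a\phi_t$ is a gradient, $\nabla_a\tfrac12\|\nabla_a\phi_t\|^2=(\nabla_a v_t)v_t$. The stationary distribution $\mu_{\pi_t}$ does not depend on $a$. Hence
\[
\partial_t v_t+(\nabla_a v_t)v_t=\nabla_a U_t-\frac{\dot\mu_{\pi_t}}{\mu_{\pi_t}}\nabla_a\phi_t .
\]
At $t=0$, the $\nabla_a U_0$ part produces the term $\langle\nabla_a Q^\pi,\nabla_a U_0\rangle$ in the statement. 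The remaining part equals $-\frac{\dot\mu_\pi(s)}{\mu_\pi(s)}\langle\nabla_a Q^\pi,v_0\rangle$, and I still have to integrate it against $\pi(s)$ and then against $\mu_\pi$.

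Integrating over $\mu_\pi(\mathrm{d}s)$, the factor $\mu_\pi$ cancels. What is left is $-\int_{\Sstate}\big(\int_{\Action}\langle\nabla_aQ^\pi,v_0\rangle\,\mathrm{d}\pi(s)\big)\dot\mu_\pi(\mathrm{d}s)$, which is exactly $-T_1$ by the definition of $T_1$ at $t=0$. Collecting the three contributions gives the claimed formula.

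The main obstacle is the last step: the ratio $\dot\mu_\pi/\mu_\pi$ needs densities and positivity of $\mu_\pi$ on its support, as was already assumed formally in Theorem \ref{th:geodesic_equations}. I would handle this by writing the product $\mu_\pi\cdot(\dot\mu_\pi/\mu_\pi)$ directly as the signed measure $\dot\mu_\pi$, so the identification with $T_1$ holds at the level of measures. A smaller point to check is that $\dot Q^\pi$ is well defined. This comes from differentiating the Poisson equation \eqref{eq:poisson}, using the compactness/Fredholm assumption in the preceding remark on the well-posedness of the Poisson equation. Since $\dot Q^\pi$ is left unexpanded in the statement, smoothness is all that is needed here.
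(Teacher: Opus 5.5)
Your proposal is correct and follows essentially the same route as the paper. You apply the Reynolds/Otto transport identity to $f_t=\langle\nabla_a Q^{\pi_t},v_t\rangle$, split the result into the $\dot Q$, Hessian and material-acceleration terms, replace the acceleration by the $a$-gradient of the modified Hamilton--Jacobi equation, and cancel $\mu_\pi$ against $\dot\mu_\pi/\mu_\pi$ to obtain $-T_1$; reading $\mu_\pi\cdot(\dot\mu_\pi/\mu_\pi)$ directly as the signed measure $\dot\mu_\pi$ is just a slightly cleaner phrasing of the cancellation the paper carries out formally.
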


\begin{proof}
    We focus on the inner integral for a fixed state $s$. Let $I(t)$ denote this quantity. To compute the time derivative $\dot{I}$, we employ the Reynolds Transport Theorem for measures evolving according to the continuity equation $\partial_t \pi_t + \nabla_a \cdot (\pi_t v_t) = 0$. For the time-dependent observable $f_t(a) := \langle \nabla_a Q^{\pi_t}(s, a), v_t(s, a) \rangle$, the derivative is:
    \begin{equation}
        \frac{d}{dt} \mathbb{E}_{\pi_t}[f_t] = \mathbb{E}_{\pi_t} \left[ \frac{D f_t}{Dt} \right] = \mathbb{E}_{\pi_t} \left[ \partial_t f_t + \langle \nabla_a f_t, v_t \rangle \right].
        \label{eq:transport_theorem}
    \end{equation}

    \textbf{1. The Partial and Convective Derivatives:}
    Applying the product rule to the inner product, the partial time derivative is:
    \begin{equation}
        \partial_t f_t = \langle \nabla_a \dot{Q}^{\pi_t}, v_t \rangle + \langle \nabla_a Q^{\pi_t}, \partial_t v_t \rangle.
        \label{eq:partial_t}
    \end{equation}
    Using the vector calculus identity $\nabla \langle A, B \rangle = (A \cdot \nabla)B + (B \cdot \nabla)A$ (ignoring curl terms as $v_t$ is a gradient field), we expand the gradient of $f_t$:
    \[
        \nabla_a f_t = (\nabla_a Q^{\pi_t} \cdot \nabla_a) v_t + (v_t \cdot \nabla_a) \nabla_a Q^{\pi_t}.
    \]
    The second term yields $\text{Hess}_a Q^{\pi_t} \cdot v_t$. For the first term, because $v_t = \nabla_a \phi_t$, its Jacobian is symmetric, allowing the rearrangement $\langle (\nabla_a Q^{\pi_t} \cdot \nabla_a) v_t, v_t \rangle = \langle \nabla_a Q^{\pi_t}, (v_t \cdot \nabla_a) v_t \rangle$. Thus:
    \begin{equation}
        \langle \nabla_a f_t, v_t \rangle = \langle \nabla_a Q^{\pi_t}, (v_t \cdot \nabla_a) v_t \rangle + \text{Hess}_a Q^{\pi_t}(v_t, v_t).
        \label{eq:convective}
    \end{equation}

    \textbf{2. Applying the Policy Space Geodesic Condition:}
    Summing Equations \eqref{eq:partial_t} and \eqref{eq:convective}, we group the acceleration terms of the vector field:
    \[
        \frac{D f_t}{Dt} = \langle \nabla_a \dot{Q}^{\pi_t}, v_t \rangle + \text{Hess}_a Q^{\pi_t}(v_t, v_t) + \langle \nabla_a Q^{\pi_t}, \underbrace{\partial_t v_t + (v_t \cdot \nabla_a) v_t}_{\text{Acceleration}} \rangle.
    \]
    Unlike a standard Wasserstein geodesic where the acceleration vanishes, $(\pi_t)$ is a geodesic in the policy space $\Pi$. By Theorem \ref{th:geodesic_equations}, its potential satisfies the modified Hamilton-Jacobi equation:
    \[
        \partial_t \phi_t + \frac{1}{2} \|v_t\|^2 + \frac{\dot{\mu}_{\pi_t}}{\mu_{\pi_t}} \phi_t - U_t = 0.
    \]
    Taking the gradient with respect to $a$ (and noting $\nabla_a(\frac{1}{2}\|v_t\|^2) = (v_t \cdot \nabla_a)v_t$), we find the true acceleration:
    \[
        \partial_t v_t + (v_t \cdot \nabla_a) v_t = \nabla_a U_t - \frac{\dot{\mu}_{\pi_t}}{\mu_{\pi_t}} v_t.
    \]
    Substituting this back yields:
    \[
        \frac{D f_t}{Dt} = \text{Hess}_a Q^{\pi_t}(v_t, v_t) + \langle \nabla_a \dot{Q}^{\pi_t}, v_t \rangle + \langle \nabla_a Q^{\pi_t}, \nabla_a U_t \rangle - \frac{\dot{\mu}_{\pi_t}}{\mu_{\pi_t}} \langle \nabla_a Q^{\pi_t}, v_t \rangle.
    \]

    \textbf{3. Integration and Cancellation:}
    We integrate this material derivative against $\pi_t$ and substitute it into the definition of $T_2$. Evaluating at $t=0$ (where $\mu_{\pi_0} = \mu_\pi$ and $v_0 = v_t|_{t=0}$), the integral of the final term is:
    \[
        - \int_{\Sstate} \left( \int_{\Action} \frac{\dot{\mu}_{\pi_0}(s)}{\mu_{\pi_0}(s)} \langle \nabla_a Q^\pi(s, a), v_0(s, a) \rangle \, \mathrm{d}\pi(s)(a) \right) \mu_{\pi_0}(\mathrm{d}s).
    \]
    The $\mu_{\pi_0}(s)$ in the denominator cancels with the integration measure $\mu_{\pi_0}(\mathrm{d}s)$, leaving exactly the negative of the measure variation component $T_1$ defined previously:
    \[
        - \int_{\Sstate} \left( \int_{\Action} \langle \nabla_a Q^\pi(s, a), v_0(s, a) \rangle \, \mathrm{d}\pi(s)(a) \right) \dot{\mu}_{\pi_0}(\mathrm{d}s) = - T_1.
    \]
    Summing all parts, we obtain the stated result for $T_2$.
\end{proof}

\subsection{The Full Riemannian Hessian}

Before stating the full Riemannian Hessian, we introduce some necessary notations to capture the non-local dependencies of the policy variation. First, we define the \textbf{local alignment functional} $G_\pi[v]: \Sstate \to \mathbb{R}$ for a velocity field $v$:
\[
    G_\pi[v](s) := \int_{\Action} \langle \nabla_a Q^\pi(s, a), v(s, a) \rangle \, \mathrm{d}\pi(s)(a).
\]
Notice that the expectation of this functional with respect to the stationary distribution exactly recovers the first-order variation of the objective. By Proposition \ref{prop:wasserstein_grad} and the definition of our Riemannian metric, evaluating this at the initial geodesic velocity $v_0$ yields:
\[
    \langle \mu_\pi, G_\pi[v_0] \rangle = \int_{\Sstate} \int_{\Action} \langle \nabla_a Q^\pi(s, a), v_0(s, a) \rangle \, \mathrm{d}\pi(s)(a) \, \mathrm{d}\mu_\pi(s) = \langle \text{grad}_\pi J, v_0 \rangle_\pi = \dot{J}(\pi).
\]
Hence, we can define $\Psi^\pi[v]$ as the mean-zero solution to the Poisson equation driven by this local alignment:
\[
    (I - P^\pi)\Psi^\pi[v] = G_\pi[v] - \dot{J}(\pi)\mathbf{1}.
\]
Finally, to map state functions back to the state-action space, we define for any bounded function $f \in L^\infty(\Sstate)$ its associated transition projection:
\[
    Q^f(s, a) := \int_{\Sstate} f(s') \mathcal{T}(s, a, \mathrm{d}s').
\]

\begin{theorem}[Riemannian Hessian of the RL Objective]
The Hessian of the objective $J(\pi)$ at $\pi$ along the tangent vector $\xi$ (generated by $v_0$) is:
\begin{equation}
    \text{Hess}_\pi J(\xi, \xi) = \mathbb{E}_{s \sim \mu_\pi, a \sim \pi} \left[ \text{Hess}_a Q^\pi(s, a)(v_0, v_0) + \langle \nabla_a Q^{\Psi^\pi[v_0]}(s, a), v_0(s, a) \rangle + \langle \nabla_a Q^\pi(s, a), \nabla_a U_0(s, a) \rangle \right].
\end{equation}
\end{theorem}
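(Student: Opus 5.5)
The plan is to start from the decomposition $\frac{d^2 J}{dt^2} = T_1 + T_2$ and substitute the expression for $T_2$ from Proposition \ref{prop:T_2term}. The $-T_1$ produced there cancels the $T_1$ of the decomposition exactly. What remains is
\[
\text{Hess}_\pi J(\xi,\xi) = \mathbb{E}_{\mu_\pi,\pi}\!\left[\text{Hess}_a Q^\pi(v_0,v_0) + \langle \nabla_a \dot Q^\pi, v_0\rangle + \langle \nabla_a Q^\pi, \nabla_a U_0\rangle\right].
\]
The first and third terms already match the statement. The whole proof therefore reduces to identifying the middle term, namely showing
\[
\mathbb{E}_{\mu_\pi,\pi}\langle \nabla_a \dot Q^\pi, v_0\rangle = \mathbb{E}_{\mu_\pi,\pi}\langle \nabla_a Q^{\Psi^\pi[v_0]}, v_0\rangle .
\]

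To do this I would differentiate the definition $Q^{\pi_t}(s,a) = c(s,a) + \int V^{\pi_t}(s')\mathcal{T}(s,a,\mathrm{d}s')$ in $t$. Since neither $c$ nor $\mathcal{T}$ depends on the policy, this gives $\dot Q^\pi = Q^{\dot V}$, where $\dot V := \partial_t V^{\pi_t}|_{t=0}$. It then suffices to show that $\dot V$ agrees with $\Psi^\pi[v_0]$ up to something invisible after taking $\nabla_a$ and integrating. Additive constants in $\dot V$ are harmless, because $Q^{\mathbf 1}\equiv 1$ has zero action gradient.

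Next I would differentiate the Poisson equation $(I-P^{\pi_t})V^{\pi_t} = C_{\pi_t} - J(\pi_t)\mathbf 1$ at $t=0$, which gives
\[
(I-P^\pi)\dot V = \dot C_\pi + \dot P^\pi V^\pi - \dot J\,\mathbf 1 .
\]
By the computation in the proof of Proposition \ref{prop:TV_J}, the right-hand side equals $\int_\Action Q^\pi(s,a)\,\partial_t\pi_t(s)(\mathrm{d}a) - \dot J$. Using the continuity equation and the integration by parts from Proposition \ref{prop:wasserstein_grad} state-wise (zero-flux boundary conditions), this becomes $G_\pi[v_0](s) - \dot J(\pi)$. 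So $\dot V$ solves the same Poisson equation as $\Psi^\pi[v_0]$. Under the Fredholm assumptions of the well-posedness remark (compactness, $\ker(I-P^\pi)=$ constants), the two differ by a constant. This closes the identification.

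The main obstacle I expect is justifying the differentiation of $V^{\pi_t}$ in $t$, including the normalization. One must check that the right-hand side has $\mu_\pi$-mean zero, which is consistent because $\langle\mu_\pi,G_\pi[v_0]\rangle=\dot J$. One must also check that the normalization $\int V^{\pi_t}\mathrm{d}\mu_{\pi_t}=0$ only shifts $\dot V$ by the constant $-\langle\dot\mu_\pi,V^\pi\rangle$. At the formal smooth level adopted in this section, I would argue that $t\mapsto V^{\pi_t}$ is differentiable because $(I-P^{\pi_t})$ restricted to mean-zero functions is boundedly invertible and depends smoothly on $t$. Differentiability then follows from the implicit function theorem.
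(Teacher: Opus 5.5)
Your proposal is correct and follows the paper's proof essentially step for step. You cancel $T_1$ against the $-T_1$ in Proposition~\ref{prop:T_2term}, write $\nabla_a \dot Q^\pi = \nabla_a Q^{\dot V^\pi}$, differentiate the Poisson equation to get the source $G_\pi[v_0] - \dot J(\pi)\mathbf{1}$, and conclude that $\dot V^\pi = \Psi^\pi[v_0] + k$, where the constant vanishes under $\nabla_a$. Your extra remarks on the normalization shift $k = -\langle \dot\mu_\pi, V^\pi\rangle$ and on the differentiability of $t \mapsto V^{\pi_t}$ are checks the paper omits (it works only formally), and they do not change the argument.
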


\begin{proof}
By definition, the Riemannian Hessian is the second time derivative of the objective along a geodesic evaluated at $t=0$:
\[
    \text{Hess}_\pi J(\xi, \xi) = \left. \frac{d^2 J}{dt^2} \right|_{t=0} = T_1 + T_2.
\]

\textbf{Step 1: Cancellation of the Measure Variation Term}
From Proposition \ref{prop:T_2term}, the inner variation $T_2$ explicitly contains the negative of the measure variation $T_1$:
\[
    T_2 = \int_{\Sstate} \int_{\Action} \left[ \text{Hess}_a Q^\pi(s, a)(v_0, v_0) + \langle \nabla_a \dot{Q}^\pi, v_0 \rangle + \langle \nabla_a Q^\pi, \nabla_a U_0 \rangle \right] \mathrm{d}\pi(s)(a) \, \mathrm{d}\mu_\pi(s) - T_1.
\]
Summing $T_1$ and $T_2$, the $T_1$ terms cancel each other out completely:
\begin{equation}
    \text{Hess}_\pi J(\xi, \xi) = \mathbb{E}_{s \sim \mu_\pi, a \sim \pi} \left[ \text{Hess}_a Q^\pi(v_0, v_0) + \langle \nabla_a \dot{Q}^\pi, v_0 \rangle + \langle \nabla_a Q^\pi, \nabla_a U_0 \rangle \right].
    \label{eq:hessian_intermediate}
\end{equation}

\textbf{Step 2: Analysis of the Time Derivative $\dot{Q}^\pi$}
To finalize the expression, we must express $\dot{Q}^\pi$ purely in terms of spatial gradients. Recall the definition of the Q-function:
\[
    Q^\pi(s, a) = c(s, a) + \int_{\Sstate} V^\pi(s') \mathcal{T}(s, a, \mathrm{d}s').
\]
Differentiating with respect to time $t$ at $t=0$ and applying the gradient operator $\nabla_a$:
\begin{equation}
    \nabla_a \dot{Q}^\pi(s, a) = \nabla_a \int_{\Sstate} \dot{V}^\pi(s') \mathcal{T}(s, a, \mathrm{d}s') = \nabla_a Q^{\dot{V}^\pi}(s, a).
    \label{eq:grad_Q_dot}
\end{equation}

\textbf{Step 3: Characterizing $\dot{V}^\pi$ via the Poisson Equation}
We determine $\dot{V}^\pi$ by differentiating the Poisson equation $(I - P^\pi)V^\pi = C_\pi - J(\pi)$ with respect to time:
\[
    (I - P^\pi)\dot{V}^\pi = \dot{P}^\pi V^\pi + \dot{C}_\pi - \dot{J}(\pi).
\]
We simplify the source term $\dot{P}^\pi V^\pi + \dot{C}_\pi$. The variation of the cost is $\dot{C}_\pi(s) = \int_{\Action} c(s, a) \delta \pi(s)(\mathrm{d}a)$, and the variation of the operator is $(\dot{P}^\pi V^\pi)(s) = \int_{\Action} \left( \int_{\Sstate} V^\pi(s') \mathcal{T}(s, a, \mathrm{d}s') \right) \delta \pi(s)(\mathrm{d}a)$. Summing these and substituting the continuity equation $\delta \pi = -\nabla_a \cdot (\pi v_0)$:
\begin{align*}
    (\dot{P}^\pi V^\pi + \dot{C}_\pi)(s) &= \int_{\Action} Q^\pi(s, a) \delta \pi(s)(\mathrm{d}a) \\
    &= \int_{\Action} \langle \nabla_a Q^\pi(s, a), v_0(s, a) \rangle \, \mathrm{d}\pi(s)(a) \equiv G_\pi[v_0](s).
\end{align*}
Thus, $\dot{V}^\pi$ satisfies the Poisson equation:
\[
    (I - P^\pi)\dot{V}^\pi = G_\pi[v_0] - \dot{J}(\pi)\mathbf{1}.
\]
Let $\Psi^\pi[v_0]$ be the unique mean-zero solution to this equation. Since the solution to the Poisson equation is unique up to an additive constant, we have $\dot{V}^\pi(s) = \Psi^\pi[v_0](s) + k$. Since adding a constant to the value function does not affect its spatial gradient with respect to actions, Equation \eqref{eq:grad_Q_dot} becomes:
\[
    \nabla_a \dot{Q}^\pi(s, a) = \nabla_a Q^{\Psi^\pi[v_0]}(s, a).
\]

\textbf{Conclusion}
Substituting this result back into Equation \eqref{eq:hessian_intermediate} yields the final Riemannian Hessian formulation:
\[
    \text{Hess}_\pi J(\xi, \xi) = \mathbb{E}_{s \sim \mu_\pi, a \sim \pi} \left[ \text{Hess}_a Q^\pi(v_0, v_0) + \langle \nabla_a Q^{\Psi^\pi[v_0]}, v_0 \rangle + \langle \nabla_a Q^\pi, \nabla_a U_0 \rangle \right].
\]
\end{proof}

%
%

\begin{remark}[Quadratic Nature of the Anticipation Term]
At first glance, the term $\langle \nabla_a Q^\pi(s, a), \nabla_a U_0(s, a) \rangle$ might not appear quadratic in the velocity field $v_0$, which is expected for a Hessian form. However, the quadratic dependence is intrinsically embedded within the dynamic anticipation potential $U_0$. Recalling the definitions from Theorem \ref{th:geodesic_equations}:
\begin{enumerate}
    \item The local kinetic energy $\mathcal{E}_0(s) = \int_{\Action} \|v_0(s, a)\|^2 \, \mathrm{d}\pi(s)(a)$ is strictly quadratic in $v_0$.
    \item The kinetic value function $W_0$ is the solution to the linear Poisson equation $(I - P^\pi)W_0 = \frac{1}{2} \mathcal{E}_0 - K(0)\mathbf{1}$. Since the operator is linear and the source term depends linearly on $\mathcal{E}_0$, $W_0$ is inherently quadratic in $v_0$.
    \item The potential $U_0(s, a) = \int_{\Sstate} W_0(s') \mathcal{T}(s, a, \mathrm{d}s')$ is a linear integral projection of $W_0$, thereby preserving this quadratic nature.
\end{enumerate}
Consequently, $\nabla_a U_0$ is proportional to $\|v_0\|^2$. Taking the inner product with the velocity-independent gradient $\nabla_a Q^\pi$ ensures that the entire term constitutes a valid quadratic form representing the non-local geometric curvature.
\end{remark}

\subsubsection{Conditions for Geodesic Convexity}

Are there regimes where the objective $J(\pi)$ is geodesically convex? We identify one limiting case: the \textbf{Decoupled Dynamics}, where the environment transitions are independent of actions (i.e., $\mathcal{T}(s, a, \mathrm{d}s') = \mathcal{P}(s, \mathrm{d}s')$).
In this setting, the non-local curvature terms in our Hessian formulation vanish completely, and the optimization landscape is geodesically convex if and only if the immediate cost function $c(s, \cdot)$ is geodesically convex on $\Ptwo(\Action)$ for $\mu$-almost every $s$.

\begin{proof}[Proof of the claim]
We analyze the Riemannian Hessian of the objective function under the decoupled dynamics assumption to establish the convexity conditions.

\textbf{1. Vanishing of the Non-Local Curvature Terms.}
By assumption, the transition kernel $\mathcal{T}(s, a, \mathrm{d}s') = \mathcal{P}(s, \mathrm{d}s')$ is independent of the action $a$. Consequently, for any state function $f \in L^\infty(\Sstate)$, its transition projection $Q^f$ depends only on the state $s$:
\[
    Q^f(s, a) = \int_{\Sstate} f(s') \mathcal{P}(s, \mathrm{d}s').
\]
Because $Q^f$ is constant with respect to the action $a$, its spatial gradient with respect to actions is identically zero: $\nabla_a Q^f(s, a) = 0$.
We apply this property directly to the two non-local terms in the Hessian derived in Theorem 4.2:
\begin{itemize}
    \item The dynamic anticipation potential $U_0$ is defined as $U_0(s, a) = \int_{\Sstate} W_0(s') \mathcal{P}(s, \mathrm{d}s')$. Thus, $\nabla_a U_0(s, a) = 0$, and the third term of the Hessian $\langle \nabla_a Q^\pi, \nabla_a U_0 \rangle$ vanishes.
    \item The term involving the measure variation is driven by the potential $\Psi^\pi[v_0]$. Its projection $Q^{\Psi^\pi[v_0]}(s, a) = \int_{\Sstate} \Psi^\pi[v_0](s') \mathcal{P}(s, \mathrm{d}s')$ is also independent of $a$. Thus, $\nabla_a Q^{\Psi^\pi[v_0]}(s, a) = 0$, and the second term of the Hessian vanishes.
\end{itemize}

\textbf{2. Simplification of the Local Geometric Convexity.}
Because the transition kernel is independent of the policy $\pi$, the stationary distribution $\mu_\pi$ reduces to a fixed, policy-independent distribution $\mu$. Furthermore, the Q-function simplifies to:
\[
    Q^\pi(s, a) = c(s, a) + \int_{\Sstate} V^\pi(s') \mathcal{P}(s, \mathrm{d}s').
\]
Since the integral term is independent of $a$, the action-Hessian of the Q-function is exactly the action-Hessian of the immediate cost:
\[
    \text{Hess}_a Q^\pi(s, a) = \text{Hess}_a c(s, a).
\]

\textbf{Conclusion.}
Substituting these simplifications into the full Riemannian Hessian, we obtain a purely local form:
\[
    \text{Hess}_\pi J(\xi, \xi) = \mathbb{E}_{s \sim \mu, a \sim \pi} \left[ \text{Hess}_a c(s, a)(v_0, v_0) \right].
\]
\textit{Sufficiency ($\Leftarrow$):} If the immediate cost $a \mapsto c(s,a)$ is geodesically convex for $\mu$-almost every $s$, then by the classic second-order analysis of potential energies in Wasserstein spaces, $\text{Hess}_a c(s,a) \succeq 0$. This implies $\text{Hess}_\pi J(\xi, \xi) \ge 0$, meaning the objective $J(\pi)$ is geodesically convex.

\textit{Necessity ($\Rightarrow$):} Conversely, if $J(\pi)$ is geodesically convex, then the integral of the local Hessians is non-negative for all valid velocity fields $v_0$. If there existed a subset of states of positive $\mu$-measure where local geodesic convexity failed, we could construct a localized velocity field to make the global Hessian strictly negative, violating the assumption. Thus, local geodesic convexity of $c(s, \cdot)$ must hold $\mu$-almost everywhere.
\end{proof}

\section{Numerical experiments}\label{sec_num_analysis}
 Before detailing the environments and results, we wish to clarify the scope of these experiments. The following empirical evaluations are not intended to challenge highly tuned, state-of-the-art Reinforcement Learning baselines on standard complex benchmarks. Instead, this section is primarily addressed to the Optimal Transport and applied mathematics communities. Our objective is to provide a pedagogical proof-of-concept demonstrating how the abstract geometric objects derived in Sections \ref{sec:wass_geom_optim} and \ref{sec:Hessian}—such as the Wasserstein policy gradient and the Riemannian pullback metric—can be translated into tractable numerical schemes. 
\subsection{Description of environments}
For the following experiments, we will use three models. The first one is a toy environment in a scalar state space; it will be called the scalar stochastic non-linear regulator. Then, we consider the classical inverted pendulum problem where the control corresponds to the torque applied. Finally, we consider a high-dimensional environment of coupled oscillators.

\subsubsection{A scalar stochastic non-linear regulator}
This environment is defined as a scalar, stochastic, non-linear control task.

\paragraph{State and Action Spaces}
The state $s_t \in \mathcal{S} \subset \mathbb{R}$ and action $a_t \in \mathcal{A} \subset \mathbb{R}$ are bounded scalars:
\begin{equation}
    \mathcal{S} = [s_{\min}, s_{\max}], \quad \mathcal{A} = [a_{\min}, a_{\max}].
\end{equation}

\paragraph{Dynamics}
The system evolves according to a non-linear transition function subject to additive Gaussian noise:
\begin{equation}
    s_{t+1} = \text{clip}\left( \alpha s_t + \beta \sin(s_t) + \delta a_t + \xi_t, s_{\min}, s_{\max} \right),
\end{equation}
where $\xi_t \sim \mathcal{N}(0, \sigma^2)$ is independent Gaussian noise. 
\textit{Remark: The dynamics include a clamping operation to ensure the state $s_{t+1}$ remains within the bounded domain $\mathcal{S}$ at every step.}

\paragraph{Cost Function}
The objective is to minimize the quadratic cost:
\begin{equation}
    c(s_t, a_t) = s_t^2 + \lambda a_t^2.
\end{equation}

\begin{table}[h]
\centering
\begin{tabular}{|l|c|c|}
\hline
\textbf{Parameter} & \textbf{Symbol} & \textbf{Value} \\
\hline
State Coefficient & $\alpha$ & 0.5 \\
Non-linear Coefficient & $\beta$ & 0.1 \\
Control Coefficient & $\delta$ & 0.2 \\
Noise Std. Dev. & $\sigma$ & 0.2 \\
Action Penalty & $\lambda$ & 0.1 \\
State Bounds & $[s_{\min}, s_{\max}]$ & $[-3.0, 3.0]$ \\
Action Bounds & $[a_{\min}, a_{\max}]$ & $[-5.0, 5.0]$ \\
\hline
\end{tabular}
\caption{Parameters for the Exact Problem}
\end{table}

\subsubsection{Inverted Pendulum}
The Inverted Pendulum is a classic under-actuated mechanical system.

\paragraph{State and Action Spaces}
The state vector is $x_t = [\theta_t, \dot{\theta}_t]^\top$, where $\theta_t \in [-\pi, \pi]$ represents the angle from the upright position and $\dot{\theta}_t \in [-\dot{\theta}_{\max}, \dot{\theta}_{\max}]$ is the angular velocity. The control input $u_t \in [-u_{\max}, u_{\max}]$ is the torque applied at the pivot.

\paragraph{Dynamics}
The system is simulated using semi-implicit Euler integration with time step $\Delta t$. The angular acceleration includes gravity and control torque:
\begin{align}
    \dot{\theta}_{t+1} &= \text{clip}\left( \dot{\theta}_t + \Delta t \left[ -\frac{3g}{2l} \sin(\theta_t + \pi) + \frac{3}{ml^2} u_t \right], -\dot{\theta}_{\max}, \dot{\theta}_{\max} \right) \\
    \theta_{t+1} &= \text{wrap}\left( \theta_t + \Delta t \cdot \dot{\theta}_{t+1} \right)
\end{align}
where $\text{wrap}(\cdot)$ normalizes the angle to $[-\pi, \pi]$. 
\textit{Remark: Explicit clamping is applied to the angular velocity $\dot{\theta}$ to enforce physical limits, while the angle $\theta$ is wrapped to remain within $[-\pi, \pi]$.}

\paragraph{Cost Function}
The cost penalizes deviations from the upright equilibrium and control effort:
\begin{equation}
    c(\mathbf{x}_t, u_t) = \theta_t^2 + w_{\dot{\theta}} \dot{\theta}_t^2 + w_{u} u_t^2.
\end{equation}

\begin{table}[h]
\centering
\begin{tabular}{|l|c|c|}
\hline
\textbf{Parameter} & \textbf{Symbol} & \textbf{Value} \\
\hline
Gravity & $g$ & $10.0$ m/s$^2$ \\
Mass & $m$ & $1.0$ kg \\
Length & $l$ & $1.0$ m \\
Time Step & $\Delta t$ & $0.05$ s \\
Max Velocity & $\dot{\theta}_{\max}$ & $8.0$ rad/s \\
Max Torque & $u_{\max}$ & $2.0$ N$\cdot$m \\
Weights & $w_{\dot{\theta}}, w_{u}$ & $0.1, 0.01$ \\
\hline
\end{tabular}
\caption{Parameters for the Inverted Pendulum}
\end{table}

\subsubsection{High-Dimensional Coupled Oscillators}
This environment models a chain of $N$ masses connected by springs and subject to non-linear Duffing forces and inter-particle repulsion.

\paragraph{State and Action Spaces}
The state is $\mathbf{s}_t = [x_t, v_t]^\top \in \mathbb{R}^{2N}$, where $x_t, v_t \in \mathbb{R}^N$ are the positions and velocities of the $N$ masses. The system is fully actuated with control $u_t \in \mathbb{R}^N$. Constraints are applied such that $\|x_t\|_\infty \le x_{\max}$, $\|v_t\|_\infty \le v_{\max}$, and $\|u_t\|_\infty \le u_{\max}$.

\paragraph{Dynamics}
The dynamics for the $i$-th mass ($i=1\dots N$) are governed by nearest-neighbor interactions. We define boundary conditions $x_{0} = x_{N+1} = 0$. The acceleration $\ddot{x}_{i}$ is given by:
\begin{equation}
    \ddot{x}_i = F_{spring} + F_{repulsion} + F_{duffing} + F_{damp} + u_i
\end{equation}
where:
\begin{itemize}
    \item $F_{spring} = k(x_{i+1} - 2x_i + x_{i-1})$
    \item $F_{repulsion} = A \exp\left(-\frac{|x_i - x_{i-1}|}{w}\right) - A \exp\left(-\frac{|x_{i+1} - x_i|}{w}\right)$
    \item  $F_{duffing} = -\alpha x_i^3$
    \item  $F_{damp} = -\beta \dot{x}_i$
    \item $u_i$ is the control actuation.
\end{itemize}

The discrete update rule follows Euler integration:
\begin{align}
    v_{t+1} &= \text{clip}(v_t + \Delta t \cdot \ddot{x}_t, -v_{\max}, v_{\max}) \\
    x_{t+1} &= \text{clip}(x_t + \Delta t \cdot v_{t+1}, -x_{\max}, x_{\max})
\end{align}

\textit{Remark: The simulation strictly clips both the positions $\mathbf{x}$ and velocities $\mathbf{v}$ to their respective maximum values at every time step to prevent instability and enforce state space bounds.}

\paragraph{Cost Function}
The objective is to stabilize the chain at the origin:
\begin{equation}
    c(s_t, u_t) = \sum_{i=1}^N \left( x_{i,t}^2 + w_v v_{i,t}^2 + w_u u_{i,t}^2 \right).
\end{equation}

\begin{table}[h]
\centering
\begin{tabular}{|l|c|c|}
\hline
\textbf{Parameter} & \textbf{Symbol} & \textbf{Value} \\
\hline
Number of Masses & $N$ & 5 \\
Spring Stiffness & $k$ & $1.0$ \\
Duffing Coeff. & $\alpha$ & $1.0$ \\
Damping & $\beta$ & $0.1$ \\
Repulsion Amp/Width & $A, w$ & $10.0, 1.0$ \\
Time Step & $\Delta t$ & $0.05$ s \\
Constraints & $x_{\max}, v_{\max}, u_{\max}$ & $5.0$ \\
Weights & $w_v, w_u$ & $0.1, 0.001$ \\
\hline
\end{tabular}
\caption{Parameters for High-Dimensional Oscillators}

\end{table}

\subsection{Description of the numerical methods}

For the first algorithm, we apply directly the methodology developed in section \ref{sec:wass_geom_optim}. Here the policy has the form of a convex combination of Diracs called particles:

$$
\pi(s) = \frac{1}{M} \sum_{i=1}^M \delta_{a_i(s)} 
$$
where the $(a_i)$ are the parameters to optimize. The pseudo-code is given in Algorithm 1.

\begin{algorithm}[H]
\caption{Grid-based Policy Iteration with Particles }
\begin{algorithmic}[1]
\Require State Grid $\mathcal{S} = \{s_1, \dots, s_N\}$, Number of particles $M$, Smoothing parameter $\sigma$, Discount factor $\gamma$, Learning rate $\eta$.
\Ensure Value Function $V$ and Policy $\pi$ (defined by particles).

\State \textbf{Initialization:}
\State Randomly initialize action particles $\theta_{s,i}$ for each state $s \in \mathcal{S}$.

\For{iteration $k = 1, \dots, K$}
    \State \textbf{1. Transition and Cost Calculation (Gaussian Smoothing)}
    \For{each state $s \in \mathcal{S}$ and particle $i$}
        \State Simulate physical next state: $s'_{phy} = f_{dyn}(s, \theta_{s,i})$
        \State Compute immediate cost: $c_{s,i} = \text{Cost}(s, \theta_{s,i})$
        \State Compute transition probabilities (Kernel):
        \State $P(s' | s, i) \propto \exp\left(-\frac{\|s' - s'_{phy}\|^2}{2\sigma^2}\right)$ \Comment{Normalized to sum to 1}
    \EndFor

    \State \textbf{2. Aggregation (Current Policy)}
    \State Mean transition matrix: $P^\pi_{s,s'} = \frac{1}{M} \sum_{i=1}^M P(s' | s, i)$
    \State Mean cost vector: $C^\pi_s = \frac{1}{M} \sum_{i=1}^M c_{s,i}$

    \State \textbf{3. Cost Centering (Optional/Average Cost)}
    \State Compute stationary distribution $\mu$ such that $\mu^T P^\pi = \mu^T$.
    \State Global average cost: $\bar{C} = \sum_{s} \mu_s C^\pi_s$
    \State $C^\pi_s \leftarrow C^\pi_s - \bar{C}$

    \State \textbf{4. Policy Evaluation}
    \State Solve linear system for $V$:
    \State $(I - \gamma P^\pi) V = C^\pi$

    \State \textbf{5. Policy Improvement}
    \State Compute Loss $J$ (Sum of Q-values):
    \State $J = \sum_{s \in \mathcal{S}} \sum_{i=1}^M \left( c_{s,i} + \gamma \sum_{s' \in \mathcal{S}} P(s' | s, i) V_{s'} \right)$
    \State Update particles via gradient descent:
    \State $\theta_{s,i} \leftarrow \theta_{s,i} - \eta \nabla_{\theta_{s,i}} J$
    \State Project $\theta_{s,i}$ within action bounds.
    \EndFor
\State \Return $V$ and optimized particles $\theta$.
\end{algorithmic}
\end{algorithm}
The method described in Algorithm 1 works only if the state space is low-dimensional. Indeed, it requires building the probability transition matrix $P(s' \mid s, i)$, which is very computationally expensive to manipulate. To remedy this problem, we represent the policy with an optimized neural network and use an ergodic approximation of $J(\pi)$, i.e.:

$$
J(\pi) \simeq \frac{1}{H} \sum_{t=1}^H c(s_t,a_t) 
$$
where $(s_t,a_t)$ is a trajectory following the policy $\pi$ in the sense that the law of $a_t$ is $\pi(s_t)$. By automatic differentiation, it is possible to compute its gradient if the environment is \textbf{differentiable}. For the following computations, we assume a deterministic policy in the sense that:

$$
\pi_\theta(s) = \delta(a - \mu_\theta(s)).
$$

\begin{algorithm}[H]
\caption{Trajectory Optimization (Adam or Natural-Gradient)}
\begin{algorithmic}[1]
\Require Differentiable dynamics $s_{t+1} = f(s_t, a_t)$, Cost function $c(s, a)$.
\Require Optimization Method $M \in \{\text{Adam}, \text{NG}\}$.
\Require Horizon $H$, Batch size $B$, Learning rate $\eta$.
\Ensure Policy network parameters $\theta$.

\State \textbf{Initialization:} Random weights $\theta$, Start Buffer $\mathcal{B}$.

\For{iteration $k = 1, \dots, K$}
    \State \textbf{1. Start State Sampling}
    \State Sample a batch of start states $s_0 \sim \mathcal{B}$ of size $B$.
    \State Initialize cumulative loss $J = 0$.
    \State Set current state $s = s_0$.
    
    \State \textbf{2. Trajectory Simulation (Forward Pass)}
    \For{time step $t = 0, \dots, H-1$}
        \State Compute action: $a_t = \mu_\theta(s_t)$
        \State Compute immediate cost: $J \leftarrow J + \frac{c(s_t, a_t)}{H} $
        \State \textbf{Dynamics:} Compute next state via physics:
        \State $s_{t+1} = f(s_t, a_t)$ \Comment{Differentiable operation}
        \State $s_t \leftarrow s_{t+1}$
    \EndFor
    \State Fill the buffer $B_f$ of all the states $s_t$ encountered to approximate the stationary distribution.

    \State \textbf{3. Gradient Computation (Backward Pass)}
    \State Compute gradient: $\text{flat\_grad} = \nabla_\theta J$.

    \State \textbf{4. Parameter Update}
    \If{$M = \text{Adam}$}
        \State Update $\theta$ using standard Adam rule with learning rate $\eta$.
    \ElsIf{$M = \text{NG}$}
        \State Define Hessian-vector product function $f_{Ax}(v) = \mathcal{H} \cdot v$ with $ \mathcal{H} \simeq \mathbb{E}_{s \sim \mu^\pi}[\nabla_\theta \mu_\theta (\nabla \mu_\theta)^T ]$ using the buffer $B_f$.
        \State \textbf{Solve} :
        $$  \mathcal{H} v = \text{flat\_grad} $$
        \State (Solved via Conjugate Gradient using $f_{Ax}$ without explicit $ \mathcal{H}$ to save memory).
        \State Update parameters: $\theta \leftarrow \theta - \eta v$.
        \State Empty $B_f$
    \EndIf

    \State \textbf{5. Exploration Update}
    \State Add visited states $s_t$ to $\mathcal{B}$ to diversify start states.
\EndFor
\State \Return Optimized policy $\mu_\theta$.
\end{algorithmic}
\end{algorithm}

It is crucial to differentiate the optimization performed by standard algorithms (such as Adam or SGD) from the theoretical gradient flow described in our framework. Standard methods implicitly operate within the parameter space $\Theta$ equipped with a flat (Euclidean) metric. In contrast, our theory relies on a geometry defined directly on the space of policies (function space). 

To ensure consistency between the algorithm and the theory, we must equip the parameter space with the \textit{pullback metric} induced by the policy metric $g_\pi$ defined in Equation~\eqref{eq:def_metric}. To do so, we need Lemma \ref{lemma:transport_equivalence}.

\begin{lemma}[Equivalence of Parametric Variation and Transport]
\label{lemma:transport_equivalence}
Let $\mu_\theta: \mathcal{S} \to \mathcal{A}$ be a differentiable function parameterized by $\theta \in \mathbb{R}^d$. Consider the deterministic policy defined by the Dirac distribution $\pi_\theta(s) = \delta(a - \mu_\theta(s))$.

Let $\delta \theta$ be an infinitesimal perturbation of the parameters. The induced variation of the policy probability, denoted $\delta \pi$, satisfies the continuity equation in the sense of distributions:
\begin{equation}
    \delta \pi + \nabla_a \cdot (\pi_\theta v) = 0,
\end{equation}
where the velocity field $v \in \mathbb{R}^{\dim(\mathcal{A})}$ is given by the kinematic projection of the parameter update:
\begin{equation}
    v(s) = \nabla_\theta \mu_\theta(s) \delta \theta.
\end{equation}
\end{lemma}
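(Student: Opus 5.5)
The plan is to verify the continuity equation weakly, testing against a smooth test function, for each fixed state $s$. Fix $s\in\Sstate$ and a test function $\varphi\in C_c^\infty(\Action)$. The family $\epsilon\mapsto\pi_{\theta+\epsilon\delta\theta}(s)=\delta_{\mu_{\theta+\epsilon\delta\theta}(s)}$ is a curve in $\Ptwo(\Action)$. The variation $\delta\pi$ is, by definition, the distributional derivative of this curve at $\epsilon=0$, that is, the linear functional
\[
\langle \delta\pi(s),\varphi\rangle := \left.\frac{d}{d\epsilon}\right|_{\epsilon=0}\int_{\Action}\varphi(a)\,\mathrm{d}\pi_{\theta+\epsilon\delta\theta}(s)(a) = \left.\frac{d}{d\epsilon}\right|_{\epsilon=0}\varphi\big(\mu_{\theta+\epsilon\delta\theta}(s)\big).
\]
Since $\mu_\theta(s)$ is differentiable in $\theta$ and $\varphi$ is smooth, the chain rule gives
\[
\langle\delta\pi(s),\varphi\rangle=\big\langle\nabla_a\varphi(\mu_\theta(s)),\,\nabla_\theta\mu_\theta(s)\,\delta\theta\big\rangle_{\R^{d_a}}.
\]

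Next, compute the distributional divergence term for the constant (in $a$) field $v(s)=\nabla_\theta\mu_\theta(s)\delta\theta$. By definition of the distributional divergence,
\[
\langle\nabla_a\cdot(\pi_\theta v),\varphi\rangle=-\int_\Action\langle v(s),\nabla_a\varphi(a)\rangle\,\mathrm{d}\pi_\theta(s)(a)=-\langle v(s),\nabla_a\varphi(\mu_\theta(s))\rangle.
\]
This is exactly the negative of the expression obtained for $\langle\delta\pi(s),\varphi\rangle$. Summing the two identities yields $\langle\delta\pi+\nabla_a\cdot(\pi_\theta v),\varphi\rangle=0$ for every $\varphi$, which is the claimed equation in $\mathcal{D}'(\Action)$ for each $s$. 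This is the same weak formulation used in the measurable-selection theorem of Section~\ref{sec:wass_geom_optim}, with time replaced by the parameter $\epsilon$.

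The only delicate point is the meaning of $v$ as a vector field on $\Action$. The velocity is defined only $\pi_\theta(s)$-a.e., and here that is just the single point $\mu_\theta(s)$. Any extension, for instance the constant extension in $a$, gives the same distribution $\pi_\theta v$, so the statement is unambiguous as an element of $L^2(\pi_\theta(s))$. I would also remark that a single value at the support point is trivially a gradient, namely $\nabla_a\langle v(s),a\rangle$, so $v$ lies in the tangent space $T_{\pi_\theta}\Pi$. This is what makes the pullback metric $\delta\theta^\top\mathbb{E}_{\mu_\pi}[\nabla_\theta\mu_\theta^\top\nabla_\theta\mu_\theta]\delta\theta$ consistent with \eqref{eq:def_metric}.

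If $\Action$ is a compact convex set and $\mu_\theta(s)$ lies on its boundary, test functions should be taken in $C^\infty(\Action)$, consistent with the zero-flux convention of Proposition~\ref{prop:wasserstein_grad}. The computation above is unchanged, since no integration by parts over $\Action$ is actually needed for a Dirac mass.
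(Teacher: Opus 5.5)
Your proof is correct and follows the same route as the paper. Both test against $\varphi\in C_c^\infty(\mathcal{A})$, compute $\langle\delta\pi,\varphi\rangle$ by the chain rule applied to $\varphi(\mu_\theta(s))$, and evaluate the distributional divergence on the Dirac mass to get $-\langle v,\nabla_a\varphi(\mu_\theta(s))\rangle$. The only difference is presentation: you verify the equation directly for the given $v$, while the paper compares the two functionals and reads off $v$ at the support point.
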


\begin{proof}
We proceed using the method of test functions. Let $\phi \in C_c^\infty(\mathcal{A})$ be a smooth test function with compact support. We treat the variation $\delta \pi$ as a distribution acting on $\phi$ via the duality pairing $\langle \cdot, \cdot \rangle$.

\textbf{1. Parametric Variation (Left-Hand Side):}
The variation of the policy expectation with respect to the parameters is computed using the chain rule. Since $\pi_\theta$ is a Dirac mass centered at $\mu_\theta(s)$, the pairing is simply the evaluation of the test function:
\begin{align}
    \langle \delta \pi, \phi \rangle 
    &= \delta \left( \int_{\mathcal{A}} \delta(a - \mu_\theta(s)) \phi(a) \, da \right) \nonumber \\
    &= \delta \left( \phi(\mu_\theta(s)) \right) \nonumber \\
    &= \nabla_a \phi(\mu_\theta(s))^\top \left( \nabla_\theta \mu_\theta(s) \delta \theta \right). \label{eq:proof_param}
\end{align}

\textbf{2. Transport Variation (Right-Hand Side):}
We now consider the distribution defined by the divergence term $-\nabla_a \cdot (\pi_\theta v)$. By the definition of the distributional derivative, the action of the divergence is transferred to the test function (integration by parts):
\begin{align}
    \langle -\nabla_a \cdot (\pi_\theta v), \phi \rangle 
    &= \int_{\mathcal{A}} \pi_\theta(a|s) v(a)^\top \nabla_a \phi(a) \, da \nonumber \\
    &= \int_{\mathcal{A}} \delta(a - \mu_\theta(s)) v(a)^\top \nabla_a \phi(a) \, da \nonumber \\
    &= v(\mu_\theta(s))^\top \nabla_a \phi(\mu_\theta(s)). \label{eq:proof_transport}
\end{align}

\textbf{Conclusion:}
Comparing \eqref{eq:proof_param} and \eqref{eq:proof_transport}, the two formulations define the same linear functional on $C_c^\infty(\mathcal{A})$ if and only if:
$$
v(\mu_\theta(s))^\top \nabla_a \phi(\mu_\theta(s)) = \left( \nabla_\theta \mu_\theta(s) \delta \theta \right)^\top \nabla_a \phi(\mu_\theta(s)).
$$
Since this must hold for any gradient $\nabla_a \phi$, we identify the velocity field as:
$$
v(s) = \nabla_\theta \mu_\theta(s) \delta \theta.
$$
\end{proof}

Now let $\delta\theta_1, \delta\theta_2 \in T_\theta \Theta$ be two infinitesimal variations of the parameters, and let $M(\theta)$ denote the resulting pullback metric tensor. By requiring isometry between the parameter updates and the function variations, we obtain:

\begin{align*}
    \langle \delta\theta_1, \delta\theta_2 \rangle_{M(\theta)}
    &= g_\pi(\nabla_\theta \mu_\theta \cdot \delta\theta_1, \nabla_\theta \mu_\theta \cdot \delta\theta_2) \\
    &= \int_S \left( \nabla_\theta \mu_\theta(s) \delta\theta_1 \right)^T \left( \nabla_\theta \mu_\theta(s) \delta\theta_2 \right) \, d\mu^\pi(s) \\
    &= \delta\theta_1^T \left[ \int_S \left(\nabla_\theta \mu_\theta(s)\right)^T \nabla_\theta \mu_\theta(s) \, d\mu^\pi(s) \right] \delta\theta_2.
\end{align*}

Consequently, the consistent metric $M(\theta)$ on the parameter space is identified as the expected Gram matrix of the policy Jacobians:
\begin{equation}
    M(\theta) = \int_S \left(\nabla_\theta \mu_\theta(s)\right)^T \nabla_\theta \mu_\theta(s) \, d\mu^\pi(s).
\end{equation}
This justifies the link between natural gradient and our theoretical framework.

So far, we have only given approaches where we know everything about the environment; this corresponds to an optimal control problem formulation. For real reinforcement learning applications, we only require information on the cost function $c$. To deal with such constraints, we introduce a neural world model whose task is to learn the environment dynamics. Then, as in Algorithm 2, we compute the gradient of the cost through a trajectory given by the world model, which is differentiable. The pseudo-code is detailed in Algorithm 3, focusing on the Natural Gradient optimizer to align with our geometric framework.

\begin{algorithm}[H]
\caption{Model-Based RL with Differentiable World Model (VarPro)}
\begin{algorithmic}[1]
\Require Real Environment $f_{env}$, Cost function $c(s, a)$.
\Require Policy $\mu_\theta$ (Neural Net), World Model $M_\phi$ (Body $\phi$, Head $W$).
\Require Replay Buffer $\mathcal{D}$, Horizon $H$, Batch sizes $B_{pol}, B_{wm}$.

\State \textbf{Initialization:}
\State Initialize $\mu_\theta$ and $M_\phi$ weights randomly.
\State Initialize $\mathcal{D}$ with random trajectories from $f_{env}$.

\For{iteration $k = 1, \dots, K$}
    \State \textbf{1. Data Collection (Real World)}
    \State Sample start state $s_0$.
    \For{$t = 0 \dots H_{env}$}
        \State $a_t = \mu_\theta(s_t)$
        \State $s_{t+1} = f_{env}(s_t, a_t)$ \Comment{Real Dynamics}
        \State Store transition $(s_t, a_t, s_{t+1})$ in $\mathcal{D}$.
    \EndFor

    \State \textbf{2. Train World Model (VarPro / Analytic Head)}
    \For{$j = 1 \dots N_{wm\_updates}$}
        \State Sample batch $(s, a, s') \sim \mathcal{D}$.
        \State Compute targets (e.g., velocities): $y = (s' - s) / \Delta t$.
        \State Forward Body: $\Phi = \text{Body}_\phi(s, a)$.
        \State \textbf{Analytic Step:} Solve linear system for Head weights $W$:
        \State $W^* = (\Phi^T \Phi + \lambda I)^{-1} \Phi^T y$ \Comment{Ridge Regression}
        \State Update Head weights: $W \leftarrow W^*$.
        \State \textbf{Gradient Step:} Compute Loss using optimized weights:
        \State $L_{WM} = || \Phi W^* - y ||^2$
        \State Update Body: $\phi \leftarrow \phi - \eta_{wm} \nabla_\phi L_{WM}$.
    \EndFor

    \State \textbf{3. Policy Optimization (Imagination)}
    \State Sample start states $s_0$ (from buffer or random).
    \State Initialize cumulative loss $J = 0$.
    \For{$t = 0 \dots H$}
        \State $a_t = \mu_\theta(s_t)$
        \State $J \leftarrow J + \frac{c(s_t, a_t)}{H}$
        \State \textbf{Predicted Dynamics:}
        \State $\dot{s} = \text{Head}(\text{Body}_\phi(s_t, a_t))$
        \State $s_{t+1} = s_t + \dot{s} \cdot \Delta t$ \Comment{Differentiable Simulation}
    \EndFor
    \State Compute Gradient w.r.t Policy: $\nabla_\theta J$ \Comment{Flows through learned WM}
    \State Update Policy using the Natural Gradient method 
\EndFor
\State \Return $\mu_\theta$
\end{algorithmic}
\end{algorithm}

\subsection{Results}

\subsubsection{The scalar stochastic non-linear regulator}
We begin by analyzing the performance of Algorithm 1 on the scalar stochastic non-linear regulator. The learning process, illustrated in Figure \ref{fig:exact_learning}, demonstrates rapid convergence; the average cost (left) drops significantly within the first 10 iterations before stabilizing, indicating the successful identification of an optimal policy. 

Figure \ref{fig:exact_visu} details the resulting Value Function and Policy:
\begin{itemize}
    \item The converged Value Function $V^\pi$ (left) exhibits a distinct convex, parabolic shape.
    \item The Policy (right) remains approximately linear ($a \approx -Ks$) in the central region, consistent with the expected optimal control for linearized dynamics. 
\end{itemize}

\begin{figure}[h]
    \centering
    \begin{minipage}{0.45\textwidth}
        \centering
        \includegraphics[width=\linewidth]{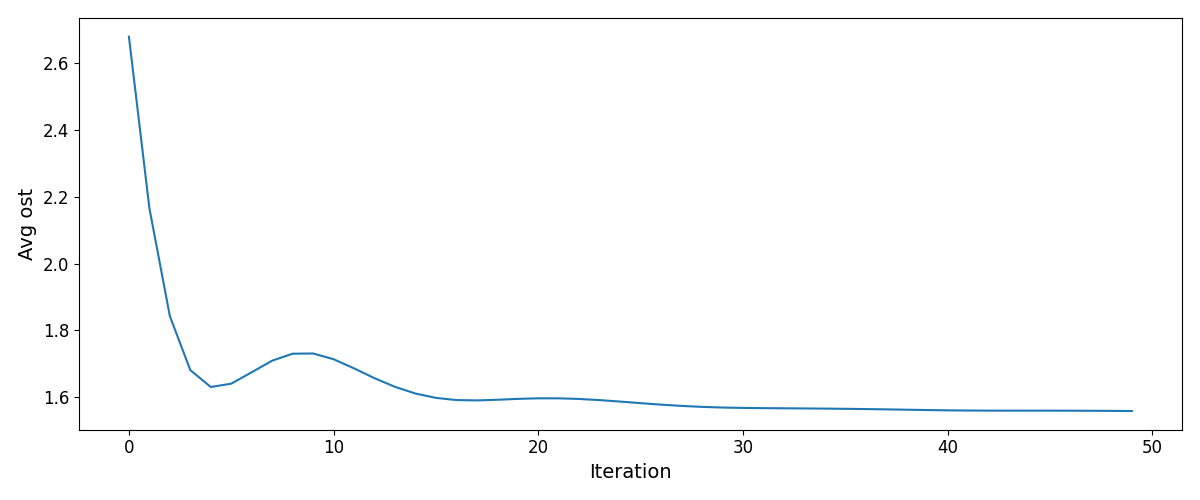}
        \caption{Convergence of the Average Cost during Policy Iteration.}
        \label{fig:exact_learning}
    \end{minipage}\hfill
    \begin{minipage}{0.5\textwidth}
        \centering
        \includegraphics[width=\linewidth]{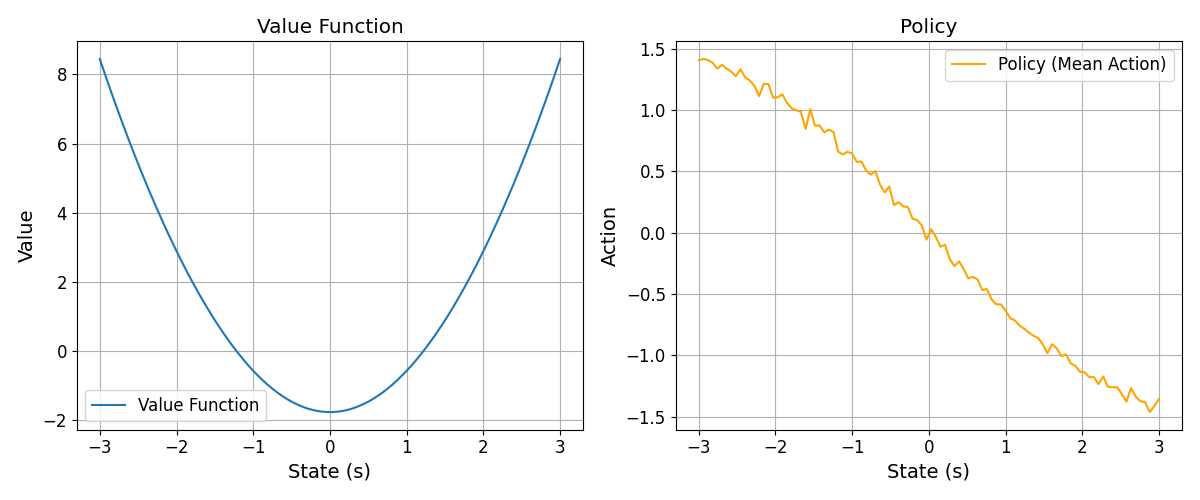}
        \caption{Learned Value Function (Parabolic) and Policy (Saturated Linear).}
        \label{fig:exact_visu}
    \end{minipage}
\end{figure}

To validate the policy's robustness, Figure \ref{fig:exact_traj} presents a simulated trajectory starting from $s_0 = 2.5$. The ``clean'' trajectory (orange dashed) confirms asymptotic stability toward the target state. The ``noisy'' trajectory (blue) further highlights the agent's resilience: despite significant Gaussian perturbations ($\sigma=0.2$), the agent actively corrects deviations (see green action curve) to maintain the state near equilibrium.

\begin{figure}[h]
    \centering
    \includegraphics[width=0.8\textwidth]{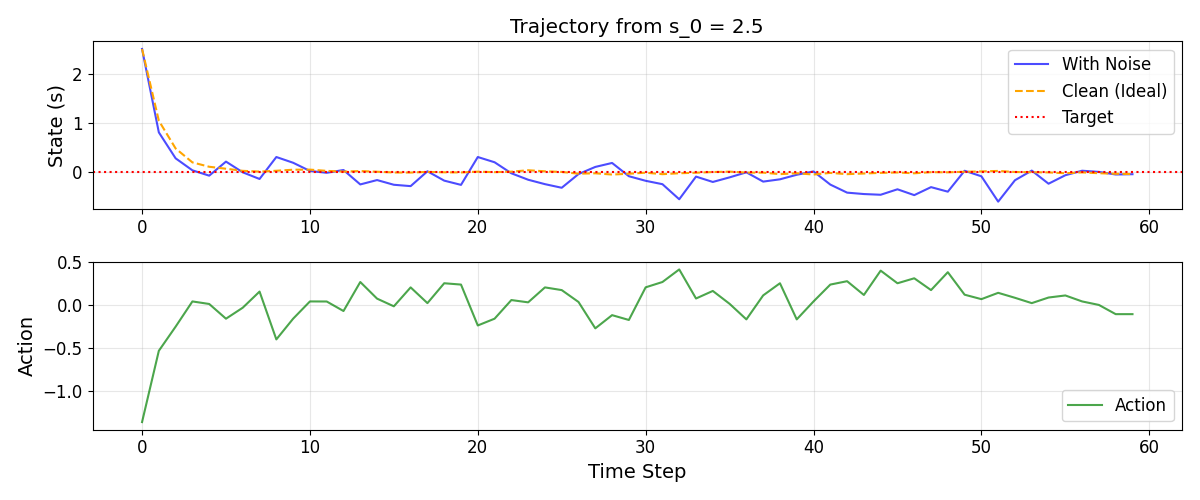}
    \caption{Simulated trajectory. Top: State evolution with and without noise. Bottom: Control actions applied by the agent.}
    \label{fig:exact_traj}
\end{figure}

\subsubsection{Inverted Pendulum}

We evaluate the Inverted Pendulum task using two distinct approaches: a discretized grid policy and a Neural Network Policy (trained with either a differentiable known environment or a learned world model).

Figure \ref{fig:pend_grid} visualizes the global solution derived by the grid agent. On the left, the value function heatmap clearly depicts a low-cost valley corresponding to the upright equilibrium ($\theta=0, \dot{\theta}=0$), forming a basin of attraction where the cost increases as the state diverges from the target. The policy heatmap (right) reveals the non-linear switching logic required to pump energy and stabilize the pendulum. Notably, the sharp transition between maximum positive (red) and maximum negative (blue) torque regions indicates a ``bang-bang'' control strategy during the energy pumping phase. The resulting swing-up behavior is confirmed by the trajectory in Figure \ref{fig:pend_grid_traj}, where the system successfully reaches the upright position in approximately 1.5 seconds.

\begin{figure}[H]
    \centering
    \includegraphics[width=0.8\textwidth]{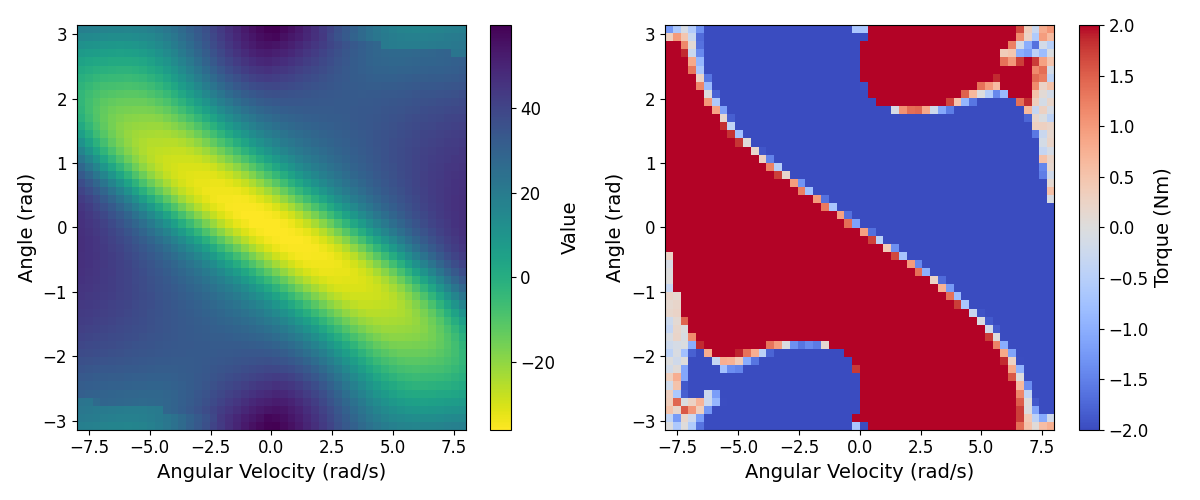}
    \caption{Grid-based results. Left: Value Function heatmap. Right: Policy mean action heatmap. }
    \label{fig:pend_grid}
\end{figure}

\begin{figure}[H]
    \centering
    \begin{minipage}{0.48\textwidth}
        \centering
        \includegraphics[width=\linewidth]{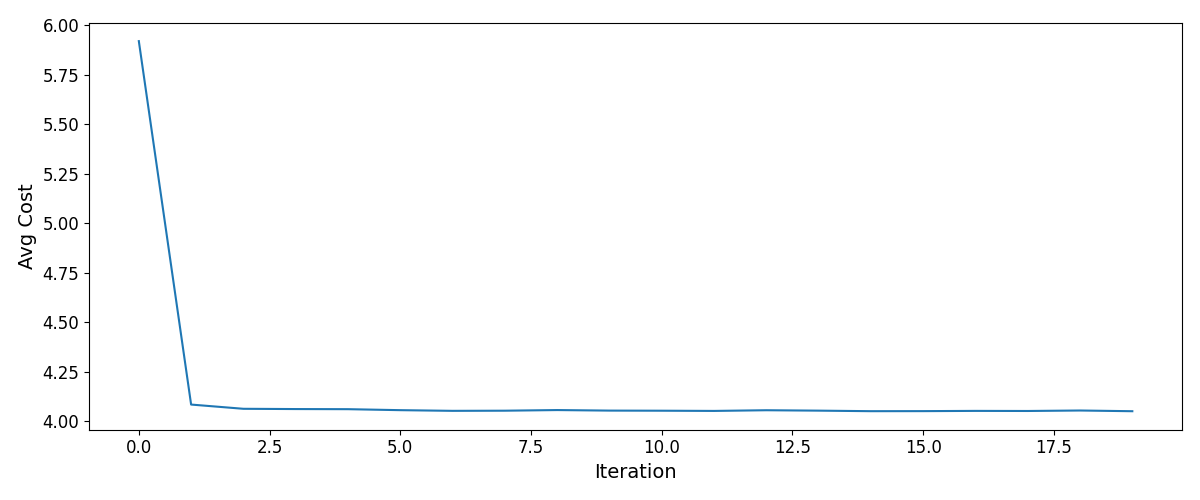}
        \caption{Convergence of Average Cost (Grid).}
    \end{minipage}\hfill
    \begin{minipage}{0.48\textwidth}
        \centering
        \includegraphics[width=\linewidth]{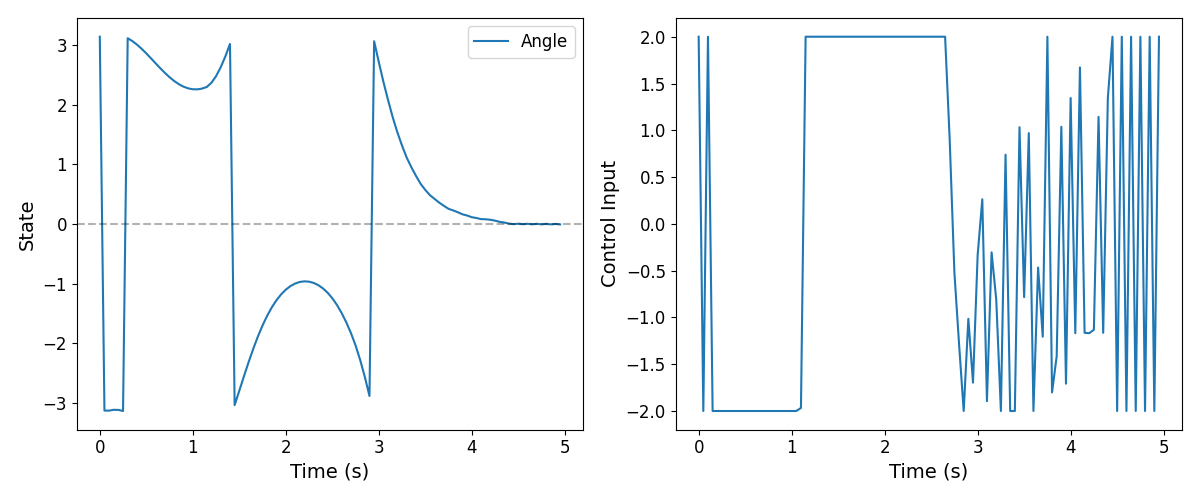}
        \caption{Swing-up trajectory (Grid Agent).}
        \label{fig:pend_grid_traj}
    \end{minipage}
\end{figure}

Next, we compare the performance of gradient-based optimization using ground-truth physics versus learned physics. As neural network algorithms are stochastic, we plot the mean and variance of the loss for ten training episodes. Represented trajectories correspond to the training episode where the loss at the last iteration is the lowest.

\paragraph{Direct Differentiable Physics (Baseline)}
In this setting, gradients are backpropagated directly through the analytic equations of motion. In contrast to the grid agent's sharp transitions, the neural policy learns a smooth, continuous control law (Fig. \ref{fig:pend_nn_adam_res}-\ref{fig:pend_nn_ng_res}). The associated trajectory demonstrates an efficient swing-up from the bottom ($\theta = \pi$), handling angle wrapping boundaries and stabilizing perfectly at zero. Regarding convergence, Figure \ref{fig:pend_nn_adam_loss}-\ref{fig:pend_nn_ng_loss} shows the loss decreasing monotonically and stabilizing, indicating that gradient descent successfully navigates the non-convex landscape of the swing-up problem. Note also that the Natural Gradient optimizer seems to give similar performance with more variance, compared to the direct Adam strategy.

\begin{figure}[H]
    \centering
    \begin{minipage}{0.48\textwidth}
        \centering
        \includegraphics[width=\linewidth]{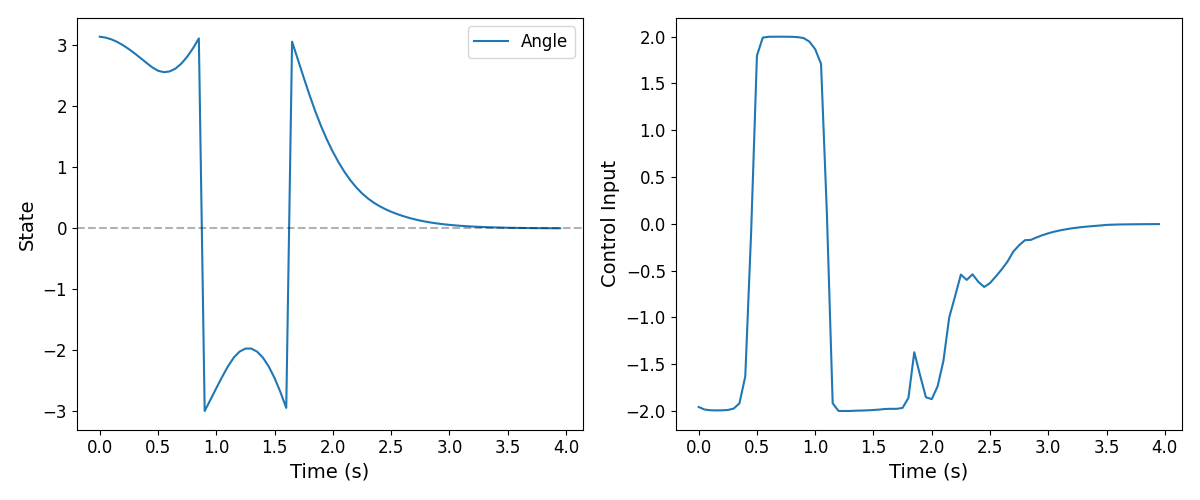}
        \caption{Direct Diff. Physics: Trajectory and Control.}
        \label{fig:pend_nn_adam_res}
    \end{minipage}\hfill
    \begin{minipage}{0.48\textwidth}
        \centering
        \includegraphics[width=\linewidth]{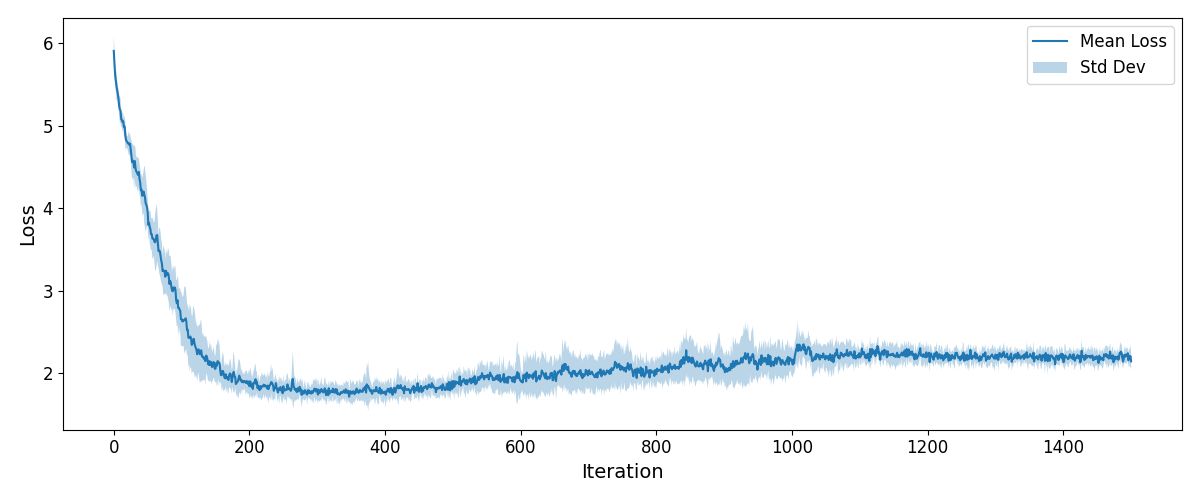}
        \caption{Direct Diff. Physics: Training Loss.}
        \label{fig:pend_nn_adam_loss}
    \end{minipage}
    \caption{The pendulum problem using the Adam optimizer}
\end{figure}

\begin{figure}[H]
    \centering
    \begin{minipage}{0.48\textwidth}
        \centering
        \includegraphics[width=\linewidth]{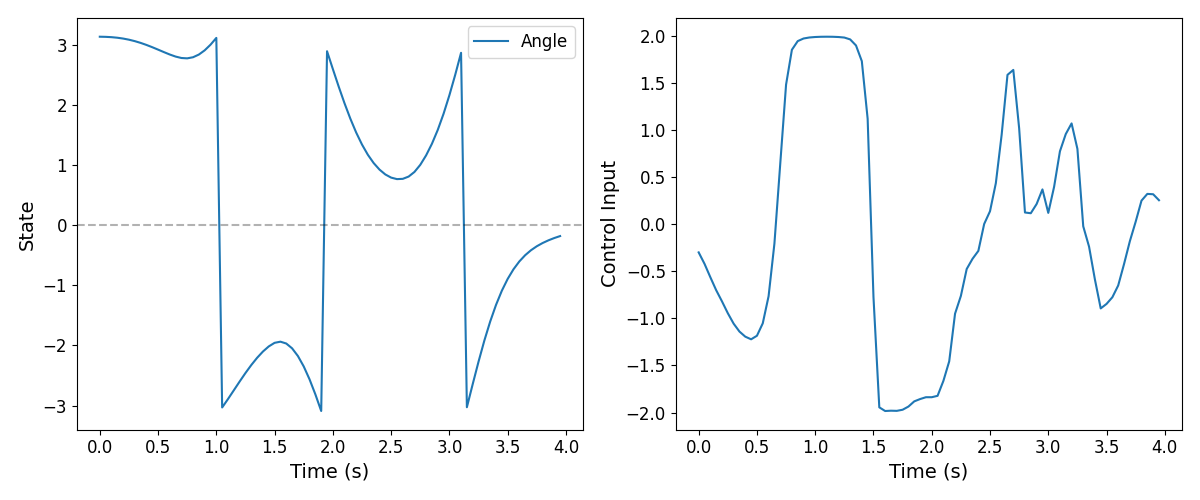}
        \caption{Direct Diff. Physics: Trajectory and Control.}
        \label{fig:pend_nn_ng_res}
    \end{minipage}\hfill
    \begin{minipage}{0.48\textwidth}
        \centering
        \includegraphics[width=\linewidth]{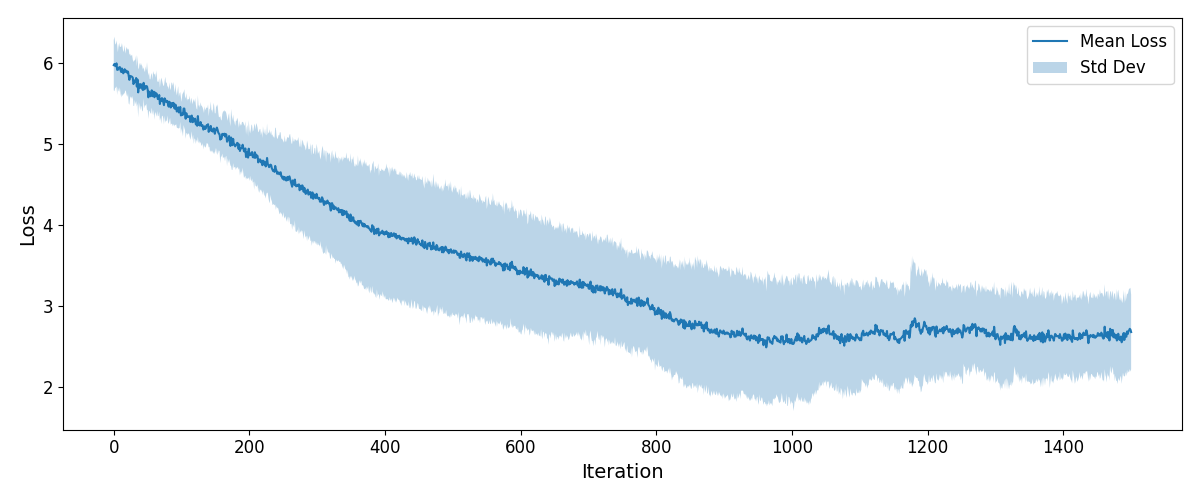}
        \caption{Direct Diff. Physics: Training Loss.}
        \label{fig:pend_nn_ng_loss}
    \end{minipage}
        \caption{The pendulum problem using the Natural Gradient optimizer}
\end{figure}

\paragraph{Learned World Model (Natural Gradient optimizer)}
Here, the agent assumes no prior knowledge of the dynamics, instead learning a neural approximation $s_{t+1} \approx f_\phi(s_t, a_t)$. As shown in Figure \ref{fig:pend_wm_loss} (right), the \textit{World Model Loss} decreases rapidly, indicating the model successfully captures non-linear pendulum dynamics such as gravity and inertia. Simultaneously, the \textit{Policy Loss} (left) decreases as the planner optimizes actions via the learned model. The resulting trajectory (Fig. \ref{fig:pend_wm_res}) is virtually identical to the ground-truth optimization, validating the learned model's accuracy for long-horizon planning ($H=80$).

\begin{figure}[H]
    \centering
        \begin{minipage}{0.48\textwidth}
        \centering
        \includegraphics[width=\linewidth]{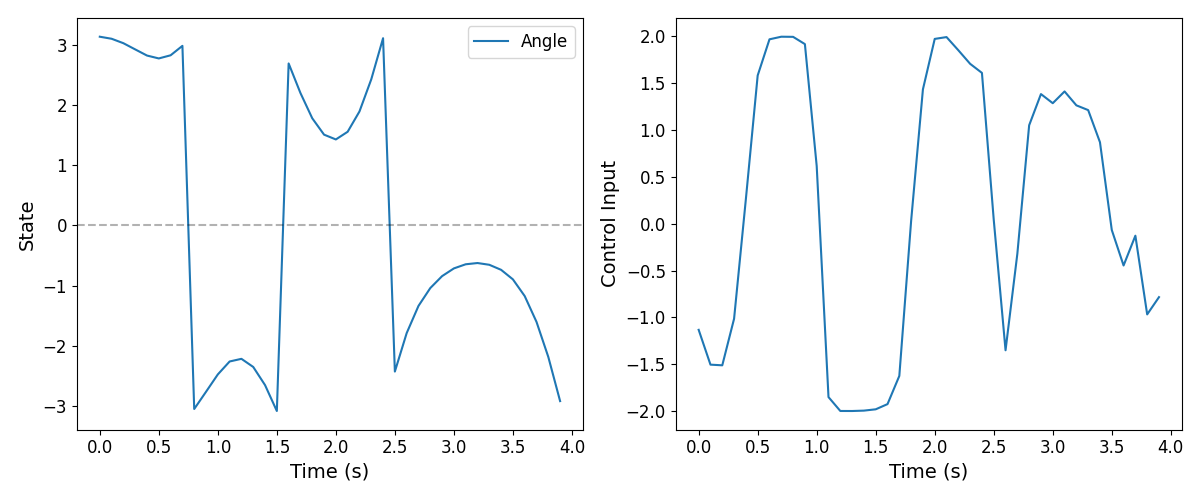}
        \caption{World Model: Generated Trajectory.}
        \label{fig:pend_wm_res}
    \end{minipage}
    \begin{minipage}{0.48\textwidth}
        \centering
        \includegraphics[width=\linewidth]{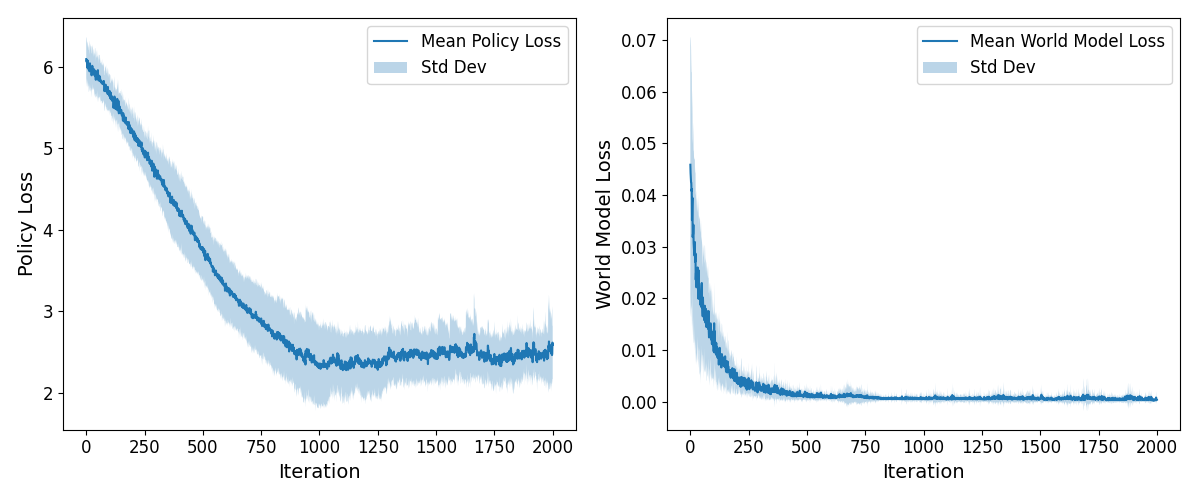}
        \caption{Joint Training: Policy Loss (left) vs World Model Loss (right).}
        \label{fig:pend_wm_loss}
    \end{minipage}\hfill
\end{figure}

\subsubsection{High-Dimensional Coupled Oscillators Results (Natural Gradient optimizer)}

For the high-dimensional coupled oscillators, the grid-based method (Algorithm 1) becomes numerically intractable due to the curse of dimensionality. Therefore, we restrict our comparison to neural methods, evaluating performance with and without a differentiable environment.

Figures \ref{fig:high_dim_diffphys} and \ref{fig:high_dim_diffphys_loss} present results using ground-truth dynamics. Initialized with alternating displacements ($\pm 1$), the system is successfully stabilized to the origin within approximately 2 seconds. The control inputs (bottom plot) exhibit a sharp initial reaction followed by a smooth decay to zero. The monotonic decrease in loss (Figure \ref{fig:high_dim_diffphys_loss}) further indicates stable optimization of the trajectory cost.

\begin{figure}[H]
    \centering
    \includegraphics[width=0.9\textwidth]{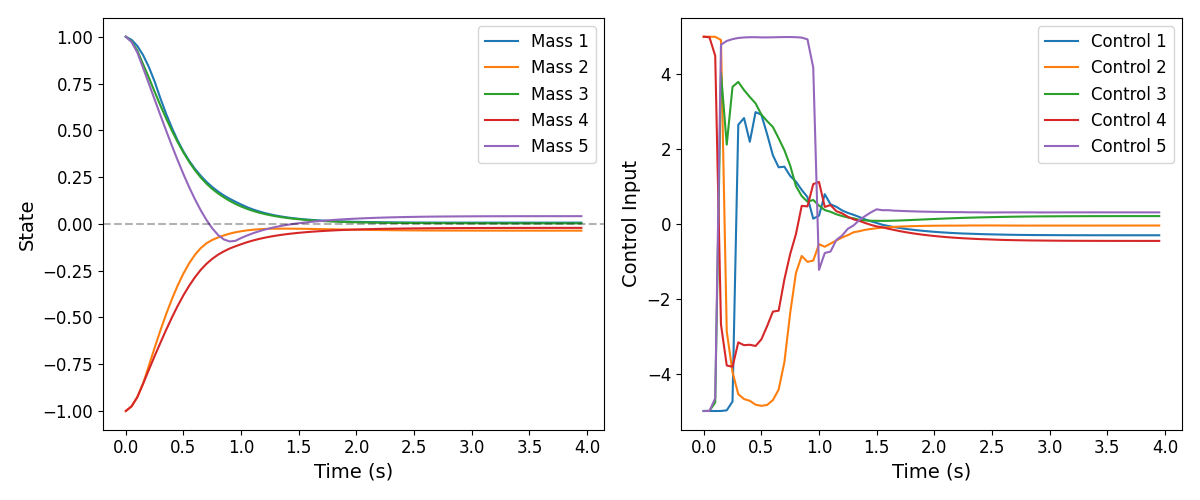}
    \caption{Direct Differentiable Physics: State trajectories (top) and Control inputs (bottom).}
    \label{fig:high_dim_diffphys}
\end{figure}

\begin{figure}[H]
    \centering
    \includegraphics[width=0.6\textwidth]{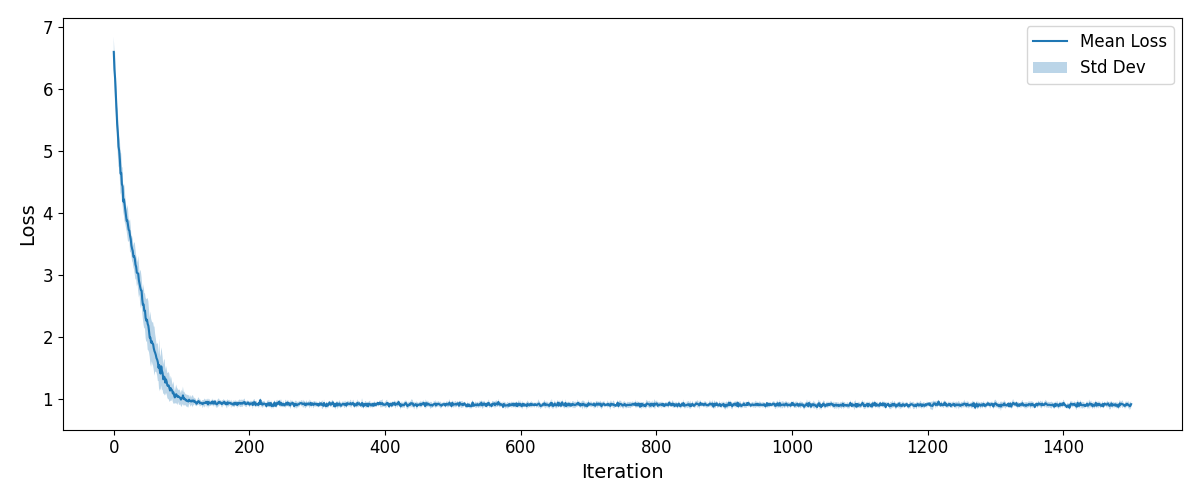}
    \caption{Training Loss for Direct Differentiable Physics.}
    \label{fig:high_dim_diffphys_loss}
\end{figure}

Finally, Figures \ref{fig:high_dim_wm} and \ref{fig:high_dim_wm_loss} display results when optimizing through a learned World Model. Despite relying on an approximation of the complex coupled dynamics, the agent achieves stabilization performance comparable to the baseline, with masses converging effectively to zero. The joint training process is illustrated in Figure \ref{fig:high_dim_wm_loss}: the \textit{World Model Loss} (right) decreases rapidly, showing that the physics $s_{t+1} \approx f_\phi(s_t, a_t)$ is being captured, while the \textit{Policy Loss} (left) decreases as the planner exploits this model to minimize cost.

\begin{figure}[H]
    \centering
    \includegraphics[width=0.9\textwidth]{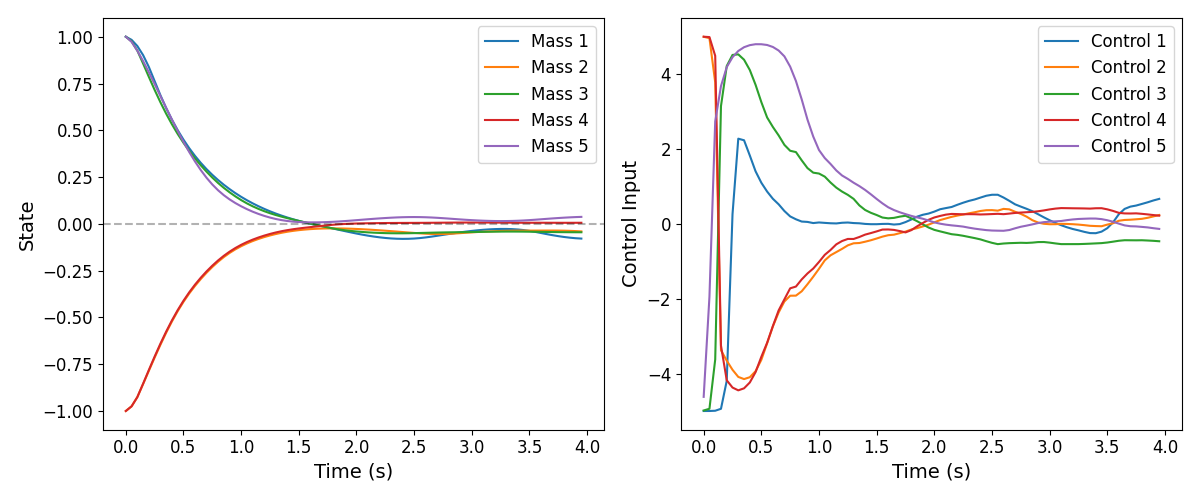}
    \caption{World Model Approach: State trajectories (left) and Control inputs (right).}
    \label{fig:high_dim_wm}
\end{figure}

\begin{figure}[H]
    \centering
    \includegraphics[width=0.9\textwidth]{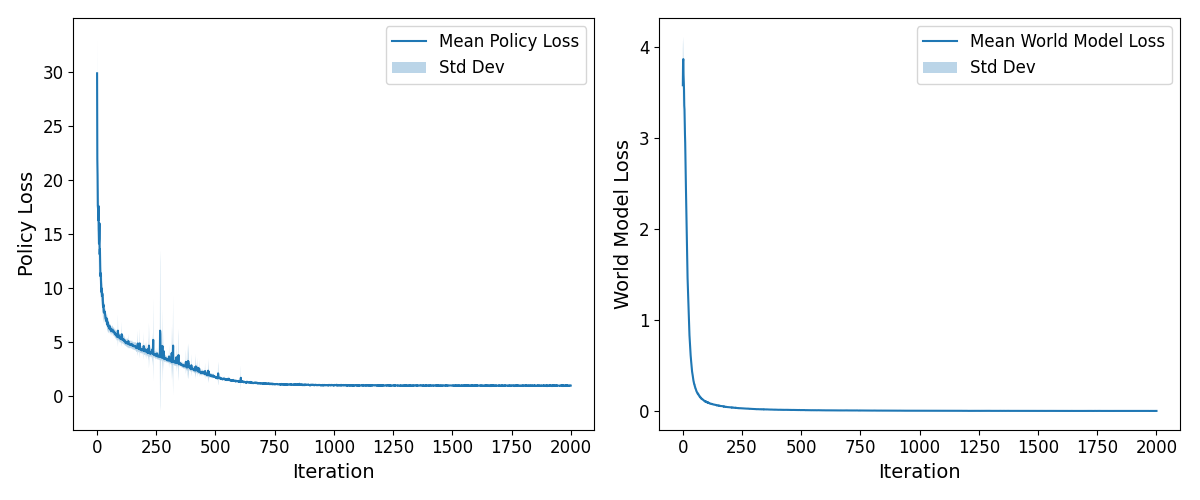}
    \caption{Joint Training Losses: Policy Loss (left) and World Model Prediction Loss (right).}
    \label{fig:high_dim_wm_loss}
\end{figure}
\bibliographystyle{plain}
\bibliography{biblio}

\end{document}